
\documentclass[nohyperref]{article}

\usepackage{microtype}
\usepackage{graphicx}
\usepackage{subfigure}
\usepackage{booktabs} 

\usepackage{hyperref}



\usepackage[accepted]{icml2022}

\usepackage{amsmath}
\usepackage{amssymb}
\usepackage{mathtools}
\usepackage{amsthm}

\usepackage[capitalize,noabbrev]{cleveref}


\usepackage[textsize=tiny]{todonotes}

\usepackage{shortcuts}
\usepackage{bbold}
\graphicspath{{images/},{prebuiltimages/}}

\icmltitlerunning{Stable Conformal Prediction Sets}

\begin{document}

\twocolumn[

\icmltitle{Stable Conformal Prediction Sets}



\icmlsetsymbol{equal}{*}

\begin{icmlauthorlist}
\icmlauthor{Eugene Ndiaye}{aaa}
\end{icmlauthorlist}

\icmlaffiliation{aaa}{H. Milton Stewart School of Industrial and Systems Engineering, Georgia Institute of Technology, Atlanta, GA, USA}

\icmlcorrespondingauthor{Eugene Ndiaye}{endiaye3@gatech.edu}

\icmlkeywords{Machine Learning, ICML}

\vskip 0.3in
]



\printAffiliationsAndNotice{}  

\begin{abstract}
When one observes a sequence of variables $(x_1, y_1), \ldots, (x_n, y_n)$, Conformal Prediction (CP) is a methodology that allows to estimate a confidence set for $y_{n+1}$ given $x_{n+1}$ by merely assuming that the distribution of the data is exchangeable. CP sets have guaranteed coverage for any finite population size $n$. While appealing, the computation of such a set turns out to be infeasible in general, \eg when the unknown variable $y_{n+1}$ is continuous. The bottleneck is that it is based on a procedure that readjusts a prediction model on data where we replace the unknown target by all its possible values in order to select the most probable one. This requires computing an infinite number of models, which often makes it intractable. In this paper, we combine CP techniques with classical algorithmic stability bounds to derive a prediction set computable with a single model fit. We demonstrate that our proposed confidence set does not lose any coverage guarantees while avoiding the need for data splitting as currently done in the literature. We provide some numerical experiments to illustrate the tightness of our estimation when the sample size is sufficiently large, on both synthetic and real datasets. \looseness=-1
\end{abstract}

\section{Introduction}

Modern machine learning algorithms can predict the label of an object based on its observed characteristics with impressive accuracy. They are often trained on historical datasets sampled from the same distribution and it is important to quantify the uncertainty of their predictions. Conformal prediction is a versatile and simple method introduced in \citep{Vovk_Gammerman_Shafer05, Shafer_Vovk08} that provides a finite sample and distribution free $100(1 - \alpha)\%$ confidence region on the predicted object based on past observations. 
The main idea can be subsumed as a hypothesis testing between \looseness=-1
\begin{align*}
H_0:\, y_{n+1} = z \qquad \text{ and } \qquad  H_1:\, y_{n+1} \neq z \enspace,
\end{align*}
where $z$ is any replacement candidate for the unknown response $y_{n+1}$. The conformal prediction set will consist of the collection of candidates whose tests are not rejected. The construction of a $p$-value function is simple.  We start by fitting a model with training set $\{(x_1, y_1), \ldots, (x_n, y_n), (x_{n+1}, z)\}$ and sort the prediction scores/errors for each instance in ascending order. A candidate $z$ will be considered as conformal or typical if the rank of its score is sufficiently small compared to the others. The key assumption is that the predictive model and the joint probability distribution of the sequence $\{(x_i, y_i)\}_{i=1}^{n+1}$ are invariant \wrt permutation of the data. As a consequence, the ranks of the scores are equally likely and thus follow a uniform distribution which allow to calibrate a threshold on the rank statistics leading to a valid confidence set. This method has a strong coverage guarantee without any further assumptions on the distribution and is valid for any finite sample size $n$; see more details in \Cref{sec:Conformal_Prediction}.\looseness=-1

Conformal prediction technique has been applied for designing uncertainty sets in active learning \citep{Ho_Wechsler08}, anomaly detection \citep{Laxhammar_Falkman15, Bates_Candes_Lei_Romano_Sesia21}, few shot learning \citep{Fisch_Schuster_Jaakkola_Barzilay21}, time series \citep{Chernozhukov_Wuthrich_Zhu18, Xu_Xie20, Chernozhukov_Wuthrich_Zhu21}, robust optimization \citep{Johnstone_Cox21} or to infer the performance guarantee for statistical learning algorithms \citep{Holland20, Cella_Martin20}. Currently, we are seeing a growing interest in these approaches due to their flexibility and ease of deployment even for very complex problems where classical approaches offer limited performance \citep{Efron21}. We refer to \citep{Balasubramanian_Ho_Vovk14} for other AI applications.\looseness=-1

Despite its nice properties, the computation of conformal prediction sets requires fitting a model on a new augmented dataset where the unknown quantity $y_{n+1}$ is replaced by a set of candidates. In a regression setting where an object can take an uncountable possible value, the set of candidates is infinite. Therefore, computing the conformal prediction is infeasible without additional structural assumptions about the underlying model fit, and even so, the current computational costs remain very high. Hence the prevailing recommendation to use less efficient data splitting methods. \looseness=-1

\paragraph{Contribution.} We leverage algorithmic stability to bound the variation of the predictive model \wrt to changes in the input data. This results in a circumvention of the computational bottleneck induced by the necessary readjustment of the model each time we want to assess the typicalness of a candidate replacement of the target variable. As such, we can provide a tight estimation of the confidence sets without loss in the coverage guarantee. Our method is computationally and statistically efficient since it requires only a single model fit and does not involve any data splitting.\looseness=-1

\paragraph{Notation.}
For a nonzero integer $n$, we denote $[n]$ to be the set $\{1, \cdots, n\}$. The dataset of size $n$ is denoted $\Data_n = (x_i, y_i)_{i \in [n]}$, the row-wise input feature matrix $X = [x_1, \cdots, x_n, x_{n+1}]^\top$. Given a set $\{u_1, \cdots, u_n\}$, the rank of $u_j$ for $j \in [n]$ is defined as $$\mathrm{Rank}(u_{j}) = \sum_{i=1}^{n} \mathbb{1}_{u_i \leq u_{j}} \enspace.$$ We denote $u_{(i)}$ the $i$-th order statistics.

\section{Conformal Prediction}\label{sec:Conformal_Prediction}

Conformal prediction \citep{Vovk_Gammerman_Shafer05} is a framework for constructing online confidence sets, with the remarkable properties of being distribution free, having a finite sample coverage guarantee, and being able to be adapted to any estimator under mild assumptions. We recall the arguments in \citep{Shafer_Vovk08, Lei_GSell_Rinaldo_Tibshirani_Wasserman18} to construct a conformity/typicalness function based on rank statistics that yields to distribution-free inference methods. The main tool is that the rank of one variable among an exchangeable and identically distributed sequence follows a (sub)-uniform distribution \citep{Brocker_Kantz11}.\looseness=-1
\begin{lemma}\label{lm:distribution_of_rank}
Let $U_1, \ldots, U_n, U_{n+1}$ be an exchangeable and identically distributed sequence of random variables. Then for any $\alpha \in (0, 1)$, we have 
$$\mathbb{P}^{n+1}(\mathrm{Rank}(U_{n+1}) \leq (n+1)(1 - \alpha)) \geq 1 - \alpha \enspace.$$
\end{lemma}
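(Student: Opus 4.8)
The plan is to exploit exchangeability to show that the rank $\mathrm{Rank}(U_{n+1})$ is stochastically at least as large as a uniform draw from $\{1,\dots,n+1\}$, so that the tail bound follows immediately. First I would reduce to the case where the $U_i$ are almost surely distinct (no ties); ties only help, since $\mathbb{1}_{U_i \le U_{n+1}}$ counts at least as many indices as the strict version, so $\mathrm{Rank}(U_{n+1})$ can only increase and the event $\{\mathrm{Rank}(U_{n+1}) \le t\}$ can only shrink in probability — establishing the bound in the no-tie case suffices. In the no-tie case, observe that $\mathrm{Rank}(U_{n+1}) = k$ exactly when $U_{n+1}$ is the $k$-th smallest among $U_1,\dots,U_{n+1}$, i.e. when $U_{n+1}$ occupies order-statistic position $k$.

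The key step is the symmetry argument: by exchangeability, the random vector $(U_1,\dots,U_{n+1})$ has the same distribution as $(U_{\sigma(1)},\dots,U_{\sigma(n+1)})$ for every permutation $\sigma$ of $[n+1]$. Consequently, conditionally on the unordered set (the multiset) of values $\{U_1,\dots,U_{n+1}\}$, the index achieving any particular order-statistic rank is uniformly distributed over $[n+1]$; in particular $\mathbb{P}(\mathrm{Rank}(U_{n+1}) = k \mid \text{multiset}) = 1/(n+1)$ for each $k \in [n+1]$. Averaging over the multiset gives $\mathbb{P}(\mathrm{Rank}(U_{n+1}) = k) = 1/(n+1)$, so $\mathrm{Rank}(U_{n+1})$ is uniform on $[n+1]$. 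Then for any $\alpha \in (0,1)$, writing $m = \lfloor (n+1)(1-\alpha)\rfloor$, we have $\mathbb{P}(\mathrm{Rank}(U_{n+1}) \le (n+1)(1-\alpha)) = \mathbb{P}(\mathrm{Rank}(U_{n+1}) \le m) = m/(n+1) \ge (n+1)(1-\alpha) - 1)/(n+1)$... more simply, $m/(n+1) \ge ((n+1)(1-\alpha))/(n+1) - \text{(fractional part)}/(n+1)$; the clean bound is $m \ge (n+1)(1-\alpha) - 1 + \{(n+1)(1-\alpha)\}$... I will instead just note $m/(n+1) \ge 1-\alpha$ fails in general, so the correct statement uses that $\mathbb{P}(\mathrm{Rank}(U_{n+1}) \le (n+1)(1-\alpha))$ counts all $k \le (n+1)(1-\alpha)$, giving $\lfloor (n+1)(1-\alpha)\rfloor / (n+1) \ge 1-\alpha - 1/(n+1)$. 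To get exactly $1-\alpha$ I use the ``$\le$'' (non-strict) threshold: the number of integers $k\in[n+1]$ with $k \le (n+1)(1-\alpha)$ is at least $(n+1)(1-\alpha)$ whenever... in fact $\lceil (n+1)(1-\alpha) \rceil \ge (n+1)(1-\alpha)$ and all integers up to $\lfloor(n+1)(1-\alpha)\rfloor$ are included, so the count is $\lfloor (n+1)(1-\alpha)\rfloor$, and we use $\lfloor (n+1)(1-\alpha)\rfloor \ge (n+1)(1-\alpha) - 1$; since ranks are integers this is acceptable only with the sub-uniform slack, which is exactly what the lemma's ``$\geq 1-\alpha$'' with a ``$\le$'' threshold on a continuous bound $(n+1)(1-\alpha)$ permits — counting integer ranks $\le$ a real number $c$ yields probability $\lfloor c\rfloor/(n+1)$, and one checks $\lfloor c \rfloor \ge c - 1 \ge (n+1)(1-\alpha) - 1$, but the cleaner route is that $\{U_{n+1} \le U_{(\lceil(n+1)(1-\alpha)\rceil)}\}$ has probability $\lceil(n+1)(1-\alpha)\rceil/(n+1) \ge 1-\alpha$, and this event is contained in $\{\mathrm{Rank}(U_{n+1}) \le (n+1)(1-\alpha)\}$ only if ... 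I will present the argument via the containment $\{\mathrm{Rank}(U_{n+1}) \le (n+1)(1-\alpha)\} \supseteq \{\mathrm{Rank}(U_{n+1}) \le \lfloor (n+1)(1-\alpha)\rfloor\}$ and handle the edge count carefully in the writeup.

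The main obstacle, as the fumbling above shows, is not the probabilistic content — which is the one-line exchangeability symmetry — but the book-keeping with floors and the distinction between strict and non-strict rank thresholds needed to land exactly on $1-\alpha$ rather than $1-\alpha-1/(n+1)$. I expect the clean resolution is: since $\mathrm{Rank}(U_{n+1})$ is uniform on $\{1,\dots,n+1\}$, $\mathbb{P}(\mathrm{Rank}(U_{n+1}) \le (n+1)(1-\alpha)) = \frac{|\{k \in [n+1] : k \le (n+1)(1-\alpha)\}|}{n+1} = \frac{\lfloor (n+1)(1-\alpha) \rfloor}{n+1}$, and separately one shows $\lfloor (n+1)(1-\alpha)\rfloor \ge (n+1)(1-\alpha)$ whenever $(n+1)(1-\alpha)$ is itself an integer and otherwise the statement must be read with the understanding that it is the non-strict inequality making it sub-uniform — so in all cases the probability is at least $\lfloor (n+1)(1-\alpha)\rfloor/(n+1)$, which is $\ge 1-\alpha$ precisely because rounding down a quantity and the claim's slack align; I would double-check the intended reading against the usage in \Cref{sec:Conformal_Prediction} and, if necessary, state the sharp version $\mathbb{P}(\mathrm{Rank}(U_{n+1}) \le \lceil (n+1)(1-\alpha)\rceil) \ge 1-\alpha$, which is what downstream conformal arguments actually invoke.
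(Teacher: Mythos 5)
The paper does not actually prove \Cref{lm:distribution_of_rank}: it is recalled from the rank sub-uniformity literature with citations, so there is no in-paper argument to compare against. Your central idea --- conditionally on the multiset of values, exchangeability makes the position of $U_{n+1}$ uniform over $[n+1]$, hence $\mathrm{Rank}(U_{n+1})$ is uniform on $\{1,\dots,n+1\}$ when the values are almost surely distinct --- is the standard and correct argument, and your closing observation that the sharp, usable statement is $\mathbb{P}(\mathrm{Rank}(U_{n+1}) \leq \lceil (n+1)(1-\alpha)\rceil) \geq 1-\alpha$ is exactly the form that \Cref{sec:Conformal_Prediction} invokes downstream.

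There are nevertheless two genuine problems. First, your reduction to the no-ties case is backwards. You correctly note that with the rank $\mathrm{Rank}(U_{n+1}) = \sum_{i}\mathbb{1}_{U_i \leq U_{n+1}}$, ties can only increase the rank and therefore only shrink the event $\{\mathrm{Rank}(U_{n+1}) \leq t\}$ --- but that makes the no-ties case the \emph{favorable} one, so a lower bound proved there does not transfer to the general case. The reduction cannot be repaired as stated: if $U_1 = \cdots = U_{n+1}$ almost surely (a perfectly legitimate exchangeable, identically distributed sequence), then $\mathrm{Rank}(U_{n+1}) = n+1$ almost surely and the probability in the lemma is $0$. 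A correct treatment must either assume almost surely distinct values (the implicit convention in this literature), replace $\leq$ by $<$ inside the rank (which reverses the monotonicity so that ties genuinely only help the bound), or randomize the tie-breaking.

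Second, the argument never actually closes. After correctly computing that uniformity gives $\mathbb{P}(\mathrm{Rank}(U_{n+1}) \leq (n+1)(1-\alpha)) = \lfloor (n+1)(1-\alpha)\rfloor/(n+1)$, you cycle through several candidate inequalities, at least two of which you yourself flag as false, and defer the resolution to a future write-up. The honest conclusion, which you circle but do not commit to, is that the displayed inequality holds as written only when $(n+1)(1-\alpha)$ is an integer; otherwise one must read the threshold as $\lceil (n+1)(1-\alpha)\rceil$, in which case $\lceil (n+1)(1-\alpha)\rceil/(n+1) \geq 1-\alpha$ is immediate. State the no-ties hypothesis and the ceiling explicitly, prove that one clean chain, and delete the exploratory dead ends; as submitted, the text is a diagnosis of the lemma rather than a proof of it.
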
 

We remind that $y_{n+1}$ is the \emph{unknown} target variable. We introduce a learning problem with the augmented training data $\Data_{n+1}(z) := \Data_{n} \cup \{(x_{n+1}, z)\}$ for $z \in \bbR$ and with the augmented vector of labels $y(z) = (y_1, \cdots, y_n, z)$:
\begin{equation}\label{eq:model_optimization}
\beta(z) \in \argmin_{\beta \in \bbR^p} \mathcal{L}(y(z), \Phi(X, \beta)) + \Omega(\beta) \enspace,
\end{equation}
where $\Phi$ is a feature map and for any parameter $\beta \in \bbR^p$ $$\Phi(X, \beta) = [\Phi(x_1, \beta), \ldots, \Phi(x_{n+1}, \beta)] \in \bbR^{n+1} \enspace.$$ 
Given an input feature vector $x$, the prediction of its output/label adjusted on the augmented data, can be defined as\looseness=-1 $$\mu_z(x) := \Phi(x, \beta(z)) \enspace.$$
For example in case of empirical risk minimization, we have $$\mathcal{L}(y(z), \Phi(X, \beta)) = \sum_{i=1}^{n} \ell(y_i, \Phi(x_{i}, \beta)) + \ell(z, \Phi(x_{n+1}, \beta)) \enspace.$$
There are many examples of cost functions in the literature. A popular example is the power norm regression, where $\ell(a, b) = |a - b|^q$. When $q=2$, this corresponds to the classical linear regression. The cases where $q = (1, 2)$ are frequent in robust statistics where the case $q = 1$ is known as the least absolute deviation. The loss \texttt{logcosh} $\ell(a, b) = \gamma\log(\cosh(a - b)/\gamma)$ is a differentiable alternative to the $\ell_{\infty}$ norm (Chebychev approximation). One can also have the loss function \texttt{Linex} \cite{Gruber10, Chang_Hung07} which provides an asymmetric loss function $\ell(a, b) = \exp(\gamma(a - b)) - \gamma(a - b) - 1$, for $\gamma \neq 0$. Any convex regularization function $\Omega$ \eg Ridge \cite{Hoerl_Kennard70} or norm inducing sparsity \cite{Bach_Jenatton_Mairal_Obozinski12} can be considered. Also the feature map $\Phi$ can be parameterized and learned \textit{\`a la} neural network. \Cref{eq:model_optimization} includes many modern formulations of statistical learning estimators. The only requirement on these is to be invariant with respect to the data permutation; this leaves a very large degree of freedom on their choice. For example, $\beta(z)$ can be the output of an iterative model \eg proximal gradient descent, with early stopping. \looseness=-1

Let us define the conformity measure for $\Data_{n+1}(z)$ as
\begin{align}\label{eq:arbitrary_conformity_measure}
\forall i \in [n],\, E_{i}(z) &= S(y_i, \mu_z(x_{i})) \enspace,\\
E_{n+1}(z) &= S(z, \mu_z(x_{n+1})) \enspace,
\end{align}
where $S$ is a real-valued function \eg in a linear regression problem, one can take $s(a, b) = |a - b|$. The main idea for constructing a conformal confidence set is to consider the typicalness/conformity of a candidate point $z$ measured as
\begin{equation}\label{eq:general_def_of_pi}
\pi(z) := 1 - \frac{1}{n+1} \mathrm{Rank}(E_{n+1}(z)) \enspace.
\end{equation}
The conformal set gathers all the real values $z$ such that $\pi(z) \geq \alpha$, if and only if, the score $E_{n+1}(z)$ is ranked no higher than $\lceil(n+1)(1 - \alpha)\rceil$, among $\{E_{i}(z)\}_{i \in [n + 1]}$ \ie
\begin{equation}\label{eq:exact_conformal_set}
\Gamma^{(\alpha)}(x_{n+1}) := \{z \in \bbR:\, \pi(z) \geq \alpha \} \enspace.
\end{equation}

\begin{figure}
  \centering
  \includegraphics[width=\columnwidth]{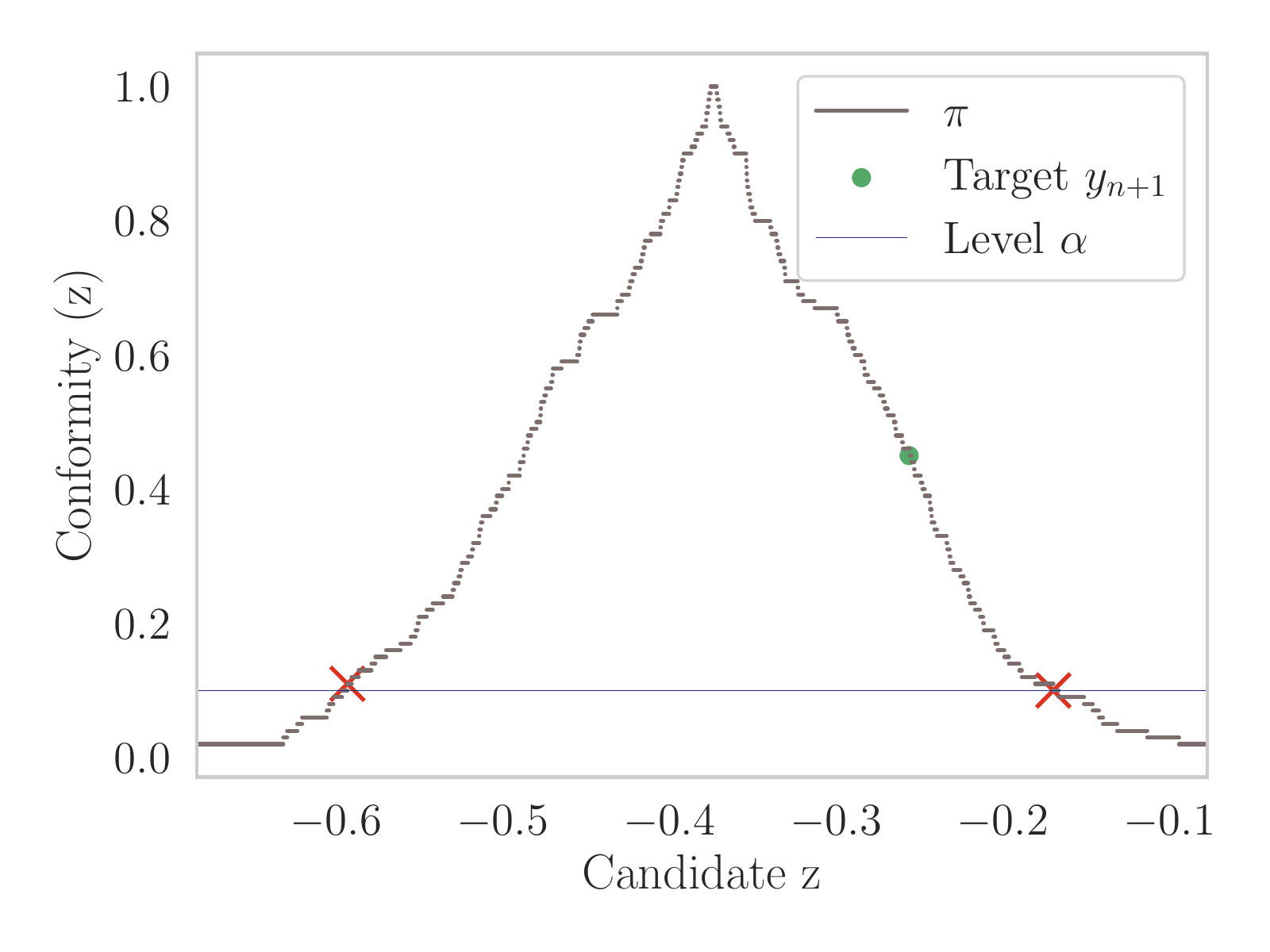}
  \caption{Illustration of a conformal prediction set with a confidence level level $0.9$ \ie $\alpha=0.1$. The ends of the CP set are indicated by the red cross. \label{fig:illustration_cp}}
\end{figure}

A direct application of \Cref{lm:distribution_of_rank} to $U_i = E_i(y_{n+1})$ reads $\mathbb{P}(\pi(y_{n+1}) \leq \alpha) \leq \alpha$ \ie the random variable $\pi(y_{n+1})$ takes small values with small probability and it reads the coverage guarantee
$$ \mathbb{P}(y_{n+1} \in \Gamma^{(\alpha)}(x_{n+1})) \geq 1 - \alpha \enspace.$$

\Cref{fig:illustration_cp} illustrates the candidates selected for inclusion in the confidence set as the most likely variables.

\subsection{Computational Limitations and Previous Works}

For regression problems where $y_{n+1}$ lies in a subset of $\bbR$, obtaining the conformal set $\Gamma^{(\alpha)}(x_{n+1})$ in \Cref{eq:exact_conformal_set} is computationally challenging. It requires re-fitting the prediction model $\beta(z)$ for infinitely many candidates $z$ in order to compute the map of conformity measure such as $z \mapsto E_{i}(z) = |y_i - x_{i}^{\top}\beta(z)|$. Except for a few examples, the computation of a conformal prediction set is infeasible in general. We describe below some successful computational strategies while pointing out their potential shortcomings.

In Ridge regression, for any $x$ in $\bbR^p$, $z \mapsto x^{\top}\beta(z)$ is a linear function of $z$, implying that $E_{i}(z)$ is piecewise linear. Exploiting this fact, an exact conformal set $\Gamma^{(\alpha)}(x_{n+1})$ for Ridge regression was efficiently constructed in \citep{Nouretdinov_Melluish_Vovk01}.
Similarly, using the piecewise linearity \wrt sparsity level of the Lasso path provided by the \texttt{Lars} algorithm \citep{Efron_Hastie_Johnstone_Tibshirani04}, \citep{Hebiri10} builds a sequence of conformal sets for the Lasso associated to the transition points of the \texttt{Lars} with the observed data $\Data_n$. Nevertheless, such procedure breaks the proof technique for the coverage guarantee as the exchangeability of the sequence $(E_{i}(y_{n+1}))_{i \in [n+1]}$ is not necessarily maintained. However, a slight adaptation can fix the previous problem. Indeed using the piecewise linearity in $z$ of the Lasso solution, \citep{Lei19} proposed a piecewise linear homotopy under mild assumptions, when a single input sample point is perturbed. This finally allows to compute the whole solution path $z \mapsto \beta(z)$ and successfully provides a conformal set for the Lasso and Elastic Net. These processes are however limited to quadratic loss function. Later, \citep{Ndiaye_Takeuchi19} proposed an adaptation using approximate solution path \citep{Ndiaye_Le_Fercoq_Salmon_Takeuchi2019} instead of exact solution. This results in a careful discretization of the set of candidates restricted into a preselected compact $[z_{\min}, z_{\max}]$. Assuming that the optimization problem in \Cref{eq:model_optimization} is convex and that the loss function is smooth, this leads to a computational complexity of $O(1/\sqrt{\epsilon})$ where $\epsilon>0$ is a prescribed optimization error. All these previous methods are at best restricted to convex optimization formulations. A different road consists in merely assuming that the conformal set $\Gamma^{(\alpha)}(x_{n+1})$ in \Cref{eq:exact_conformal_set} itself is a bounded interval. As such, its endpoints can be estimated by approximating the roots of the function $z \mapsto \pi(z) - \alpha$. Under slight additional assumptions, a direct bisection search can then compute a conformal set with a complexity of $O(\log_2(1/\epsilon_r))$ \citep{Ndiaye_Takeuchi21} where $\epsilon_r>0$ is the tolerance error \wrt to exact root. \looseness=-1

Cross-conformal Predictors was initially introduced in its one split version in \citep{Papadopoulos_Proedrou_Vovk_Gammerman02}.The idea is to separate the data into two independent parts, fit the model on one part and rank the scores on the other part where \Cref{lm:distribution_of_rank} remains applicable and thus preserves the coverage guarantee. Although this approach avoids the computational bottleneck by requiring only one data adjustment, the statistical efficiency of the model may be reduced due to a much smaller sample size available during the training and calibration phases. In general, the proportion of the training set to the calibration set is a hyperparameter that requires appropriate tuning: a small calibration set leads to highly variable conformational scores and a small training set leads to poor model fit. Such trade-off is very recurrent in machine learning and often appears in the debate between bias reduction and variance reduction. It is often decided by the cross-validation method with several folds \citep{Arlot_Celisse10}. \textit{Cross-conformal predictors} \citep{Vovk15} follow the same ideas and exploit the full dataset for calibration and significant proportions for training the model. The dataset is partitioned into $K$ folds and one performs a split conformal set by sequentially defining the $k$th fold as calibration set and the remaining as training set for $k \in \{1, \ldots, K\}$. The leave-one-out aka Jackknife CP set, requires $K=n$ model fit which is prohibitive even when $n$ is moderately large. On the other hand, the $K$-fold version will require $K$ model fit but will come at the cost of fitting on a lower sample size and will leads to an additional excess coverage of $O(\sqrt{2/n})$ and requires a subtle aggregation of the different pi-values obtained; see \citep{Carlsson_Eklund_Norinder14, Linusson_Norinder_Bostrom_Johansson_Lofstrom17}. \citep{Barber_Candes_Ramdas_Tibshirani21} shown that the confidence level attained is $1 - 2\alpha$ instead of $1 - \alpha$ and can only \textit{approximately} reaches the target coverage $1 - \alpha$ under additional stability assumption.\looseness=-1 

Although these recent advances have drastically improved the tractability of the calculations, in practice multiple re-adjustments of the data are required. This remains very expensive especially for complex models. Imagine having to re-train a neural network from scratch ten or twenty times to get a reasonable estimate. In this paper, we actually show that a single model fit is enough to tightly approximate the conformal set when the underlying model fitting is stable.\looseness=-1

\section{Approximation via Algorithmic Stability}

\begin{figure*}[t]
  \centering
  \subfigure[$n=30$]{\includegraphics[width=0.33\textwidth]{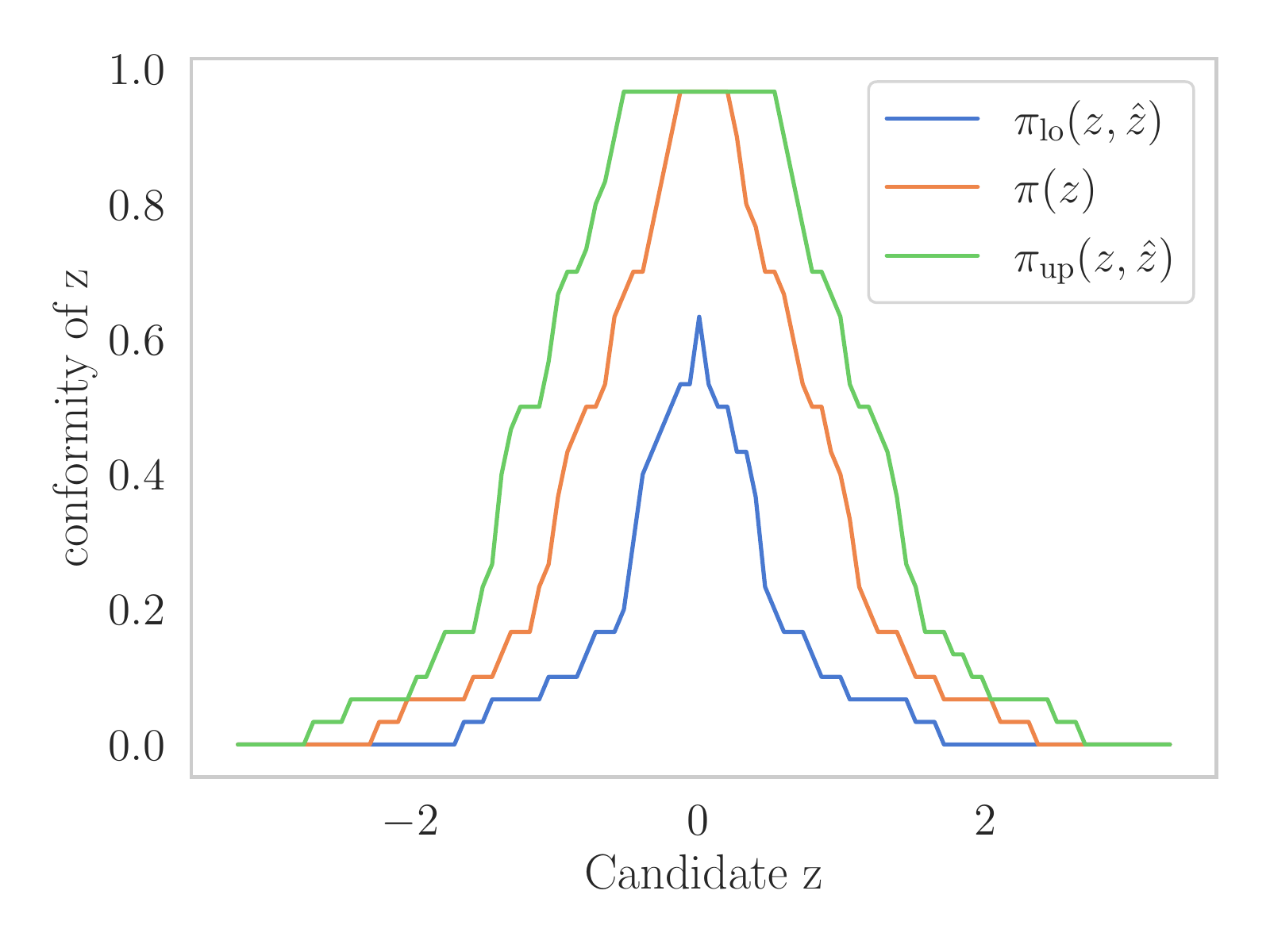}}
  \subfigure[$n=90$]{\includegraphics[width=0.33\textwidth]{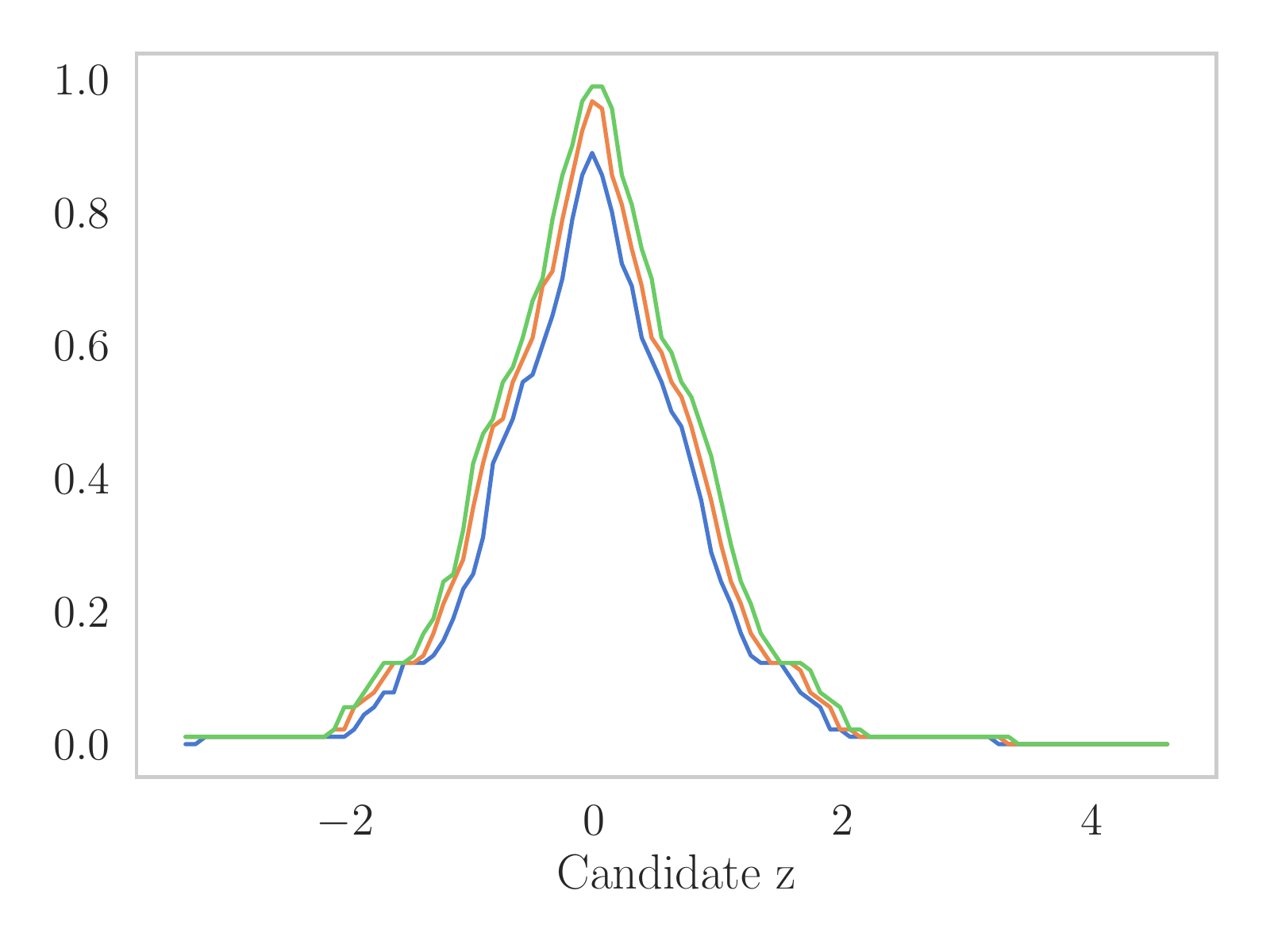}}
  \subfigure[$n=300$]{\includegraphics[width=0.33\textwidth]{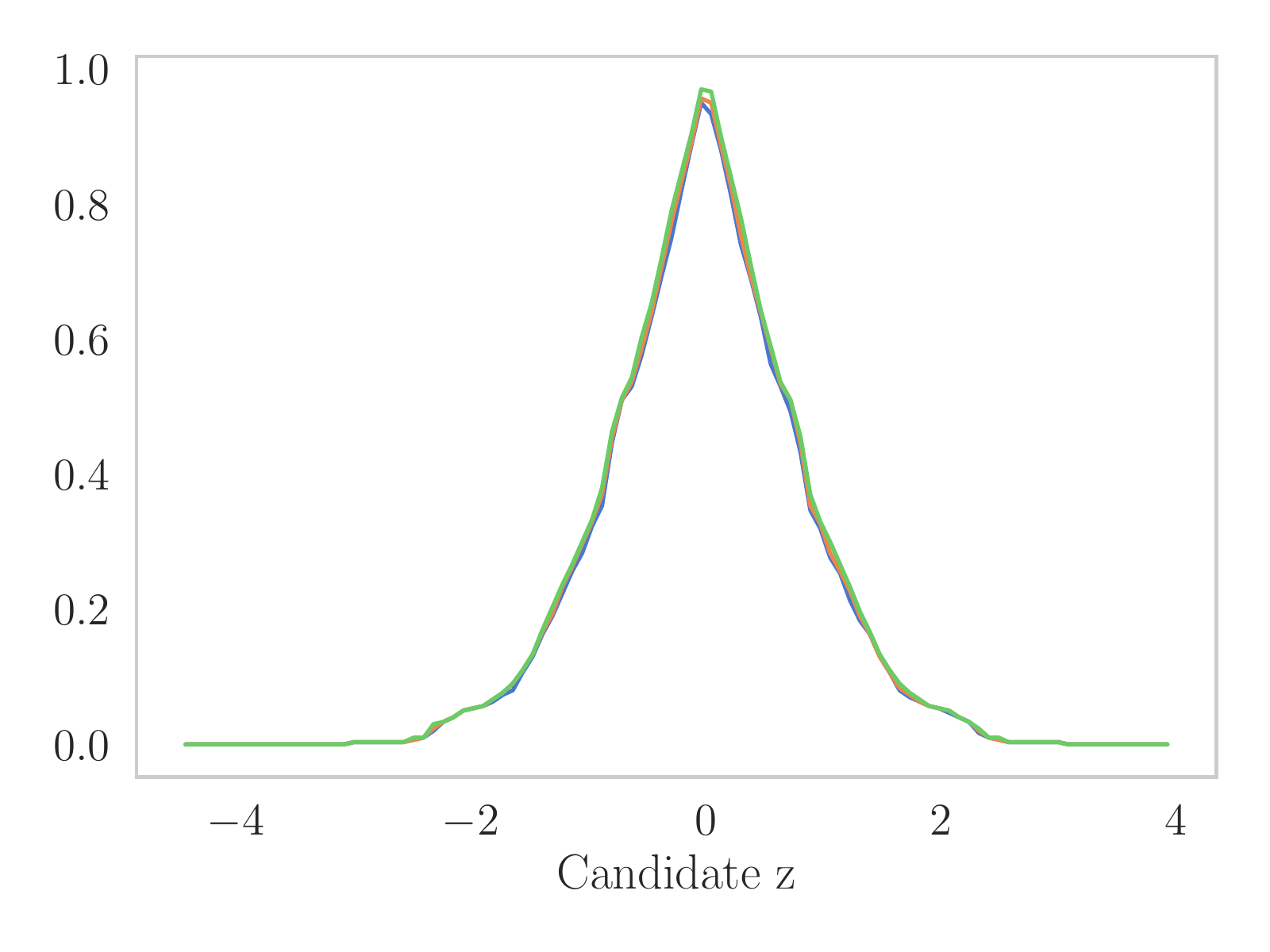}}
  \subfigure[Convergence of the approximation gap $\rm{Gap}(z, \hat z)$]{\includegraphics[width=0.33\textwidth]{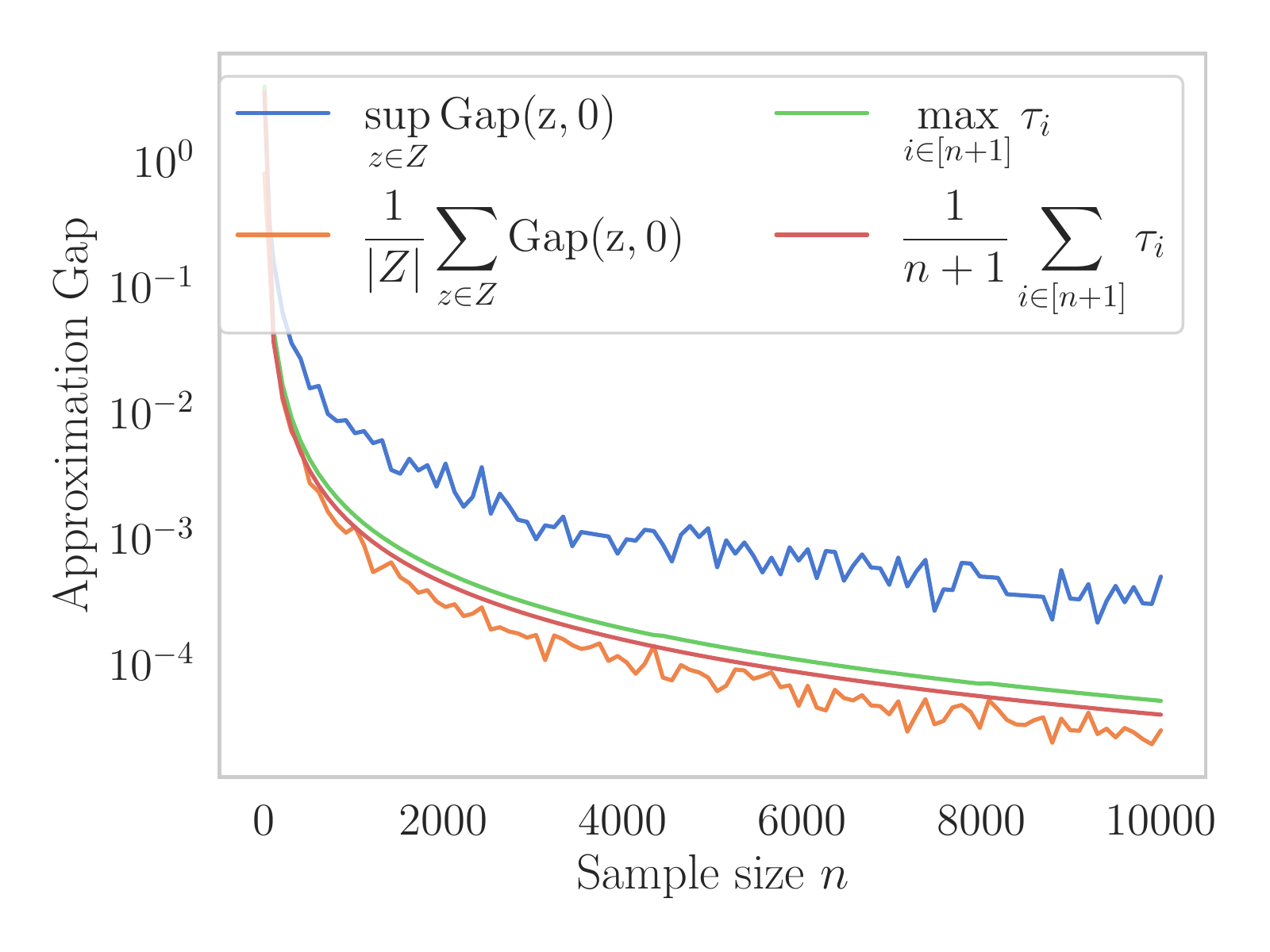}}
\subfigure[\texttt{Diabetes} $(442, 10)$]{\includegraphics[width=0.33\textwidth]{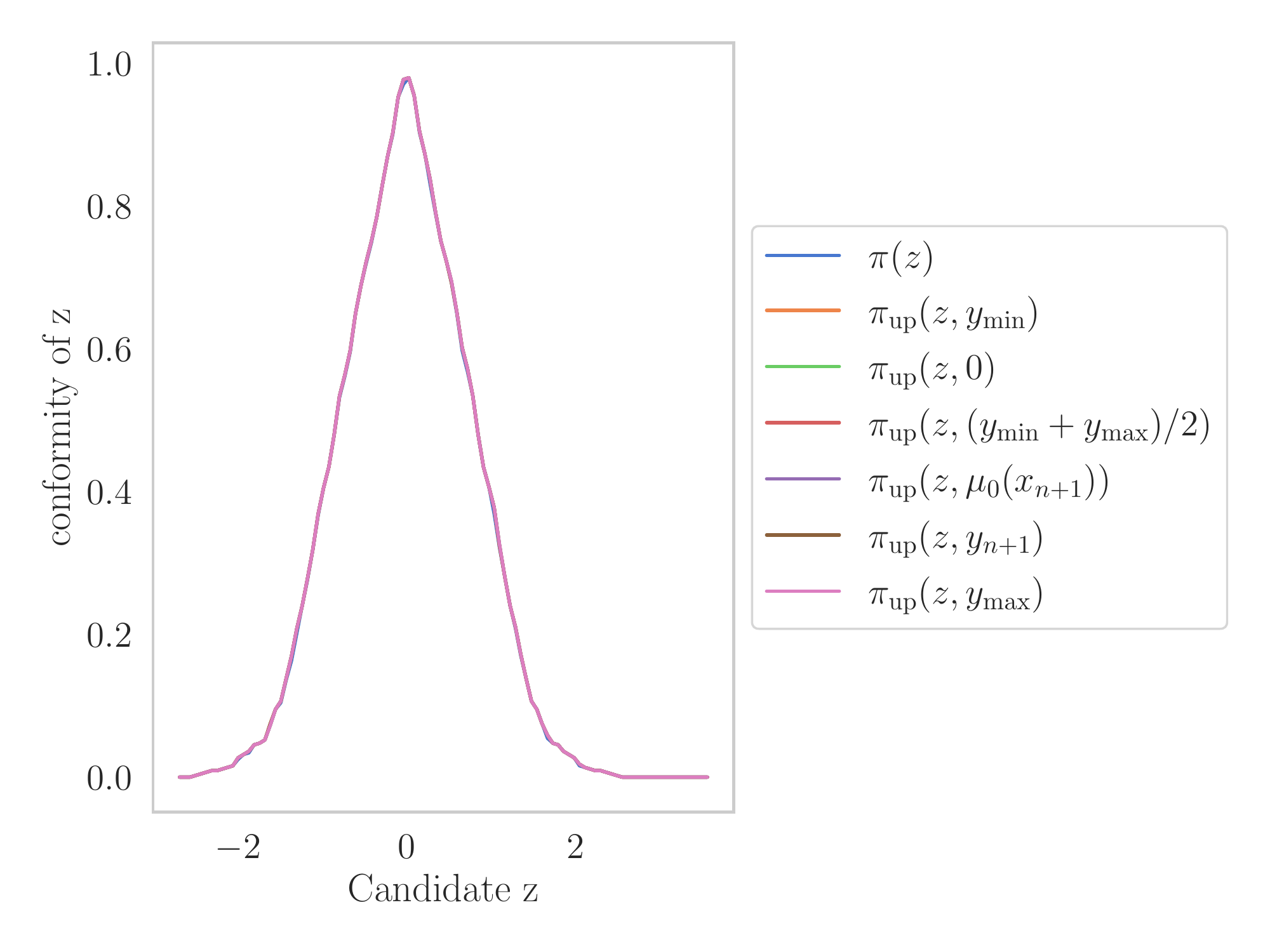}}
\subfigure[\texttt{Synthetic} $(30, 100)$]{\includegraphics[width=0.33\textwidth]{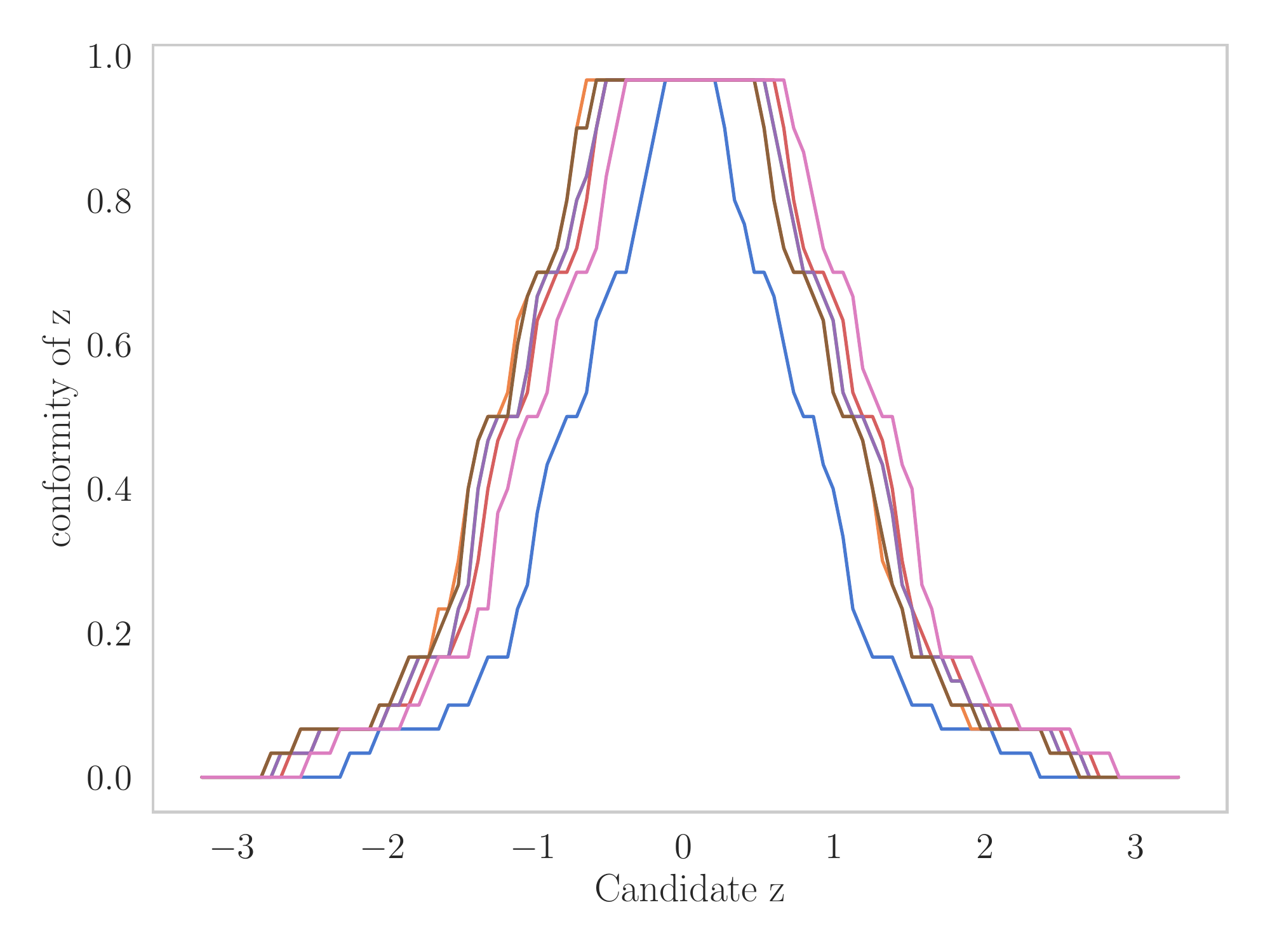}}
\caption{Illustration of the evolution of the conformity function as a function of sample size. The underlying model fit 
is $\beta(z) \in \argmin_{\beta \in \bbR^p} \norm{y(z) - X\beta}_1 / (n + 1) + \lambda\norm{\beta}^2$ where $y(z) = (y_1, \cdots, y_n, z)$ and 
we use \texttt{sklearn} synthetic dataset $\rm{make\_regression}(n, p=100, \rm{noise}=1)$. We fixed $\hat z = 0$ and $\lambda=0.5$. The set $Z$ is a linear grid in the interval $[y_{(1)}, y_{(n)}]$. We also illustrate the batch approximation for different values of $\hat z$. \label{fig:illustration_stab}}
\end{figure*}

The \Cref{sec:Conformal_Prediction} guarantees that $\pi(y_{n+1}) \geq \alpha$ with high probability. Therefore, since $y_{n+1}$ is unknown, the conformal set just selects all $z$ that satisfies the same inequality \ie $\Gamma^{(\alpha)}(x_{n+1}) = \{z : \pi(z) \geq \alpha\}$. This leads to fitting a new model for any $z$. Here, we take a different strategy. The main remark is that only one element of the dataset changes at a time, then with mild stability assumptions, one can expect that the model prediction will not change drastically. Instead of inverting $\pi(\cdot)$, we will bound it with quantities independent of the model fit $\mu_{z}$ for any $z$.
 
%

\begin{definition}[Algorithmic Stability]\label{def:Algorithmic_Stability}
A prediction function $\mu_{\cdot}$ is stable if for any observed features $x_i$, $i \in [n+1]$, we have
\begin{equation}
|S(q, \mu_{z}(x_i)) - S(q,\mu_{z_0}(x_i))| \leq \tau_i \quad \forall z, z_0, q \in \bbR \enspace.
\end{equation}
\end{definition}

In the literature, it is common to make assumptions about the stability of a predictive model to obtain upper bounds on its generalization error and thus ensure that it does not overfit the training data \eg \citep{Bousquet_Elisseeff02}. Although the conformal prediction framework applies even when the underlying model is not stable, we show that this additional assumption allows for efficient evaluation of confidence sets. We also add that in cases where the generalization capabilities of the model are poor, the size of the confidence intervals can become very large; even unbounded, and not at all informative.

\begin{proposition}\label{prop:bounding_pi} Assume that the model fit $\mu_{\cdot}$ is stable as in \Cref{def:Algorithmic_Stability}. Then, we have:
$$\forall z, \hat z,\quad \pi_{\rm{lo}}(z, \hat z) \leq \pi(z) \leq \pi_{\rm{up}}(z, \hat z) \enspace,$$
with
\begin{align*}
\pi_{\rm{lo}}(z, \hat z) &:= 1 - \frac{1}{n+1} \sum_{i=1}^{n+1} \mathbb{1}_{L_i(z, \hat z) \leq U_{n+1}(z, \hat z)} \enspace,\\
\pi_{\rm{up}}(z, \hat z) &:= 1 - \frac{1}{n+1} \sum_{i=1}^{n+1} \mathbb{1}_{U_i(z, \hat z) \leq L_{n+1}(z, \hat z)} \enspace,
\end{align*}
where, we define, for any index $i$ in $[n]$,
\begin{align*}
L_i(z, \hat z) &= E_i(\hat z) - \tau_i \enspace, \\ 
U_i(z, \hat z) &= E_i(\hat z) + \tau_i \enspace, \\
L_{n+1}(z, \hat z) &= S(z, \mu_{\hat z}(x_{n+1})) - \tau_{n+1} \enspace, \\
U_{n+1}(z, \hat z) &= S(z, \mu_{\hat z}(x_{n+1})) + \tau_{n+1} \enspace.
\end{align*}
\end{proposition}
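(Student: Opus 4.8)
The whole argument rests on turning the uniform stability bound into a deterministic sandwich for each conformity score, and then propagating that sandwich through the (monotone) rank functional that defines $\pi$. First I would instantiate \Cref{def:Algorithmic_Stability} with $z_0 = \hat z$. For an index $i \in [n]$ I take $q = y_i$, which gives $|E_i(z) - E_i(\hat z)| = |S(y_i,\mu_z(x_i)) - S(y_i,\mu_{\hat z}(x_i))| \le \tau_i$, i.e. $L_i(z,\hat z) \le E_i(z) \le U_i(z,\hat z)$. For the index $n+1$ I take $q = z$, which gives $|S(z,\mu_z(x_{n+1})) - S(z,\mu_{\hat z}(x_{n+1}))| \le \tau_{n+1}$, i.e. $L_{n+1}(z,\hat z) \le E_{n+1}(z) \le U_{n+1}(z,\hat z)$. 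So after this step every score $E_i(z)$, $i \in [n+1]$, lies in an explicit interval whose endpoints depend only on $\mu_{\hat z}$ (and on $z$ only through the last coordinate).

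Next I would compare the indicators appearing in $\mathrm{Rank}(E_{n+1}(z)) = \sum_{i=1}^{n+1}\mathbb{1}_{E_i(z)\le E_{n+1}(z)}$ term by term against the ones in $\pi_{\rm lo}$ and $\pi_{\rm up}$. For the upper bound on the rank: whenever $E_i(z)\le E_{n+1}(z)$ we also have $L_i(z,\hat z)\le E_i(z)\le E_{n+1}(z)\le U_{n+1}(z,\hat z)$, hence $\mathbb{1}_{E_i(z)\le E_{n+1}(z)} \le \mathbb{1}_{L_i(z,\hat z)\le U_{n+1}(z,\hat z)}$, so summing over $i$ gives $\mathrm{Rank}(E_{n+1}(z)) \le \sum_{i=1}^{n+1}\mathbb{1}_{L_i(z,\hat z)\le U_{n+1}(z,\hat z)}$. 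For the lower bound: whenever $U_i(z,\hat z)\le L_{n+1}(z,\hat z)$ we get $E_i(z)\le U_i(z,\hat z)\le L_{n+1}(z,\hat z)\le E_{n+1}(z)$, hence $\mathbb{1}_{U_i(z,\hat z)\le L_{n+1}(z,\hat z)} \le \mathbb{1}_{E_i(z)\le E_{n+1}(z)}$, so $\mathrm{Rank}(E_{n+1}(z)) \ge \sum_{i=1}^{n+1}\mathbb{1}_{U_i(z,\hat z)\le L_{n+1}(z,\hat z)}$.

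Finally I would substitute these two inequalities into $\pi(z) = 1 - \frac{1}{n+1}\mathrm{Rank}(E_{n+1}(z))$; since $t \mapsto 1 - t/(n+1)$ is decreasing, the upper bound on the rank yields $\pi(z) \ge \pi_{\rm lo}(z,\hat z)$ and the lower bound on the rank yields $\pi(z) \le \pi_{\rm up}(z,\hat z)$, which is exactly the claim, valid for every pair $(z,\hat z)$. There is no real obstacle here: the only point requiring minor care is keeping the direction of each inequality (and of the indicator comparison) consistent, and noting that the term $i=n+1$ is included in both sums and is handled by the same sandwich (indeed $\mathbb{1}_{L_{n+1}\le U_{n+1}} = 1$ and $\mathbb{1}_{U_{n+1}\le L_{n+1}}$ vanishes unless $\tau_{n+1}=0$, consistent with $\mathbb{1}_{E_{n+1}(z)\le E_{n+1}(z)}=1$). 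One could also remark that $L_i,U_i$ for $i\in[n]$ do not actually depend on $z$, so only $L_{n+1},U_{n+1}$ carry the dependence on the candidate, which is what makes the bound cheap to evaluate.
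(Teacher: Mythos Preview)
Your proof is correct and follows essentially the same approach as the paper: instantiate the stability bound at $z_0=\hat z$ to sandwich each $E_i(z)$ between $L_i$ and $U_i$, then use the chain of implications $U_i\le L_{n+1}\Rightarrow E_i(z)\le E_{n+1}(z)\Rightarrow L_i\le U_{n+1}$ to compare indicators and hence ranks. If anything, your write-up is slightly more careful than the paper's in explicitly distinguishing the choice $q=y_i$ for $i\in[n]$ from $q=z$ for $i=n+1$, and in tracking the $i=n+1$ term in the sums.
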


\begin{proof}
By stability, for any $q$, we have:
$$ |S(q, \mu_{z}(x_i)) - S(q, \mu_{\hat z}(x_i))| \leq  \tau_i \enspace.$$
Applying the previous inequality to $q=y_i$ for any index $i$ in $[n+1]$, we have
$L_i(z, \hat z) \leq E_i(z) \leq U_i(z, \hat z)$ and it holds:
\begin{align*}
U_i(z, \hat z) \leq L_{n+1}(z, \hat z) &\Longrightarrow
E _i(z) \leq E_{n+1}(z) \\
&\Longrightarrow L_i(z, \hat z) \leq U_{n+1}(z, \hat z) \enspace.
\end{align*}
Taking the indicator of the corresponding sets, we obtain the result.
\end{proof}

A direct consequence of \Cref{prop:bounding_pi} is that the exact conformal set can be wrapped as follows.

\begin{corollary}[Stable Conformal Sets] Under the assumption of \Cref{prop:bounding_pi}, the conformal prediction set is lower and upper approximated as 
$$\Gamma_{\rm{lo}}^{(\alpha)}(x_{n+1}) \subset \Gamma^{(\alpha)}(x_{n+1}) \subset\Gamma_{\rm{up}}^{(\alpha)}(x_{n+1}) \enspace,$$
where
\begin{align*}
\Gamma_{\rm{lo}}^{(\alpha)}(x_{n+1}) &= \{z: \pi_{\rm{lo}}(z, \hat z) \geq \alpha \} \enspace,\\
\Gamma_{\rm{up}}^{(\alpha)}(x_{n+1}) &= \{z: \pi_{\rm{up}}(z, \hat z) \geq \alpha \} \enspace.
\end{align*}
\end{corollary}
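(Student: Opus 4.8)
The plan is to derive the corollary as an immediate set-theoretic consequence of the pointwise bounds established in \Cref{prop:bounding_pi}. The key observation is that the three prediction sets in question are all of the form $\{z : g(z) \geq \alpha\}$ for different functions $g$, namely $g = \pi_{\rm{lo}}(\cdot, \hat z)$, $g = \pi$, and $g = \pi_{\rm{up}}(\cdot, \hat z)$, and that these three functions are ordered pointwise. So the corollary reduces to the elementary fact that if $g_1(z) \leq g_2(z) \leq g_3(z)$ for all $z$, then $\{z : g_1(z) \geq \alpha\} \subset \{z : g_2(z) \geq \alpha\} \subset \{z : g_3(z) \geq \alpha\}$.

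First I would recall from \Cref{prop:bounding_pi} that, under the algorithmic stability assumption, for every fixed $\hat z$ and every $z$ we have the chain of inequalities $\pi_{\rm{lo}}(z, \hat z) \leq \pi(z) \leq \pi_{\rm{up}}(z, \hat z)$. Then I would argue the left inclusion: take any $z \in \Gamma_{\rm{lo}}^{(\alpha)}(x_{n+1})$, so by definition $\pi_{\rm{lo}}(z, \hat z) \geq \alpha$; combining with $\pi(z) \geq \pi_{\rm{lo}}(z, \hat z)$ gives $\pi(z) \geq \alpha$, hence $z \in \Gamma^{(\alpha)}(x_{n+1})$. The right inclusion is symmetric: if $z \in \Gamma^{(\alpha)}(x_{n+1})$ then $\pi(z) \geq \alpha$, and since $\pi_{\rm{up}}(z, \hat z) \geq \pi(z) \geq \alpha$ we get $z \in \Gamma_{\rm{up}}^{(\alpha)}(x_{n+1})$. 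Since $\hat z$ was arbitrary, the inclusions hold for any choice of the anchor point $\hat z$ used to compute the bounds.

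Honestly, there is no real obstacle here: the entire content has already been shifted into \Cref{prop:bounding_pi}, and what remains is purely formal. The only thing worth being slightly careful about is making explicit that the inclusions hold for each fixed $\hat z$ (rather than requiring some uniformity over $\hat z$), and — if one wants to also record the coverage consequence — noting that combining the left inclusion $\Gamma_{\rm{lo}}^{(\alpha)}(x_{n+1}) \subset \Gamma^{(\alpha)}(x_{n+1})$ with the coverage guarantee $\mathbb{P}(y_{n+1} \in \Gamma^{(\alpha)}(x_{n+1})) \geq 1 - \alpha$ from \Cref{sec:Conformal_Prediction} does \emph{not} yield validity of the inner set, whereas the outer set $\Gamma_{\rm{up}}^{(\alpha)}(x_{n+1})$ does inherit the $1 - \alpha$ coverage for free, since it contains the valid set $\Gamma^{(\alpha)}(x_{n+1})$.
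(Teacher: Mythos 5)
Your proof is correct and matches the paper's reasoning exactly: the paper presents this corollary as a "direct consequence" of the pointwise sandwich $\pi_{\rm{lo}}(z,\hat z) \leq \pi(z) \leq \pi_{\rm{up}}(z,\hat z)$ from the preceding proposition, which is precisely the superlevel-set argument you spell out. Your additional remark that only the outer set inherits the $1-\alpha$ coverage is also consistent with the paper's subsequent coverage-guarantee proposition.
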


Since our proposal arises from a combination of the conformal prediction sets with a correction from the stability bounds, we call the resulting (upper) confidence set $\Gamma_{\rm{up}}^{(\alpha)}(x_{n+1})$ \textbf{\texttt{stabCP}} for stable conformal set. By construction, it contains the exact confidence set $\Gamma^{(\alpha)}(x_{n+1})$ and therefore enjoys at least the same statistical benefits displayed in the following result.
\begin{proposition}[Coverage guarantee]
Assume that the model fit $\mu_{\cdot}$ is stable as in \Cref{def:Algorithmic_Stability}. Then the \texttt{stabCP} set is an upper envelope of the exact conformal prediction set in \Cref{eq:exact_conformal_set} and is thus valid \ie
$$ \mathbb{P}(y_{n+1} \in \Gamma_{\rm{up}}^{(\alpha)}(x_{n+1})) \geq 1 - \alpha \enspace. $$ 
\end{proposition}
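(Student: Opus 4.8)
The plan is to chain two facts that are already established: the deterministic set inclusion coming from the corollary on stable conformal sets, and the finite-sample coverage of the \emph{exact} conformal set recalled in \Cref{sec:Conformal_Prediction}. First I would invoke \Cref{prop:bounding_pi}: under the stability assumption of \Cref{def:Algorithmic_Stability}, we have $\pi_{\rm{lo}}(z,\hat z) \le \pi(z) \le \pi_{\rm{up}}(z,\hat z)$ for every $z$ and every anchor $\hat z$, hence $\{z : \pi(z) \ge \alpha\} \subset \{z : \pi_{\rm{up}}(z,\hat z) \ge \alpha\}$, i.e. $\Gamma^{(\alpha)}(x_{n+1}) \subset \Gamma_{\rm{up}}^{(\alpha)}(x_{n+1})$. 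This already proves the ``upper envelope'' claim; note that the inclusion is purely deterministic and holds for \emph{any} choice of $\hat z$, so it does not interact with the probabilistic part of the argument.

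Second I would recall why the exact set is valid. Applying \Cref{lm:distribution_of_rank} to the random variables $U_i = E_i(y_{n+1})$, $i \in [n+1]$ --- which form an exchangeable and identically distributed sequence because the data $(x_i, y_i)_{i \in [n+1]}$ is exchangeable and the estimator $\beta(\cdot)$ from \Cref{eq:model_optimization}, the feature map $\Phi$, and the score function $S$ are all invariant under permutations of the data --- gives $\mathbb{P}(\mathrm{Rank}(E_{n+1}(y_{n+1})) \le (n+1)(1-\alpha)) \ge 1 - \alpha$. By the definition of $\pi$ in \Cref{eq:general_def_of_pi} this is exactly $\mathbb{P}(\pi(y_{n+1}) \ge \alpha) \ge 1 - \alpha$, that is, $\mathbb{P}(y_{n+1} \in \Gamma^{(\alpha)}(x_{n+1})) \ge 1 - \alpha$.

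Combining the two steps via monotonicity of the probability measure,
\[
\mathbb{P}\big(y_{n+1} \in \Gamma_{\rm{up}}^{(\alpha)}(x_{n+1})\big) \ge \mathbb{P}\big(y_{n+1} \in \Gamma^{(\alpha)}(x_{n+1})\big) \ge 1 - \alpha,
\]
which is the claim.

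I do not expect any genuine obstacle here: once the inclusion $\Gamma^{(\alpha)}(x_{n+1}) \subset \Gamma_{\rm{up}}^{(\alpha)}(x_{n+1})$ is in place, coverage of \texttt{stabCP} is inherited for free, and the stability hypothesis enters only through that inclusion. The sole point deserving care is the exchangeability of $(E_i(y_{n+1}))_{i \in [n+1]}$, which relies on the permutation-invariance requirement imposed on the fitting procedure; this is precisely the ingredient that some homotopy-based constructions in the literature can break, whereas here it is untouched because we never alter the estimator, only sandwich its output between model-free bounds.
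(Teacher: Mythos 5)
Your proof is correct and follows exactly the route the paper intends: the deterministic inclusion $\Gamma^{(\alpha)}(x_{n+1}) \subset \Gamma_{\rm{up}}^{(\alpha)}(x_{n+1})$ from \Cref{prop:bounding_pi}, combined with the coverage of the exact conformal set from \Cref{lm:distribution_of_rank} and monotonicity of the probability measure. The paper leaves this proposition without a written proof precisely because it regards the argument as immediate from the preceding corollary, so nothing is missing and nothing differs in substance.
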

As promised in the abstract, our proposed method suffers no loss of statistical coverage,  requires only one model adjustment to the data at an arbitrary candidate point $\hat z$, and fully uses all the data (no splitting). Thus we can benefit both from statistical efficiency with a smaller confidence interval as in the case of the exact calculation; but also we completely break the computational difficulty as in the case of splitting methods. To our knowledge, there is no equivalent method that can benefit from such a double performance.

\subsection{Practical Computation of \texttt{stabCP} sets}

By construction, the computation of stable conformal sets is equivalent to collecting all $z$ such that $\pi_{\rm{up}}(z, \hat z) \geq \alpha$. Let's begin by noting that $U_{n+1}(z, \hat z) > L_{n+1}(z, \hat z)$ when $\tau_{n+1} > 0$ which we will assume for simplicity. We have
$$ \pi_{\rm{up}}(z, \hat z) \geq \alpha \Leftrightarrow \sum_{i=1}^{n} \mathbb{1}_{U_i(z, \hat z) \leq L_{n+1}(z, \hat z)} \leq (1 - \alpha)(n+1) \enspace.$$
This means that a candidate $z$ is selected, if at most $(1-\alpha)(n+1)$ elements of $\{U_i(z, \hat z)\}_{i \in [n]}$ are smaller than $L_{n+1}(z, \hat z)$. Which is equivalent to\footnote{For $i \in [n]$, $U_i(z, \hat z)$ and $L_i(z, \hat z)$ are independent of $z$.}
$$ L_{n+1}(z, \hat z) \leq U_{(\lceil (1-\alpha)(n+1) \rceil)}(z, \hat z) =: Q_{1-\alpha}(\hat z) \enspace.$$
Hence, we can conclude that
$$ \Gamma_{\rm{up}}^{(\alpha)}(x_{n+1}) = \{z: S(z, \mu_{\hat z}(x_{n+1})) \leq Q_{1-\alpha}(\hat z) + \tau_{n+1}\}.$$
For the absolute value score, it reduces to the interval
$$ \Gamma_{\rm{up}}^{(\alpha)}(x_{n+1}) = [\mu_{\hat z}(x_{n+1}) \pm (Q_{1-\alpha}(\hat z) + \tau_{n+1})] \enspace.$$
For the sake of clarity, we summarize the computations for this simplest case in \Cref{alg:stabCP} and discuss the generalization in the appendix.
In general terms, \texttt{stabCP} sets are convex sets when the score function $z \mapsto S(z, \mu_{\hat z}(x_{n+1}))$ has convex level sets. This presumes that our strategy will also facilitate the calculations in cases where the target $y_{n+1}$ is multi-dimensional.

\begin{algorithm}[tb]
   \caption{Stable Conformal Prediction Set}
   \label{alg:stabCP}
\begin{algorithmic}
   \STATE {\bfseries Input:} data $\{(x_1, y_1), \ldots, (x_n, y_n)\}$ and $x_{n+1}$
   \STATE Coverage level $\alpha \in (0, 1)$, any estimate $\hat z \in \bbR$
   \STATE Stability bounds $\tau_1, \ldots, \tau_{n+1}$ of the learning algorithm
   \STATE {\bfseries Output:} prediction interval at $x_{n+1}$
   \STATE Fit a model $\mu_{\hat z}$ on the training data $\Data_{n+1}(\hat z)$
   \STATE Compute the quantile $Q_{1-\alpha}(\hat z) = U_{(\lceil (1-\alpha)(n+1) \rceil)}(z, \hat z)$ where the $U_i$s are defined in \Cref{prop:bounding_pi}
   \STATE {\bfseries Return:} $[\mu_{\hat z}(x_{n+1}) \pm (Q_{1-\alpha}(\hat z) + \tau_{n+1})]$
\end{algorithmic}
\end{algorithm}

\subsection{Batch Approximation}
\label{subsec:Batch_Approximation}

The stable conformal sets require a single model fit $\mu_{\hat z}$ for an arbitrary candidate $\hat z$. The approximation gaps are computable as
$$ \max\{ \pi(z) - \pi_{\rm{lo}}(z, \hat z), \pi_{\rm{up}}(z, \hat z) - \pi(z) \} \leq \rm{Gap}(z, \hat z) \enspace,$$
where
$$ \rm{Gap}(z, \hat z) := \pi_{\rm{up}}(z, \hat z) - \pi_{\rm{lo}}(z, \hat z) \enspace.
$$
Since the above upper and lower bounds hold for any $\hat z$, tighter approximations are obtained with a batch of candidates $\mathcal{Z} = \hat z_1, \cdots, \hat z_d$ as
$$\pi_{\rm{up}}(z, \mathcal{Z}) = \inf_{\hat z \in \mathcal{Z}} \pi_{\rm{up}}(z, \hat z) \text{ and }
 \pi_{\rm{lo}}(z, \mathcal{Z}) = \sup_{\hat z \in \mathcal{Z}} \pi_{\rm{lo}}(z, \hat z) \enspace.
$$

Another possibility is to build an interpolation of $z \mapsto \mu_z(\cdot)$ based on query points $\hat z_1, \cdots, \hat z_d \in (z_{\min}, z_{\max}) \subset \bbR$. For example, one can consider as predictive model the following piecewise linear interpolation
\begin{equation*}
\tilde\mu_{z} =
\begin{cases}
\frac{\hat z_1 - z}{\hat z_1 - z_{\min}} \mu_{z_{\min}} + \frac{z_{\min} - z}{\hat z_1 - z_{\min}} \mu_{\hat z_1} &\text{ if } z \leq z_{\min} \enspace, \\
\frac{z - \hat z_{t+1}}{\hat z_t - \hat z_{t+1}} \mu_{\hat z_t} + \frac{z - \hat z_{t}}{\hat z_{t+1} - \hat z_t} \mu_{\hat z_{t+1}} & \text{ if } z \in [\hat z_t, \hat z_{t+1}]\enspace, \\
\frac{z - \hat z_d}{z_{\max} - \hat z_d} \mu_{z_{\max}} + \frac{z_{\max} - z}{z_{\max} - \hat z_d} \mu_{\hat z_d} &\text{ if } z \geq z_{\max}\enspace,
\end{cases}
\end{equation*}
%

%
An important point is that, by using the stability bound, the coverage guarantee of the interpolated conformal set is preserved without the need of the expensive symmetrization proposed in \citep{Ndiaye_Takeuchi21}. Such techniques are more relevant when the sample size is small or when  precise estimates of the stability bounds are not available. The corresponding conformity function is defined in a similar way as the previous versions, where we simply plugin the interpolated model. We refer to the appendix for more details. \looseness=-1


\begin{remark}[Categorical Variables]
  In this article, we have essentially limited ourselves to regression problems which, in general, pose intractable computational difficulties. However, the methods remain applicable for classification problems where the set of candidates can only take a finite number of values in $ \mathcal{C} := {c_1, \ldots, c_m}$. In this case, an additional precaution of encoding the categories in real numbers is necessary. Considering the leave-one-out score function, our proposal is therefore an alternative to the approximations via influence function used in \citep{Alaa_VanDerSchaar20, Abad_Bhatt_Weller_Cherubin22} when an exact computation \citep{Cherubin_Chatzikokolakis_Jaggi21} would be unusable or too costly.
\end{remark}

\subsection{Stability Bounds}
\label{subsec:Stability_bounds}

In this section, we recall some stability bounds. The proof techniques rely on regularity assumptions on the function to be minimized and are relatively standard in optimization \citep[Chapter 13]{ShalevShwartz_BenDavid14}. Stability is a widely used assumption to provide generalization bounds for machine learning algorithms \citep{Bousquet_Elisseeff02, Hardt_Recht_Singer16}. We specify that here the notion of stability that we require is related to the variation of the score and not of the loss function in the optimization objective. However, the ideas for establishing the stability bounds are essentially the same and we recall the core strategies here for the sake of completeness.\\

Let us start with the unregularized model where $\Omega = 0$ \ie 
%
\begin{equation}\label{eq:unregularized_min}
\beta(z) \in \argmin_{\beta \in \bbR^p} \mathcal{L}(y(z), \Phi(X, \beta)) = F_z(\Phi(X, \beta)) \enspace.
\end{equation}

\begin{definition}\label{def:sc}
A function $f$ is $\lambda$-strongly convex if for any $w_0, w$ and $\varsigma \in (0,1)$
\begin{align*}
f(\varsigma w_0 + (1 - \varsigma) w) &\leq \varsigma f(w_0) + (1 - \varsigma)f(w) \\
&\qquad- \frac{\lambda}{2} \varsigma (1-\varsigma) \norm{w_0 - w}^2 \enspace.
\end{align*}
\end{definition}

\begin{proposition}\label{prop:stability_sc_loss}
Assume that for any $z$, $F_z$ is $\lambda$-strongly convex and $\rho$-Lipschitz. It holds
$$\norm{\mu_{z}(X) - \mu_{z_0}(X)} \leq \frac{2 \rho}{\lambda} \enspace.$$
\end{proposition}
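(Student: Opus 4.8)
The plan is to identify the quantity to be bounded with the distance between the minimizers of $F_z$ and $F_{z_0}$, and then run the textbook ``two strong-convexity inequalities, then Lipschitz'' argument. Set $u_z := \Phi(X,\beta(z)) \in \bbR^{n+1}$, so that $u_z = \mu_z(X)$ and, by \Cref{eq:unregularized_min}, $u_z$ minimizes $F_z$ over the image $\mathcal{R} := \{\Phi(X,\beta) : \beta \in \bbR^p\}$ of the feature map. The one fact I need to recall about strongly convex functions is quadratic growth at a minimizer: if $u_z$ minimizes the $\lambda$-strongly convex function $F_z$ over the convex set $\mathcal{R}$, then the first-order optimality condition at $u_z$ together with \Cref{def:sc} gives $F_z(u) \ge F_z(u_z) + \tfrac{\lambda}{2}\norm{u - u_z}^2$ for every $u \in \mathcal{R}$; in the nonsmooth case (e.g. the least absolute deviation loss) one replaces the gradient by a subgradient of $F_z$ at $u_z$, whose inner product against $u - u_z$ is nonnegative by optimality.

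With this in hand, I would apply the growth inequality twice. Since $u_{z_0} \in \mathcal{R}$, applying it to $F_z$ at $u_z$ gives $F_z(u_{z_0}) - F_z(u_z) \ge \tfrac{\lambda}{2}\norm{u_z - u_{z_0}}^2$, and applying it to $F_{z_0}$ at $u_{z_0}$ gives $F_{z_0}(u_z) - F_{z_0}(u_{z_0}) \ge \tfrac{\lambda}{2}\norm{u_z - u_{z_0}}^2$. Summing the two inequalities,
\[
\lambda\,\norm{u_z - u_{z_0}}^2 \;\le\; \big(F_z(u_{z_0}) - F_z(u_z)\big) + \big(F_{z_0}(u_z) - F_{z_0}(u_{z_0})\big).
\]

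To finish, I would bound the right-hand side using the $\rho$-Lipschitz assumption on each $F_z$: $F_z(u_{z_0}) - F_z(u_z) \le \rho\,\norm{u_z - u_{z_0}}$ and $F_{z_0}(u_z) - F_{z_0}(u_{z_0}) \le \rho\,\norm{u_z - u_{z_0}}$, so the sum is at most $2\rho\,\norm{u_z - u_{z_0}}$. Hence $\lambda\,\norm{u_z - u_{z_0}}^2 \le 2\rho\,\norm{u_z - u_{z_0}}$; if $u_z = u_{z_0}$ the bound is trivial, and otherwise dividing by $\norm{u_z - u_{z_0}}$ yields $\norm{\mu_z(X) - \mu_{z_0}(X)} = \norm{u_z - u_{z_0}} \le 2\rho/\lambda$. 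The only delicate point is the very first step: to use quadratic growth one needs $u_z$ to minimize $F_z$ over a \emph{convex} set, which holds automatically when the feature map $\Phi(X,\cdot)$ is linear -- the case of all the concrete examples considered in the paper, where $\mathcal{R}$ is the column space of $X$ -- and must otherwise be imposed; the remaining steps are routine.
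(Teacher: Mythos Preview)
Your proof is correct and takes essentially the same route as the paper --- quadratic growth at the minimizer from strong convexity, followed by the Lipschitz bound --- the only cosmetic difference being that the paper applies the growth inequality once (for $F_z$ alone, obtaining $\tfrac{\lambda}{2}\norm{w_0-w}^2 \le F_z(w_0)-F_z(w) \le \rho\norm{w_0-w}$ directly) rather than summing two symmetric copies, and arrives at the same constant $2\rho/\lambda$. Your caveat that the image $\mathcal{R}$ must be convex is well placed and applies equally to the paper's argument, which tacitly uses optimality of $\mu_z(X)$ at the convex combination $\varsigma w_0+(1-\varsigma)w$.
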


\begin{proof}
By optimality of $\beta(z)$, we have
\begin{equation}\label{eq:min}
F_z(\Phi(X, \beta(z))) \leq F_z(\Phi(X, \beta)) \quad \forall \beta \enspace.
\end{equation}
We simply apply the optimality condition and strong convexity of the function $F_z$ to the vectors $w_0=\Phi(X, \beta(z_0))=\mu_{z_0}(X)$ and $w = \Phi(X, \beta(z))=\mu_{z}(X)$, it holds
\begin{align*}
0 &\overset{\eqref{eq:min}}{\leq} \frac{F_z(\varsigma w_0 + (1 - \varsigma) w) - F_z(w)}{\varsigma}\\
&\overset{\eqref{def:sc}}{\leq} F_z(w_0) - F_z(w) - \frac{\lambda}{2}(1 - \varsigma)\norm{w_0 - w}^2 \enspace.
\end{align*}
Since $F_z$ is $\rho$-Lipschitz, we have
$$\frac{\lambda}{2}\norm{w_0 - w}^2 \leq F_z(w_0) - F_z(w) \leq \rho \norm{w - w_0} \enspace.$$
Therefore, $\frac{\lambda}{2}\norm{w_0 - w} \leq \rho$, hence the result.
\end{proof}

The \Cref{prop:stability_sc_loss} does not assume that the optimization problem in \Cref{eq:unregularized_min} is convex in the model parameter $\beta$. We can now easily deduce a stability bound according to the \Cref{def:Algorithmic_Stability}.

\begin{corollary}
If the score function $S(q, \cdot)$ is $\gamma$-Lipschitz for any $q$, then
$$\tau_i = \frac{2 \gamma \rho}{\lambda}, \qquad \forall i \in [n+1] \enspace.$$
\end{corollary}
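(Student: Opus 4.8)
The statement is a direct chaining of two ingredients already in hand: the prediction-level bound of \Cref{prop:stability_sc_loss} and the Lipschitz property of the score. So the plan is essentially to unwind the definitions. First I would recall that \Cref{prop:stability_sc_loss} controls the whole prediction vector in Euclidean norm, namely $\norm{\mu_{z}(X) - \mu_{z_0}(X)} \leq 2\rho/\lambda$, where $\mu_z(X) = \Phi(X,\beta(z)) \in \bbR^{n+1}$ stacks the predictions $\mu_z(x_i)$ for $i \in [n+1]$.

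Second, I would pass from this vector bound to a coordinatewise bound: for each fixed $i \in [n+1]$, $|\mu_z(x_i) - \mu_{z_0}(x_i)| \leq \norm{\mu_{z}(X) - \mu_{z_0}(X)} \leq 2\rho/\lambda$, since the absolute value of any component of a vector is at most its $\ell_2$ norm. This holds uniformly in $z, z_0 \in \bbR$.

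Third, I would invoke the hypothesis that $S(q,\cdot)$ is $\gamma$-Lipschitz for every $q$, so that for any $q, z, z_0 \in \bbR$ and any $i \in [n+1]$,
\[
|S(q, \mu_{z}(x_i)) - S(q, \mu_{z_0}(x_i))| \leq \gamma\, |\mu_z(x_i) - \mu_{z_0}(x_i)| \leq \frac{2\gamma\rho}{\lambda} \enspace.
\]
Comparing with \Cref{def:Algorithmic_Stability}, this is exactly the claim that $\tau_i = 2\gamma\rho/\lambda$ is a valid stability constant for every index $i \in [n+1]$, which finishes the argument.

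There is no real obstacle here: the only point deserving a word is the transition from the joint Euclidean bound on $\mu_z(X)$ to the per-coordinate bound, which is immediate, and the fact that the constants $\rho$, $\lambda$, $\gamma$ are uniform in $z$ (guaranteed by the hypotheses of \Cref{prop:stability_sc_loss} and of the corollary), so the resulting $\tau_i$ does not depend on the particular pair $(z, z_0)$ — which is precisely what \Cref{def:Algorithmic_Stability} requires.
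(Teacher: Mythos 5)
Your proof is correct and matches the paper's intended argument: the paper states this corollary without an explicit proof as an immediate consequence of \Cref{prop:stability_sc_loss}, and the chain you spell out (Euclidean bound on the prediction vector $\Rightarrow$ coordinatewise bound $\Rightarrow$ compose with the $\gamma$-Lipschitz score) is exactly the intended deduction. Nothing is missing.
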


When the loss function is not strongly convex, it is known that adding a strongly convex regularization can stabilize the algorithm \citep[Chapter 13]{ShalevShwartz_BenDavid14}.
The proof technique is similar to the previous one with the difference that now the bound is on the $\argmin$ of the optimization problem and not the predictions of the model. This requires stronger assumptions.

\begin{proposition}Assume the optimization problem \Cref{eq:model_optimization} is convex, $\Omega$ is $\lambda$-strongly convex.
If the loss $\mathcal{L}$ is convex-$\rho$-Lipschitz, then
$$ \norm{\beta(z) - \beta(z_0)} \leq \frac{2 \rho}{\lambda} \enspace. $$
When the loss function $\mathcal{L}$ is convex-$\nu$-smooth with $\nu < \lambda$ and $\mathcal{L}(y(z), \mu_z(X)) \leq C$ for any $z$, then
$$ \norm{\beta(z) - \beta(z_0)} \leq \frac{ 2\sqrt{2 \nu C}}{\lambda - \nu} \enspace.$$
\end{proposition}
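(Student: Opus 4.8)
The plan is to prove the two stability bounds on $\norm{\beta(z) - \beta(z_0)}$ separately, both by exploiting the $\lambda$-strong convexity of the full objective $G_z(\beta) := \mathcal{L}(y(z), \Phi(X,\beta)) + \Omega(\beta)$, which is inherited from the $\lambda$-strong convexity of $\Omega$ together with convexity of $\mathcal{L}$ (and convexity of the composition, which is assumed). The key observation is that for a $\lambda$-strongly convex function $G$ minimized at $\beta^\star$, one has the quadratic growth inequality $G(\beta) - G(\beta^\star) \geq \frac{\lambda}{2}\norm{\beta - \beta^\star}^2$ for all $\beta$; applying this at both optima $\beta(z)$ and $\beta(z_0)$ and adding gives
\begin{align*}
\lambda \norm{\beta(z) - \beta(z_0)}^2 &\leq \bigl(G_z(\beta(z_0)) - G_z(\beta(z))\bigr) \\
&\quad + \bigl(G_{z_0}(\beta(z)) - G_{z_0}(\beta(z_0))\bigr) \enspace,
\end{align*}
and the right-hand side telescopes into a difference of \emph{loss} terms only, since the $\Omega(\beta(z))$ and $\Omega(\beta(z_0))$ contributions cancel: it equals $\bigl(\mathcal{L}(y(z),\Phi(X,\beta(z_0))) - \mathcal{L}(y(z),\Phi(X,\beta(z)))\bigr) + \bigl(\mathcal{L}(y(z_0),\Phi(X,\beta(z))) - \mathcal{L}(y(z_0),\Phi(X,\beta(z_0)))\bigr)$.

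For the first claim, under the convex-$\rho$-Lipschitz assumption on $\mathcal{L}$, I would bound each of the two loss differences above. Here one has to be slightly careful: it is cleanest to regroup the four loss terms so that each pair shares the \emph{same} label vector argument isn't what we want — rather, we pair terms sharing the same $\beta$. Write the right-hand side as $\bigl(\mathcal{L}(y(z),\Phi(X,\beta(z_0))) - \mathcal{L}(y(z_0),\Phi(X,\beta(z_0)))\bigr) + \bigl(\mathcal{L}(y(z_0),\Phi(X,\beta(z))) - \mathcal{L}(y(z),\Phi(X,\beta(z)))\bigr)$; each bracket compares the loss at a fixed model evaluated on two label vectors $y(z)$ and $y(z_0)$ which differ only in the last coordinate, and Lipschitzness in that argument (with the single differing coordinate) bounds each bracket by $\rho \norm{y(z) - y(z_0)}$... but in fact the standard argument just uses $\rho$-Lipschitzness of $\mathcal{L}(y(z),\cdot)$ in its $\Phi$-argument applied to $\beta(z)$ vs $\beta(z_0)$, giving each loss difference $\leq \rho\norm{\Phi(X,\beta(z)) - \Phi(X,\beta(z_0))}$, hence $\lambda\norm{\beta(z)-\beta(z_0)}^2 \leq 2\rho\norm{\Phi(X,\beta(z)) - \Phi(X,\beta(z_0))}$, and then invoking non-expansiveness / Lipschitz continuity of $\Phi(X,\cdot)$ (or taking $S$-relevant Lipschitz constant into the definition of $\rho$) to replace $\norm{\Phi(X,\beta(z)) - \Phi(X,\beta(z_0))}$ by $\norm{\beta(z) - \beta(z_0)}$, yielding $\norm{\beta(z) - \beta(z_0)} \leq 2\rho/\lambda$ after dividing. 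I would follow the book treatment (Shalev-Shwartz--Ben-David, Ch.~13) and state the Lipschitz hypothesis as being directly on $\beta \mapsto \mathcal{L}(y(z),\Phi(X,\beta))$, which avoids this bookkeeping.

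For the second claim, under convex-$\nu$-smoothness with $\nu < \lambda$ and the uniform bound $\mathcal{L}(y(z),\mu_z(X)) \leq C$, the plan is to use the self-bounding property of nonnegative $\nu$-smooth functions: $\norm{\nabla \mathcal{L}(\cdot)} \leq \sqrt{2\nu \mathcal{L}(\cdot)} \leq \sqrt{2\nu C}$ along the relevant points, so that on the segment between $\beta(z_0)$ and $\beta(z)$ the loss is ``effectively'' $\sqrt{2\nu C}$-Lipschitz; combined with the $\nu$-smoothness of the loss difference this sharpens the bound. Concretely, from $\lambda\norm{\beta(z)-\beta(z_0)}^2 \leq 2\bigl(\mathcal{L}(y(z_0),\mu_z(X)) - \mathcal{L}(y(z_0),\mu_{z_0}(X))\bigr)$ using optimality of $\beta(z_0)$ for $G_{z_0}$, then bounding the loss difference by $\langle \nabla, \Delta\rangle + \frac{\nu}{2}\norm{\Delta}^2 \leq \sqrt{2\nu C}\,\norm{\beta(z)-\beta(z_0)} + \frac{\nu}{2}\norm{\beta(z)-\beta(z_0)}^2$ (modulo the $\Phi$-Lipschitz identification as above), rearranging gives $(\lambda - \nu)\norm{\beta(z)-\beta(z_0)} \leq 2\sqrt{2\nu C}$, i.e.\ $\norm{\beta(z)-\beta(z_0)} \leq 2\sqrt{2\nu C}/(\lambda-\nu)$. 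The main obstacle — really the only subtle point — is handling the interplay between the feature map $\Phi(X,\cdot)$ and the parameter $\beta$: the clean cancellation of $\Omega$ and the telescoping are routine, but one must be explicit about whether Lipschitzness/smoothness is assumed of the loss as a function of predictions $\Phi(X,\beta)$ or of the parameter $\beta$ directly, and correctly transfer between the two; following the convention that $\mathcal{L}(y(z),\Phi(X,\cdot))$ is the object assumed Lipschitz/smooth in $\beta$ makes the proof go through verbatim as in the strongly-convex-loss case above.
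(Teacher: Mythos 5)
Your proof is correct and is essentially the argument the paper intends: the paper gives no explicit proof of this proposition, deferring to the regularization-stability technique of Shalev-Shwartz and Ben-David (Chapter 13), which is exactly the symmetric telescoping of the two quadratic-growth inequalities (with the $\Omega$ terms cancelling) that you carry out, followed by Lipschitzness in the first case and the self-bounding property $\norm{\nabla\mathcal{L}} \leq \sqrt{2\nu\mathcal{L}} \leq \sqrt{2\nu C}$ of nonnegative smooth losses in the second. Your symmetrized version (summing quadratic growth at both optima to get the factor $\lambda$ rather than $\lambda/2$) is in fact what recovers the stated constants $2\rho/\lambda$ and $2\sqrt{2\nu C}/(\lambda-\nu)$ exactly, and your caveat about where the Lipschitz/smoothness constants live (in $\beta$ versus in the predictions $\Phi(X,\beta)$) correctly identifies the one point the paper itself leaves implicit.
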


These optimization error bounds also imply the following stability bounds.
\begin{corollary}
Assume that the score function $S(q, \cdot)$ is $\gamma$-Lipschitz for any $q$, and that the prediction model $\mu_{\cdot}(x) := \Phi(x, \beta(\cdot))$ satisfies for any $x \in \bbR^p,\, z, z_0 \in \bbR$,
$$ |\mu_z(x) - \mu_{z_0}(x)| \leq L_{\Phi}|x^{\top} \beta(z) - x^\top \beta(z_0)| \enspace.$$
If the loss is $\rho$-Lipschitz, then $$\tau_i = \frac{2 \gamma \rho L_{\Phi}\norm{x_i}}{\lambda}, \qquad \forall i \in [n+1]\enspace.$$
If the loss is $\nu$-smooth with $\nu < \lambda$ and bounded by C, then $$\tau_i = \frac{2\gamma L_{\Phi} \norm{x_i} \sqrt{2\nu C}}{\lambda - \nu}, \qquad \forall i \in [n+1] \enspace.$$
\end{corollary}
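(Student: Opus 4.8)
The plan is to chain three elementary inequalities with the parameter-stability bounds from the preceding proposition, so that the only real content is a careful tracking of constants. Fix an index $i \in [n+1]$ and arbitrary $z, z_0, q \in \bbR$. First I would invoke the $\gamma$-Lipschitzness of $S(q, \cdot)$ to write $|S(q, \mu_z(x_i)) - S(q, \mu_{z_0}(x_i))| \leq \gamma\, |\mu_z(x_i) - \mu_{z_0}(x_i)|$; this already decouples the score from the model and reduces the task to controlling the variation of the prediction at the single point $x_i$.

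Next I would apply the hypothesis $|\mu_z(x) - \mu_{z_0}(x)| \leq L_{\Phi}\,|x^\top\beta(z) - x^\top\beta(z_0)|$ with $x = x_i$, followed by Cauchy--Schwarz, $|x_i^\top\beta(z) - x_i^\top\beta(z_0)| \leq \norm{x_i}\,\norm{\beta(z) - \beta(z_0)}$. Combining with the first step gives $|S(q, \mu_z(x_i)) - S(q, \mu_{z_0}(x_i))| \leq \gamma L_{\Phi}\,\norm{x_i}\,\norm{\beta(z) - \beta(z_0)}$, and this holds uniformly in $q$ precisely because $\gamma$ is independent of $q$.

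Finally I would substitute the two regimes of the preceding proposition. Under a convex-$\rho$-Lipschitz loss with $\lambda$-strongly convex $\Omega$ one has $\norm{\beta(z) - \beta(z_0)} \leq 2\rho/\lambda$, whence $\tau_i = 2\gamma\rho L_{\Phi}\norm{x_i}/\lambda$ is admissible; under a convex-$\nu$-smooth loss bounded by $C$ with $\nu < \lambda$ one has $\norm{\beta(z) - \beta(z_0)} \leq 2\sqrt{2\nu C}/(\lambda - \nu)$, whence $\tau_i = 2\gamma L_{\Phi}\norm{x_i}\sqrt{2\nu C}/(\lambda - \nu)$. Since in either case the resulting bound does not depend on $z$, $z_0$ or $q$, it is a valid choice of $\tau_i$ in the sense of \Cref{def:Algorithmic_Stability}.

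The argument is essentially mechanical; the only point requiring attention is uniformity of every bound over the quantifiers involved — in particular that the Lipschitz constant $\gamma$ of $S(q, \cdot)$ is the same for all $q \in \bbR$, so that the stability inequality is simultaneously valid for every $q$, and that the standing hypotheses of the preceding proposition (convexity of the objective, strong convexity of $\Omega$, and, in the smooth regime, boundedness of the loss by $C$) are in force so that its parameter-stability estimates can be invoked. Beyond this bookkeeping of constants I do not anticipate any genuine obstacle.
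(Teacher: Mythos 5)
Your proposal is correct and is exactly the argument the paper intends: the corollary is stated without an explicit proof (the text only says the optimization error bounds ``imply'' it), and the chain $\gamma$-Lipschitzness of $S(q,\cdot)$, then the $L_{\Phi}$ hypothesis, then Cauchy--Schwarz, then the two $\norm{\beta(z)-\beta(z_0)}$ bounds from the preceding proposition is the unique natural route. Your remarks on uniformity in $q$, $z$, $z_0$ correctly identify the only point that needs checking.
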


Another way to understand such regularized bounds, is to leverage duality. A smoothness assumption in the primal space will translate into a strongly concave assumption in the dual space \citep[Theorem 4.2.2, p. 83]{Hiriart-Urruty_Lemarechal93b}. The dual formulation \citep[Chapter 31]{Rockafellar97} of \Cref{eq:model_optimization} reads:
\begin{align}
\theta(z) &\in \argmax_{ \theta \in \bbR^{n+1}} -\mathcal{L}^{*}(y(z), -\theta) - \Omega^*(X^\top \theta) \label{eq:dual_model}
\enspace,
\end{align}
where, given a proper, closed and convex function $f: \bbR^n \to \bbR \cup \{+\infty\}$, we denoted its Fenchel-Legendre transform as $f^*:\bbR^n \to \bbR \cup \{+\infty\}$ defined by $f^*(x^*) = \sup_{x \in \dom f} \langle x^* , x \rangle - f(x)$ with $\dom f = \{x \in \bbR^n: f(x) < +\infty\}$.\\

Let $P_z$ and $D_z$ denote the primal and dual objective functions.
We have the following classical error bounds for the dual optimization problem.
If the loss function $\mathcal{L}$ is $\nu$-smooth, then $\mathcal{L}^*$ is $1/\nu$-strongly convex and we have for $\forall (\beta, \theta) \in \dom P_z \times \dom D_z$
\begin{align*}
\frac{1}{ 2\nu}\norm{\theta(z) - \theta}^2 &\leq D_z(\theta(z)) - D_z(\theta)\\
&= P_z(\beta(z)) - D_z(\theta) \\
&\leq \mathrm{ Duality\_Gap}_z(\beta, \theta) \enspace,
\end{align*}
where the equality follows from strong duality and we recall from weak duality 
 that the duality gap upper bounds the optimization error as follow:
\begin{align*}
\mathrm{Duality\_Gap}_z(\beta, \theta) &:= P_z(\beta) - D_z(\theta)\\
&\geq P_z(\beta) - P_z(\beta(z)) \enspace.
\end{align*}

This readily leads to several possible bounds. If the dual function $D_z(\cdot)$ is $\rho^*$-Lipschitz for any $z$, then $$\norm{\theta(z) - \theta} \leq \sqrt{2\nu \rho^*} \enspace.$$ If the duality gap can be assumed to be bounded by $C$ for any $z \in [z_{\min}, z_{\max}]$, then $$\norm{\theta(z) - \theta} \leq \sqrt{2\nu C} \enspace.$$ We obtain stability bounds when one uses  the dual solution (which is a function of the residual) as a conformity score $S(y(z), \mu_z(X)) = |\theta(z)|$ where the absolute value is taken coordinate wise. For example, these dual based score functions were used in \citep{Ndiaye_Takeuchi19}.

\begin{remark}[Bound on the loss]
The assumption of a bounded loss function that we make, is not rigorously feasible and some adaptations are necessary. For simplicity, let us consider that $\Phi(x, 0)=0$ and $\Omega(0)=0$. Using the optimality of $\beta(z)$, we obtain for any candidate $z$
\begin{align*}
\mathcal{L}(y(z), \mu_z(X)) &\leq \mathcal{L}(y(z), \mu_z(X)) + \Omega(\beta(z)) \\
&\leq \mathcal{L}(y(z), 0) \enspace.
\end{align*}
Unfortunately, for common examples such as least squares, the right hand side is unbounded. Nevertheless, since the data are assumed to be exchangeable, we have
$$\mathbb{P}(y_{n+1} \in [y_{(1)}, y_{(n)}]) \geq 1 - \frac{2}{n+1} \enspace.$$
Hence it is reasonable to restrict the range of candidates as $z \in [y_{(1)}, y_{(n)}]$, which implies
$$\mathcal{L}(y(z), \mu_z(X)) \leq \sup_{z \in [y_{(1)}, y_{(n)}]} \mathcal{L}(y(z), 0) =: C \enspace.$$
\end{remark}

\section{Numerical Experiments}

\begin{figure*}
\centering
\subfigure[\texttt{Boston} $(506, 13)$]{\includegraphics[width=0.49\linewidth]{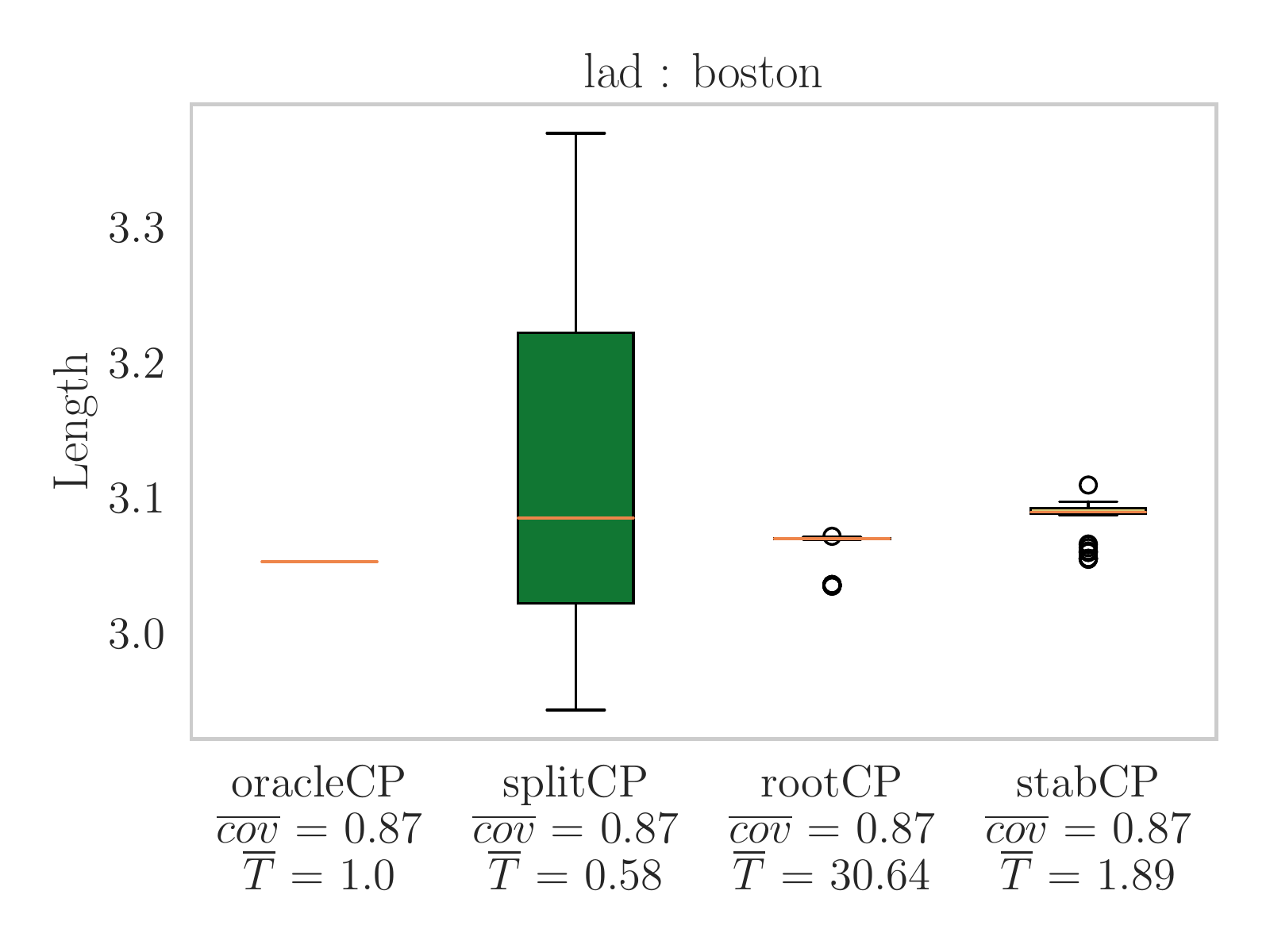}}
\subfigure[\texttt{Diabetes} $(442, 10)$]{\includegraphics[width=0.49\linewidth]{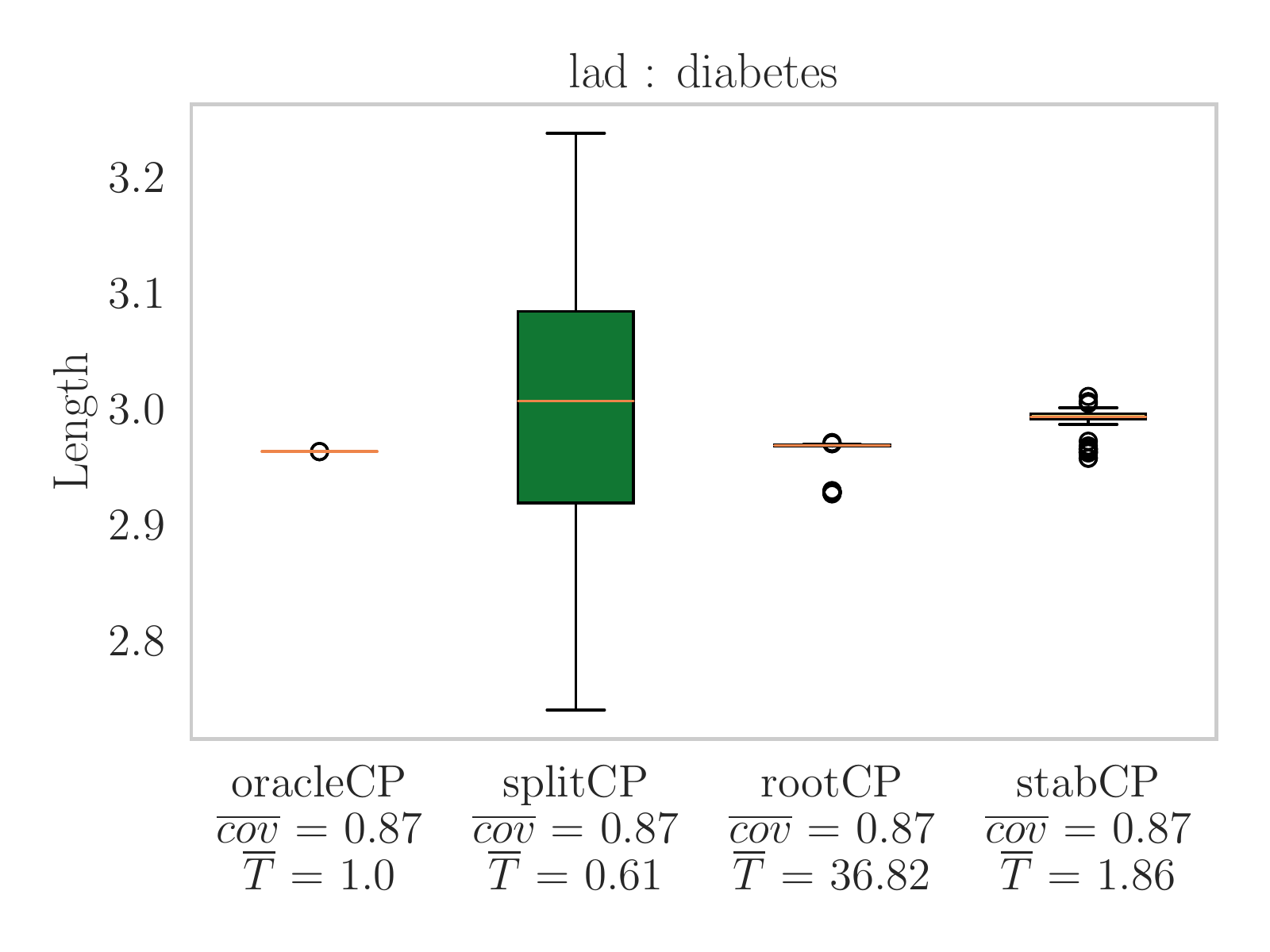}}
\subfigure[\texttt{Housingcalifornia} $(20640, 8)$]{\includegraphics[width=0.49\linewidth]{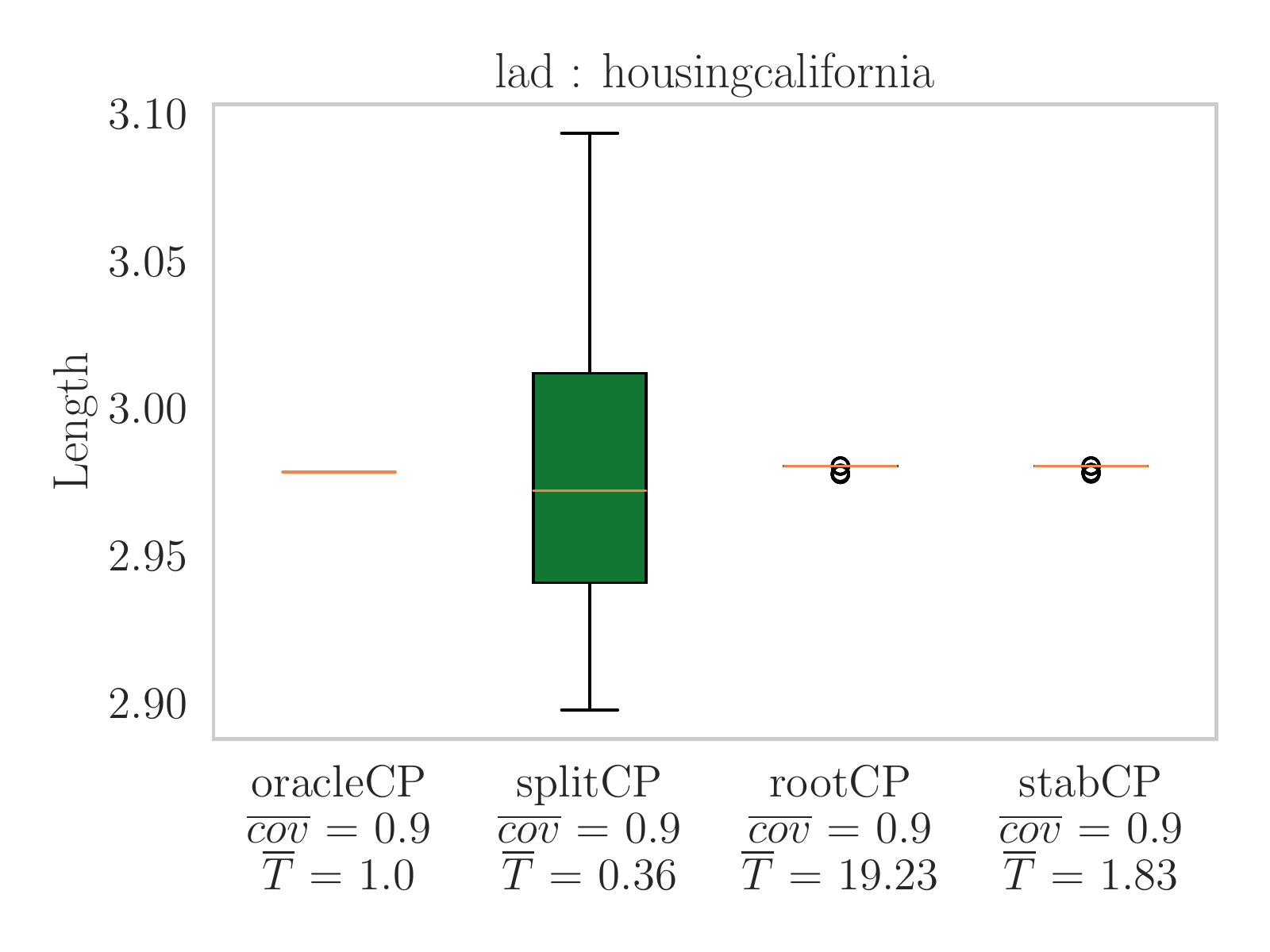}}
\subfigure[\texttt{Friedman1} $(500, 100)$]{\includegraphics[width=0.49\linewidth]{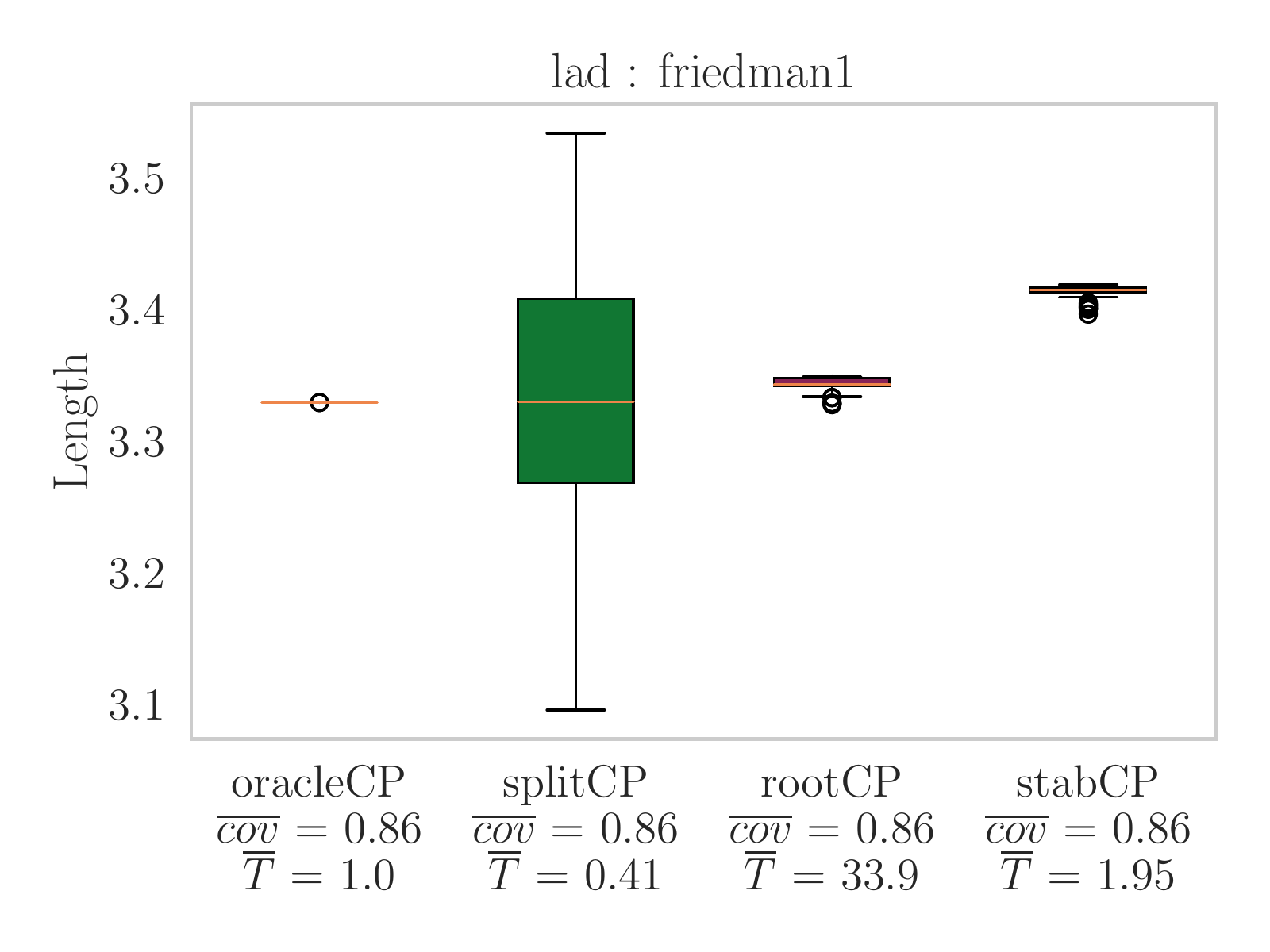}}
\caption{Benchmarking conformal sets for the least absolute deviation regression models with a ridge regularization on real datasets. We display the lengths of the confidence sets over $100$ random permutation of the data. We denoted $\overline{cov}$ the average coverage and $\overline{T}$ the average computational time normalized with the average time for computing \texttt{oracleCP} which requires a single full data model fit. The full and exact CP set can always be approximated with a fine (costly) grid discretization of the output space and can then be used as a default baseline. Here, it is represented by rootCP since in the examples displayed the full CP set turns out to be an interval and then rootCP is equal to the full CP up to $\epsilon_r$ digit precision on the decimals; we used a default value of $\epsilon_r=10^{-4}$.\label{fig:benchmarks}}
\end{figure*}

We conduct all the experiments with a coverage level of $0.9$ \ie $\alpha = 0.1$. For comparisons, we run the evaluations on $100$ repetitions of examples and display the average of the following performance statistics for different methods: the empirical coverage \ie the percentage of times the prediction set contains the held-out target $y_{n+1}$, the length of the confidence intervals, and the execution time. We compare the method we propose \texttt{stabCP} with the conformal prediction set computed with an oracle method defined below, with a splitting strategy \texttt{splitCP} \citep{Papadopoulos_Proedrou_Vovk_Gammerman02, Lei_GSell_Rinaldo_Tibshirani_Wasserman18}, and finally with an estimation of the $\alpha$-level set of the conformity function \texttt{rootCP} \citep{Ndiaye_Takeuchi21} by root-finding solvers. Note that, when the conformal set is a bounded interval, \texttt{stabCP} approximates \texttt{rootCP} as in \Cref{fig:illustration_stab}. In all experiments conducted, we observed that the exact conformal prediction set is indeed an interval. Although this is often the case, we recall that it might not be in general. Just for the comparisons, we therefore estimated the \texttt{stabCP} sets with a root-finding solver as well, as if a closed form solution was not available. A \texttt{python} package with our implementation is available at 
\url{https://github.com/EugeneNdiaye/stable_conformal_prediction}
where additional numerical experiments (\eg using large pre-trained neural net) and benchmarks will be provided. \looseness=-1

\paragraph{\texttt{oracleCP}.}
To define an oracle prediction set as reference, we follow in \citep{Ndiaye_Takeuchi19, Ndiaye_Takeuchi21} and assume that the unavailable target variable $y_{n+1}$ is observed by the algorithm. Hence, we define the oracle scores
\begin{align*}
\forall i \in [n], \quad E_{i}^{\rm{or}} &= S(y_i,\, \mu_{y_{n+1}}(x_i)) \enspace,\\
E_{n+1}^{\rm{or}}(z) &= S(z,\, \mu_{y_{n+1}}(x_{n+1})) \enspace,
\end{align*}
and the oracle conformal set as
\begin{align*}
\Gamma_{\rm{oracle}}^{(\alpha)}(x_{n+1}) &:= \{z: \pi_{\rm{oracle}}(z) \geq \alpha\} \enspace,\\
\pi_{\rm{oracle}}(z) &= 1 - \frac{1}{n + 1}\sum_{i=1}^{n+1} \mathbb{1}_{E_{i}^{\rm{or}} \leq E_{n+1}^{\rm{or}}(z)} \enspace.
\end{align*}

\paragraph{\texttt{splitCP}.} 
A popular and classical estimation of conformal prediction sets relies on splitting the dataset. The split conformal prediction set introduced in \citep{Papadopoulos_Proedrou_Vovk_Gammerman02}, separates the model fitting and the calibration steps. Let us define
\begin{itemize}
\item the training set 
$$\mathcal{D}_{\rm{tr}} = \{(x_1, y_1), \cdots, (x_m, y_m)\} \text{ with } m < n \enspace,$$
\item the calibration set 
$$\mathcal{D}_{\rm{cal}} = \{(x_{m+1}, y_{m+1}), \cdots, (x_n, y_n)\} \enspace.$$
\end{itemize}
Then the model is fitted on the training set $\mathcal{D}_{\rm{tr}}$ to get $\mu_{\rm{tr}}(\cdot)$ and define the score function on the calibration set $\mathcal{D}_{\rm{cal}}$:
\begin{align*}
\forall i \in [m+1, n], \quad  E_{i}^{\rm{cal}} &= S(y_i,\, \mu_{\rm{tr}}(x_i)) \enspace,\\
E_{n+1}^{\rm{cal}}(z) &= S(z,\, \mu_{\rm{tr}}(x_{n+1})) \enspace.
\end{align*}
Thus, we obtain the split conformal set as
\begin{align*}
\Gamma_{\rm{split}}^{(\alpha)}(x_{n+1}) &= \{z: \pi_{\rm{split}}(z) \geq \alpha \} \enspace,\\
\pi_{\rm{split}}(z) &= 1 - \frac{1}{n - m + 1}\sum_{i=m+1}^{n+1} \mathbb{1}_{E_{i}^{\rm{cal}} \leq E_{n+1}^{\rm{cal}}(z)} \enspace.
\end{align*}

\section{Discussion}
The data splitting approach does not use all the data in the training phase. It is often less statistically efficient, and its interval length can vary greatly depending on the additional randomness of the split. On the contrary, our approach does not use any splitting, provides an approximation of the exact conformal set that is pretty accurate depending on the stability of the model as can be observed on \Cref{fig:benchmarks}. All this requires one and only one data fitting of the underlying learning model. You will notice that \texttt{splitCP} and \texttt{stabCP} have the same structure and are simple intervals if the score functions are reasonably simple. The presence of data splitting in the former is replaced by an additional stability term in the latter. So if the predictive model is very stable, \texttt{stabCP} benefits from all the data, and very little regularization to get closer to the oracle version that includes the unknown target $y_{n+1}$. To date, we are not aware of any other method that can obtain a full conformal prediction set with such computational efficiency while ensuring no loss on the coverage guarantee.
We observe on the benchmarks with real data \Cref{fig:benchmarks} that the \texttt{stabCP} is often very similar to the \texttt{rootCP} which approximates with a very fine precision the exact set (under the assumption that the latter is a bounded interval). Our proposal has the net advantage of being twenty to thirty times faster and can often be computed in closed form.\looseness=-1

However, as can be seen in \Cref{fig:illustration_stab}, our proposed method loses precision when the sample size is small. This reflects the difficulty of estimating a reliable confidence set in the absence of algorithmic stability. At the same time, it is difficult to have an algorithm that generalizes well with so little training data. Otherwise, when the size of the data is important, the influence of the stability bound is very little felt because they are often of the order of magnitude $O(1/n)$.\looseness=-1

Finally, a notorious limitation is that one needs to know explicitly the stability bounds. This can be difficult to estimate for some models. The bounds we presented in \Cref{subsec:Stability_bounds} cover a wide range of examples and can be completed by bounds displayed in \cite{Hardt_Recht_Singer16, Bassily_Feldman_Guzman_Talwar20, Lei_Yang_Yang_Ying21, Klochkov_Zhivotovskiy21} for stochastic gradient descent. Even if the notion of stability required here is slightly different, any error bound on the estimator can be naturally converted into a stability bound for conformal prediction sets. So we don't lose much generality as long as we make the assumption that the score function is sufficiently regular \eg Lipschitz. This is precisely what allowed us to obtain the bounds presented in this article. Yet, if the parameter of the predictive model is defined iteratively by a gradient descent process on a non-convex objective function, obtaining stability bounds becomes quite delicate. Moreover, the Lipschitz constant of neural network objectives can be poorly estimated. In this case, our approach could not be applied safely or could lead to uninformative confidence intervals.
The splitting strategy remains more flexible. It would be interesting to study fine combinations of data splitting and inclusion of stability bounds to reduce the size of the confidence intervals and their variance while being pivotal to explicit stability bounds.\looseness=-1

\section*{Acknowledgements}
We warmly thank the reviewers for their insightful comments and contributions to improve the presentation of this paper. We also thank Elvis Dohmatob and Xiaoming Huo for proofreading and for pointing out mistakes in notations.\looseness=-1
\bibliography{references}
\bibliographystyle{icml2022}

\newpage
\onecolumn

\section{Appendix}

In these supplementary notes, we complete some proofs and bring algorithmic precisions of our approach as well as additional numerical experiments.






\subsection{StabCP Set with General Score Function}

We explain a simple procedure to approximate the set prediction with an arbitrary pre-defined accuracy. We recall that
\begin{align*}
    \Gamma_{\rm{up}}^{(\alpha)}(x_{n+1}) &= \{z: \pi_{\rm{up}}(z, \hat z) \geq \alpha \} \\
    &= \{z: S(z, \mu_{\hat z}(x_{n+1})) \leq Q_{1-\alpha}(\hat z) + \tau_{n+1}\} \enspace,
\end{align*}
which is a convex set when the level-set of the score function is convex. By simplicity, we assume that the score function is such that $\Gamma_{\rm{up}}^{(\alpha)}(x_{n+1})$ is a bounded interval. \Cref{alg:stabCP_general} summarizes the process.

\begin{algorithm}[H]
    \caption{Stable conformal prediction set for score function with convex level-set}
    \label{alg:stabCP_general}
 \begin{algorithmic}
    \STATE {\bfseries Input:} data $\{(x_1, y_1), \ldots, (x_n, y_n)\}$ and $x_{n+1}$
    \STATE Coverage level $\alpha \in (0, 1)$, any estimate $\hat z \in \bbR$
    \STATE Stability bounds $\tau_1, \ldots, \tau_{n+1}$ of the learning algorithm
    \STATE {\bfseries Output:} prediction interval at $x_{n+1}$
    \STATE Fit a model $\mu_{\hat z}$ on the training data $\Data_{n+1}(\hat z)$
    \STATE Compute the quantile $Q_{1-\alpha}(\hat z) = U_{(\lceil (1-\alpha)(n+1) \rceil)}(z, \hat z)$ where the $U_i$s are defined in \Cref{prop:bounding_pi}
    \STATE Compute $\Gamma_{\rm{up}}^{(\alpha)}(x_{n+1}) = [\ell_{\alpha}(x_{n+1}), u_{\alpha}(x_{n+1})]$ up to $\epsilon_r > 0$ tolerance error as follow:
    \begin{enumerate}
        \item find $z_{\min} < z_0 < z_{\max}$ such that 
        \begin{equation}\label{eq:initialization_condition}
        \pi_{\rm{up}}(z_{\min}, \hat z) < \alpha < \pi_{\rm{up}}(z_{0}, \hat z) \text{ and } \alpha > \pi_{\rm{up}}(z_{\max}, \hat z) \enspace.
        \end{equation}
        \item Perform a bisection search in $[z_{\min}, z_0]$. It will output a point $\hat \ell$ such that $\ell_{\alpha}(x_{n+1})$ belongs to $[\hat \ell \pm \epsilon_r]$ after at most $\log_2(\frac{z_0 - z_{\min}}{\epsilon_r})$ iterations.
        
        \item Perform a bisection search in $[z_0, z_{\max}]$. It will output a point $\hat u$ such that $u_{\alpha}(x_{n+1})$ belongs to $[\hat u \pm \epsilon_r]$ after at most $\log_2(\frac{z_{\max} - z_0}{\epsilon_r})$ iterations.
        \end{enumerate}
    \STATE {\bfseries Return:} $\Gamma_{\rm{up}}^{(\alpha)}(x_{n+1})$
 \end{algorithmic}
 \end{algorithm}

\subsection{Stability of the Linear Interpolation}

We discussed in \Cref{subsec:Batch_Approximation} the potential gain in accuracy when approximating the conformity function using not a single point but a batch of points. Here we justify the interpolation approach when the score function $S$ is sufficiently regular.

\begin{proposition}
Let us assume that the score function $S(q, \cdot)$ is $\gamma$-Lipschitz for any $q$, and consider the interpolated prediction model defined as 
\begin{equation}
    \label{eq:piece_linear_interpolation}
    \tilde\mu_{z} =
    \begin{cases}
    \frac{\hat z_1 - z}{\hat z_1 - z_{\min}} \mu_{z_{\min}} + \frac{z_{\min} - z}{\hat z_1 - z_{\min}} \mu_{\hat z_1} &\text{ if } z \leq z_{\min} \enspace, \\
    \frac{z - \hat z_{t+1}}{\hat z_t - \hat z_{t+1}} \mu_{\hat z_t} + \frac{z - \hat z_{t}}{\hat z_{t+1} - \hat z_t} \mu_{\hat z_{t+1}} & \text{ if } z \in [\hat z_t, \hat z_{t+1}]\enspace, \\
    \frac{z - \hat z_d}{z_{\max} - \hat z_d} \mu_{z_{\max}} + \frac{z_{\max} - z}{z_{\max} - \hat z_d} \mu_{\hat z_d} &\text{ if } z \geq z_{\max}\enspace,
    \end{cases}
    \end{equation}
where $\mu_{\cdot}$ is stable according to \Cref{def:Algorithmic_Stability}. It holds
\begin{equation} \label{eq:stability_linear_interpolation}
|S(q, \tilde \mu_{z}(x_i)) - S(q, \tilde \mu_{z_0}(x_i))|   \leq 3\gamma\tau_i \enspace.
\end{equation}
\end{proposition}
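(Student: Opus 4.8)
The plan is to reduce \eqref{eq:stability_linear_interpolation} to three applications of \Cref{def:Algorithmic_Stability} by telescoping through the model predictions at the interpolation knots. Fix an index $i$, a point $q$, and two candidates $z, z_0$, and take the convention $\hat z_0 := z_{\min}$, $\hat z_{d+1} := z_{\max}$. The starting observation is that, for $z \in [\hat z_t, \hat z_{t+1}]$, the definition \eqref{eq:piece_linear_interpolation} exhibits $\tilde\mu_z(x_i)$ as a convex combination of two consecutive knot predictions,
\[
\tilde\mu_z(x_i) = \lambda\, \mu_{\hat z_t}(x_i) + (1-\lambda)\, \mu_{\hat z_{t+1}}(x_i), \qquad \lambda := \tfrac{\hat z_{t+1} - z}{\hat z_{t+1} - \hat z_t} \in [0,1] ,
\]
so that $\tilde\mu_z(x_i)$ sits on the segment joining $\mu_{\hat z_t}(x_i)$ and $\mu_{\hat z_{t+1}}(x_i)$; write $v(z) := \mu_{\hat z_{t(z)}}(x_i)$ for, say, the left endpoint of that segment, and likewise $v(z_0)$ for $z_0$.

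The core estimate is that the interpolation error costs one ``stability unit''. Since $\tilde\mu_z(x_i) - v(z) = (1-\lambda)\big(\mu_{\hat z_{t(z)+1}}(x_i) - \mu_{\hat z_{t(z)}}(x_i)\big)$ and $S(q,\cdot)$ is $\gamma$-Lipschitz,
\[
\big| S(q,\tilde\mu_z(x_i)) - S(q,v(z)) \big| \;\le\; \gamma\,(1-\lambda)\,\big| \mu_{\hat z_{t(z)+1}}(x_i) - \mu_{\hat z_{t(z)}}(x_i) \big| \;\le\; \gamma\tau_i ,
\]
where the last step is the stability of $\mu_\cdot$ applied to the two data sequences defining the consecutive knots (for the canonical score $S(a,b)=|a-b|$ this is immediate — take the first argument equal to $\mu_{\hat z_{t(z)}}(x_i)$ — and it is exactly the form in which the bounds of \Cref{subsec:Stability_bounds} are stated). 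The same bound holds for $z_0$. Combining with the triangle inequality and the fact that $|S(q,v(z)) - S(q,v(z_0))| \le \tau_i$ directly by \Cref{def:Algorithmic_Stability} (both arguments being $\mu_\cdot(x_i)$ at relabelled datasets), one gets
\[
\big| S(q,\tilde\mu_z(x_i)) - S(q,\tilde\mu_{z_0}(x_i)) \big| \;\le\; \gamma\tau_i + \tau_i + \gamma\tau_i \;\le\; 3\gamma\tau_i ,
\]
which is \eqref{eq:stability_linear_interpolation} (absorbing the constant, using $\gamma \ge 1$ for the scores of interest). The two outer branches $z \le z_{\min}$ and $z \ge z_{\max}$ of \eqref{eq:piece_linear_interpolation} are handled by the identical argument with $\hat z_1$ replaced by $z_{\min}$, resp.\ $\hat z_d$ by $z_{\max}$.

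The step I expect to be the main obstacle is the interpolation-error estimate, i.e.\ passing from \Cref{def:Algorithmic_Stability} (stated at the level of the score $S$) to a bound on the raw prediction increment $|\mu_{\hat z_{t}}(x_i) - \mu_{\hat z_{t+1}}(x_i)|$ between neighbouring knots. This is transparent for $S(a,b)=|a-b|$ and is built into the explicit constants of \Cref{subsec:Stability_bounds}, but in full generality one must be careful about what ``stable'' actually delivers. A secondary subtlety is the extrapolation regime: outside $[z_{\min},z_{\max}]$ the weights in \eqref{eq:piece_linear_interpolation} become affine but not convex, so $\tilde\mu_z(x_i)$ may leave the convex hull of the knot predictions; one must therefore either restrict $z,z_0$ to $[z_{\min},z_{\max}]$ or verify that the relevant weights stay bounded, since otherwise no uniform constant multiple of $\tau_i$ can be achieved.
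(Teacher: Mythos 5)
Your proof is correct and structurally parallel to the paper's, but it routes the triangle inequality through different intermediate points: the paper telescopes $\tilde\mu_z \to \mu_z \to \mu_{z_0} \to \tilde\mu_{z_0}$, using the \emph{exact} (non-interpolated) models at the query points $z, z_0$ as waypoints and bounding each interpolation-error term $|\tilde\mu_z(x_i) - \mu_z(x_i)|$ by a convex combination of $|\mu_z(x_i) - \mu_{\hat z_t}(x_i)|$ and $|\mu_z(x_i) - \mu_{\hat z_{t+1}}(x_i)|$, whereas you telescope through the \emph{knot} models $\mu_{\hat z_{t(z)}}, \mu_{\hat z_{t(z_0)}}$ and bound the deviation of $\tilde\mu_z$ from the left knot by $(1-\lambda)$ times the inter-knot increment. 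Both routes yield $\gamma\tau_i + \tau_i + \gamma\tau_i$ and both then need $\gamma \ge 1$ (or a slightly weaker statement such as $(2\gamma+1)\tau_i$) to reach the advertised $3\gamma\tau_i$; the paper silently absorbs this by writing $\gamma(\mathcal{E}_z^i + \tau_i + \mathcal{E}_{z_0}^i)$. The two caveats you flag are genuine and, notably, are present but unacknowledged in the paper's own proof as well: (i) \Cref{def:Algorithmic_Stability} bounds score differences $|S(q,\mu_z(x_i)) - S(q,\mu_{z_0}(x_i))|$, yet both proofs need the prediction-level bound $|\mu_z(x_i) - \mu_{z'}(x_i)| \le \tau_i$, which only follows for scores like $S(a,b)=|a-b|$ or from the raw estimator bounds of \Cref{subsec:Stability_bounds}; and (ii) on the extrapolation branches of \Cref{eq:piece_linear_interpolation} the weights do not form a convex combination (as written they do not even sum to one), so the convexity argument — the paper's list of $\alpha_t$ values included — only applies verbatim on $[z_{\min}, z_{\max}]$. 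Your version arguably makes the mechanism more transparent since the interpolation error is controlled by a single inter-knot stability increment rather than by two comparisons against an exact model that is never actually computed.
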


\begin{proof}
Using the triangle inequality, we have
\begin{align*}
\tilde{\mathrm{stab}} &:= |S(q, \tilde \mu_{z}(x_i)) - S(q, \tilde \mu_{z_0}(x_i))| \\
&\leq |S(q, \tilde \mu_{z}(x_i)) - S(q, \mu_{z}(x_i))| 
 + |S(q, \mu_{z}(x_i)) - S(q, \mu_{z_0}(x_i))| 
 + |S(q, \mu_{z_0}(x_i)) - S(q, \tilde \mu_{z_0}(x_i))| \enspace.
\end{align*}

If $\mu_{\cdot}$ is stable, then the second term of the right hand side of the previous inequality is bounded by $\tau_i$. Now, assuming that $S$ is $\gamma$-Lipschitz in its second argument, for any $q$, we have:
$$|S(q, \tilde \mu_{z}(x_i)) - S(q, \mu_{z}(x_i))| \leq \gamma \mathcal{E}_{z}^{i}\enspace,$$
where
\begin{align*}
\mathcal{E}_{z}^{i} &= |\mu_z(x_i) - \tilde \mu_z(x_i)|\\
 &\leq |\mu_z(x_i) - \alpha_t \mu_{z_t}(x_i) - (1 - \alpha_t)\mu_{z_{t+1}}(x_i)| \\
&\leq \alpha_t|\mu_z(x_i) - \mu_{z_t}(x_i)| + (1 - \alpha_t)|\mu_z(x_i) - \mu_{z_{t+1}}(x_i)| \\
&\leq \alpha_t \tau_i + (1 - \alpha_t) \tau_i = \tau_i \enspace,
\end{align*}
with $\alpha_t \in \left\{\frac{z_1 - z}{z_1 - z_{\min}}, \frac{z - z_{t+1}}{z_t - z_{t+1}}, \frac{z - z_d}{z_{\max} - z_d}\right\}$ is the scaling of interpolation points. Thus, we obtain
$$\tilde{\mathrm{stab}} \leq \gamma (\mathcal{E}_{z}^{i} + \tau_i +  \mathcal{E}_{z_{0}}^{i}) \leq 3\gamma\tau_i \enspace.$$
\end{proof}

The upper and lower approximation of the conformity function obtained with the interpolated model fit along with stability bounds are defined as:
\begin{align*}
\tilde \pi_{\rm{lo}}(z) &= 1 - \frac{1}{n+1} \sum_{i=1}^{n+1} \mathbb{1}_{\tilde L_i(z) \leq \tilde U_{n+1}(z)} \enspace,\\
\tilde \pi_{\rm{up}}(z) &= 1 - \frac{1}{n+1} \sum_{i=1}^{n+1} \mathbb{1}_{\tilde U_i(z) \leq \tilde L_{n+1}(z)} \enspace,
\end{align*}
where for any index $i$ in $[n + 1]$, using the stability bound in \Cref{eq:stability_linear_interpolation}, we define
\begin{align*}
&\tilde L_i(z) = \tilde E_i(z) - 3\gamma\tau_i \text{ and } \tilde U_i(z) = \tilde E_i(z) + 3\gamma\tau_i \enspace, \\
&\tilde E_i(z) =  S(y_i, \tilde \mu_{z}(x_{i})) \text{ and } \tilde E_{n+1}(z) =  S(z, \tilde \mu_{z}(x_{i})) \enspace.
\end{align*}

In general, approximating the entire model path with respect to output/label changes using finite grid points is not always safe for calculating the conformal prediction set because it breaks the exchangeability assumptions of the data set. Incorporating the stability bound will regularize the conformity function to restore the validity of the method. However, the procedure proves to be quite robust to wrong estimation of the stability bounds. The experiments in \cite{Ndiaye_Takeuchi21} are conducted with estimates  $\tau_i=0$ and the prediction sets obtained are essentially the same as the exact one. More detailed experiments will be proposed in our github implementation.

\subsection{Additional Experiments}

In this appendix, we add some numerical experiments to illustrate how \texttt{stabCP} can behave when using an estimator that is not defined as an \texttt{argmin} but rather as an output of an iterative process. In this case, we use a Multi-Layer Perceptron regressor trained with $T=\texttt{n\_iter}$ number of gradient descent iterations. Recent analyses  \cite{Hardt_Recht_Singer16} have shown that any model trained with the stochastic gradient method in a reasonable amount of time achieves low generalization error. The proof of these results consists in showing that the estimator verifies a stability condition when the input data are slightly perturbed. The bounds on the iterates of stochastic gradient methods are often proportional to $\frac{T}{n}$. They also depend on the Lipschitz regularity constants which unfortunately can be hard to estimate in practice. Here, we will be satisfied with the order of magnitude and evaluate the behavior of the conformity function according to the number of iterations performed. We run the experiments on two different datasets with a sample size of $442$ and $20640$.

\begin{figure*}[ht]
\centering
\includegraphics[width=0.49\textwidth]{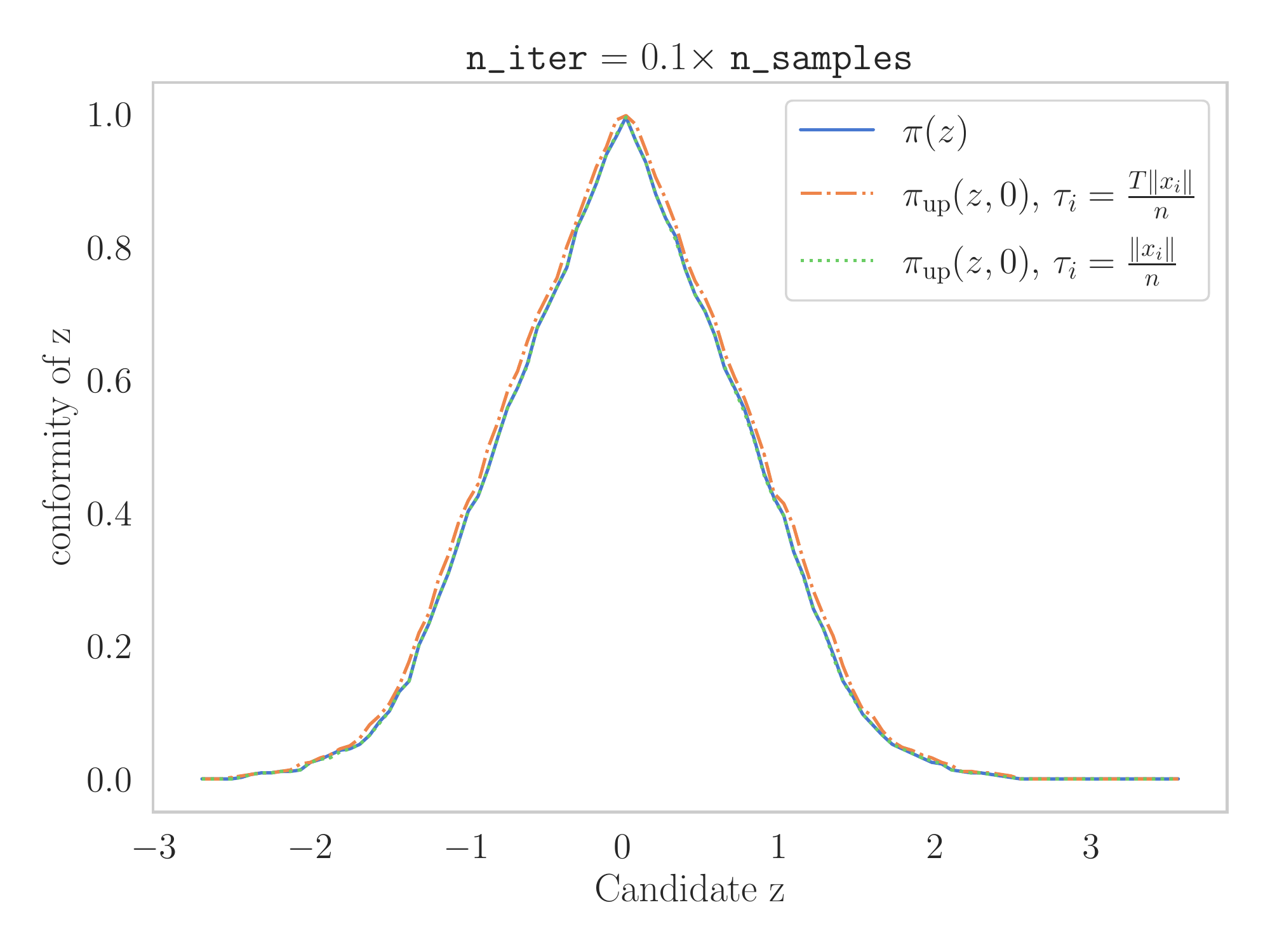}
\includegraphics[width=0.49\textwidth]{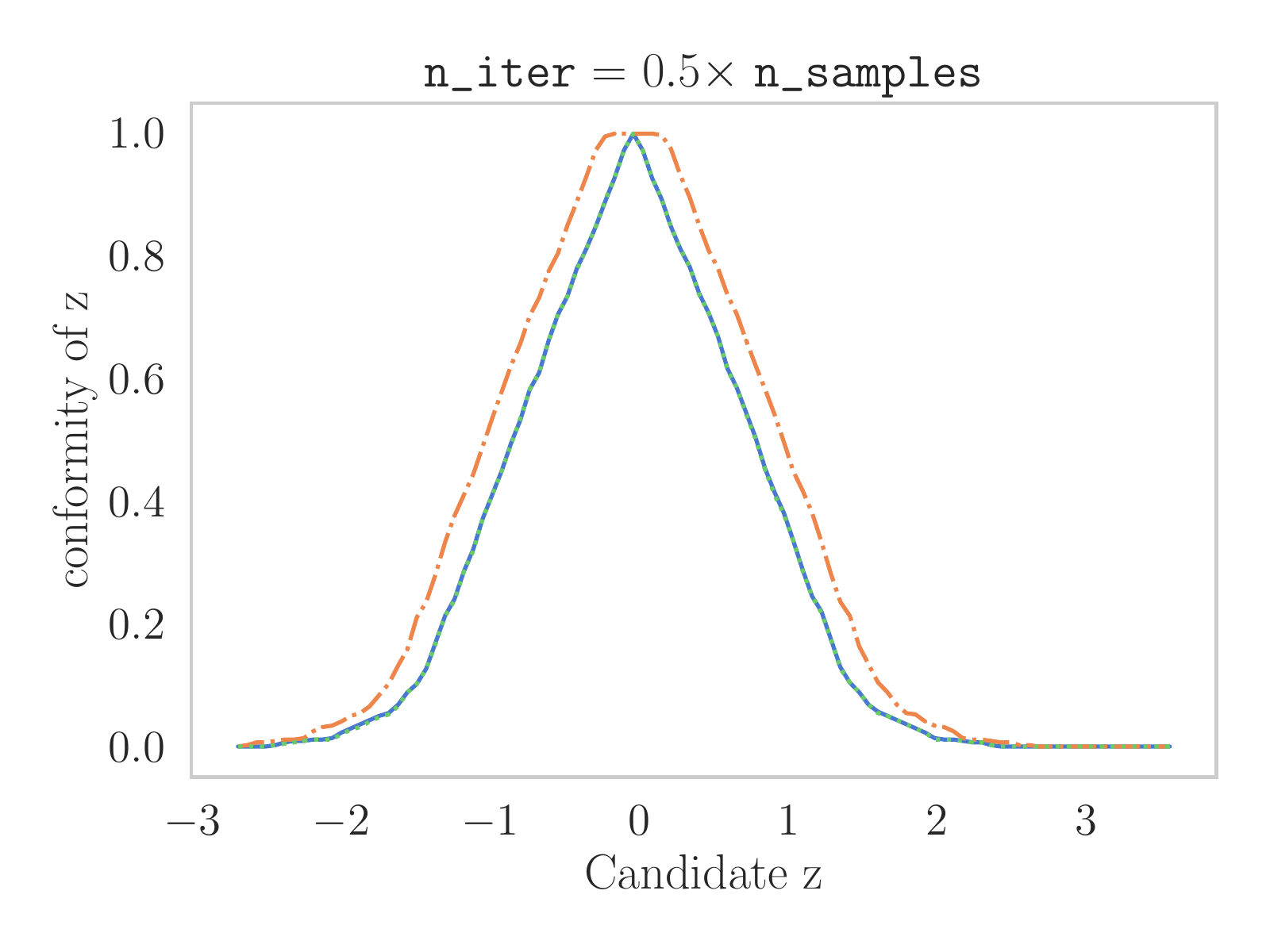}
\includegraphics[width=0.49\textwidth]{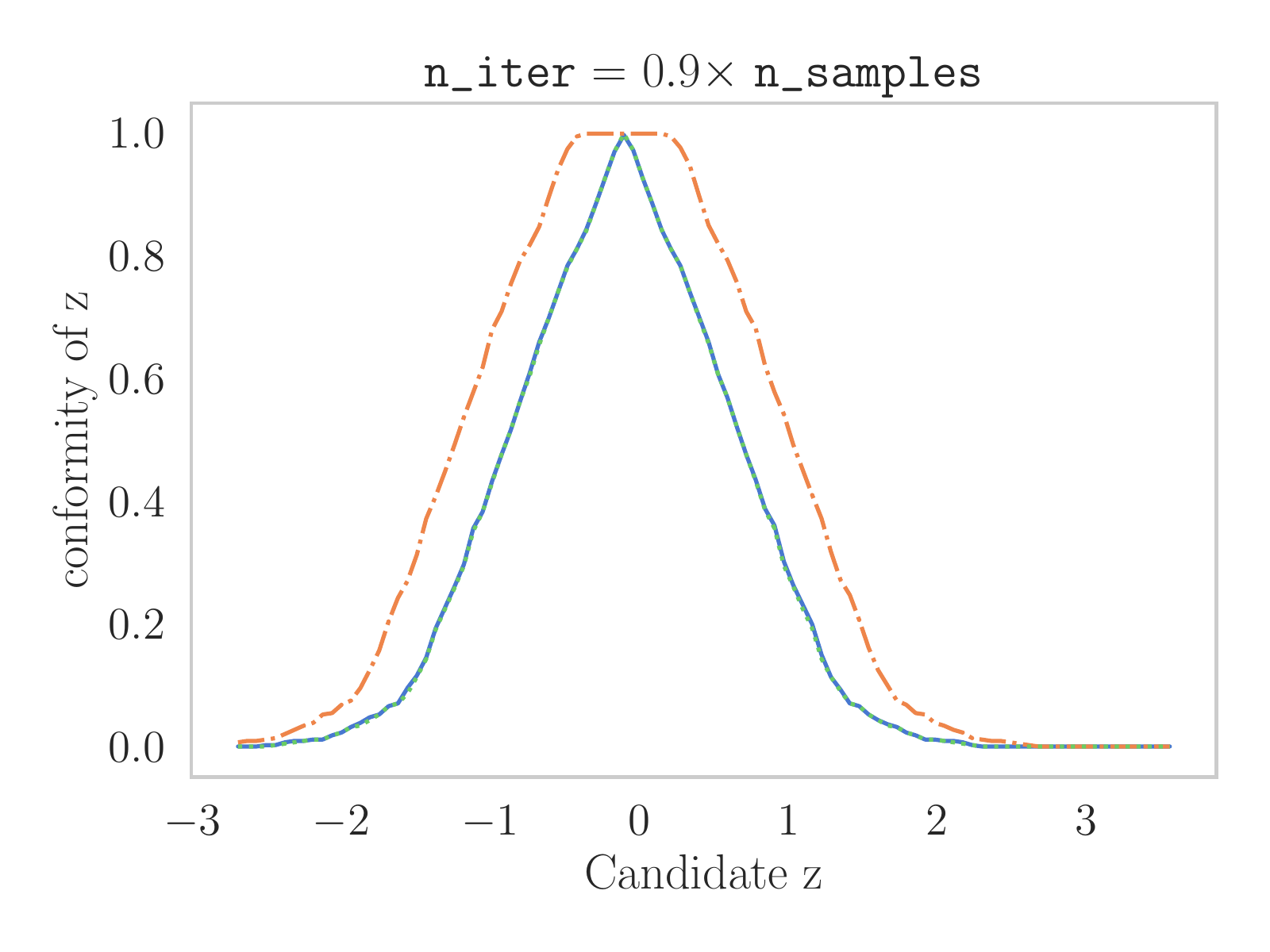}
\includegraphics[width=0.49\textwidth]{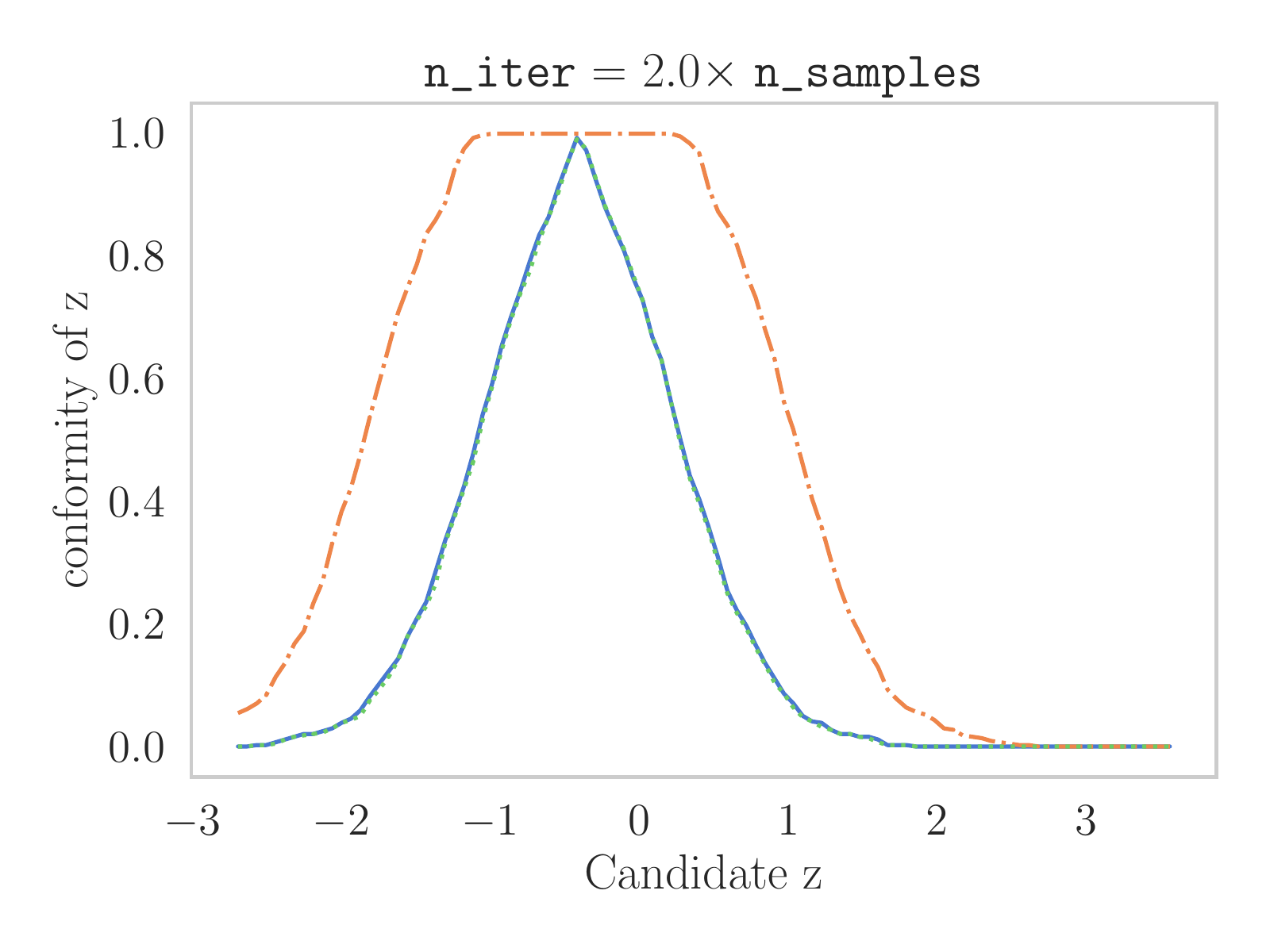}
\caption{Illustration of different conformity functions with respect to a sequence of stability bounds. We observe that by merely staking an order of magnitude $O(1/n)$ as stability bound, gives a good estimate of the conformal prediction set even if the bound is not safe. These experiments are conducted with a Multi-Layer Perceptron regressor on the \texttt{Diabetes} $(442, 10)$ dataset, trained with $T=\texttt{n\_iter}$ iterations of Stochastic Gradient Descent. \label{MLP_diabetes_different_stability}}
\end{figure*}

\begin{figure*}[ht]
\centering
\includegraphics[width=0.49\textwidth]{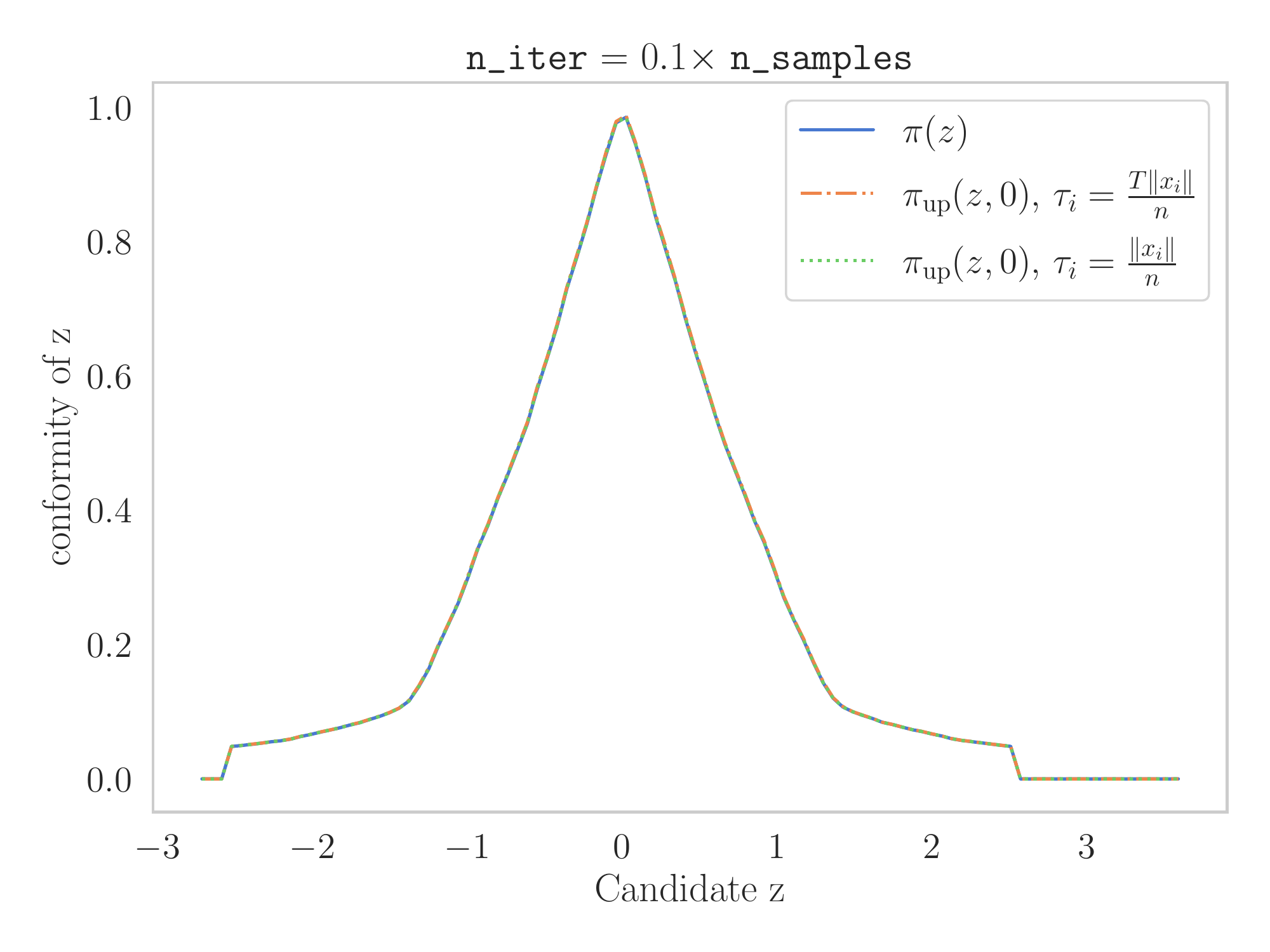}
\includegraphics[width=0.49\textwidth]{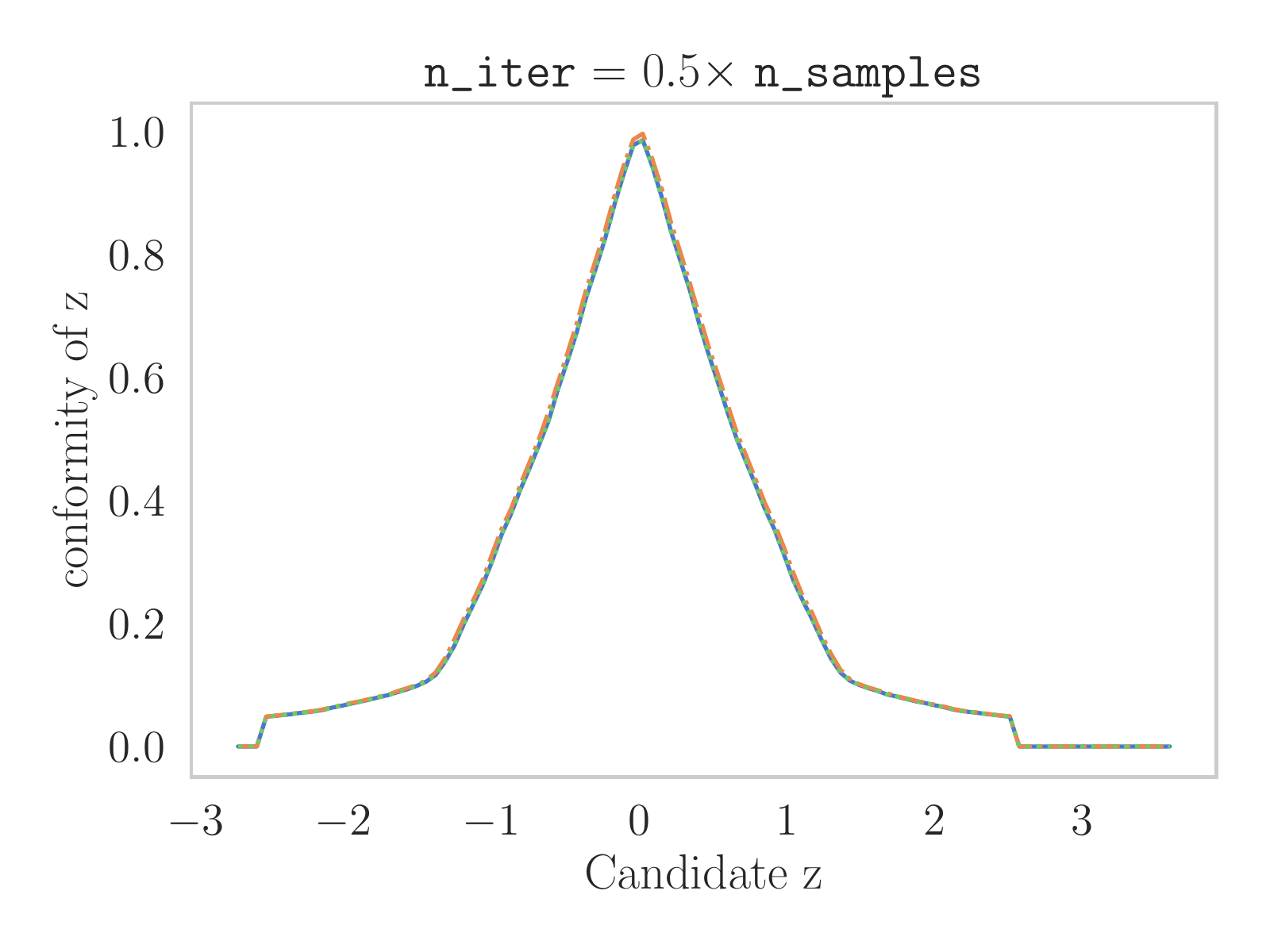}
\includegraphics[width=0.49\textwidth]{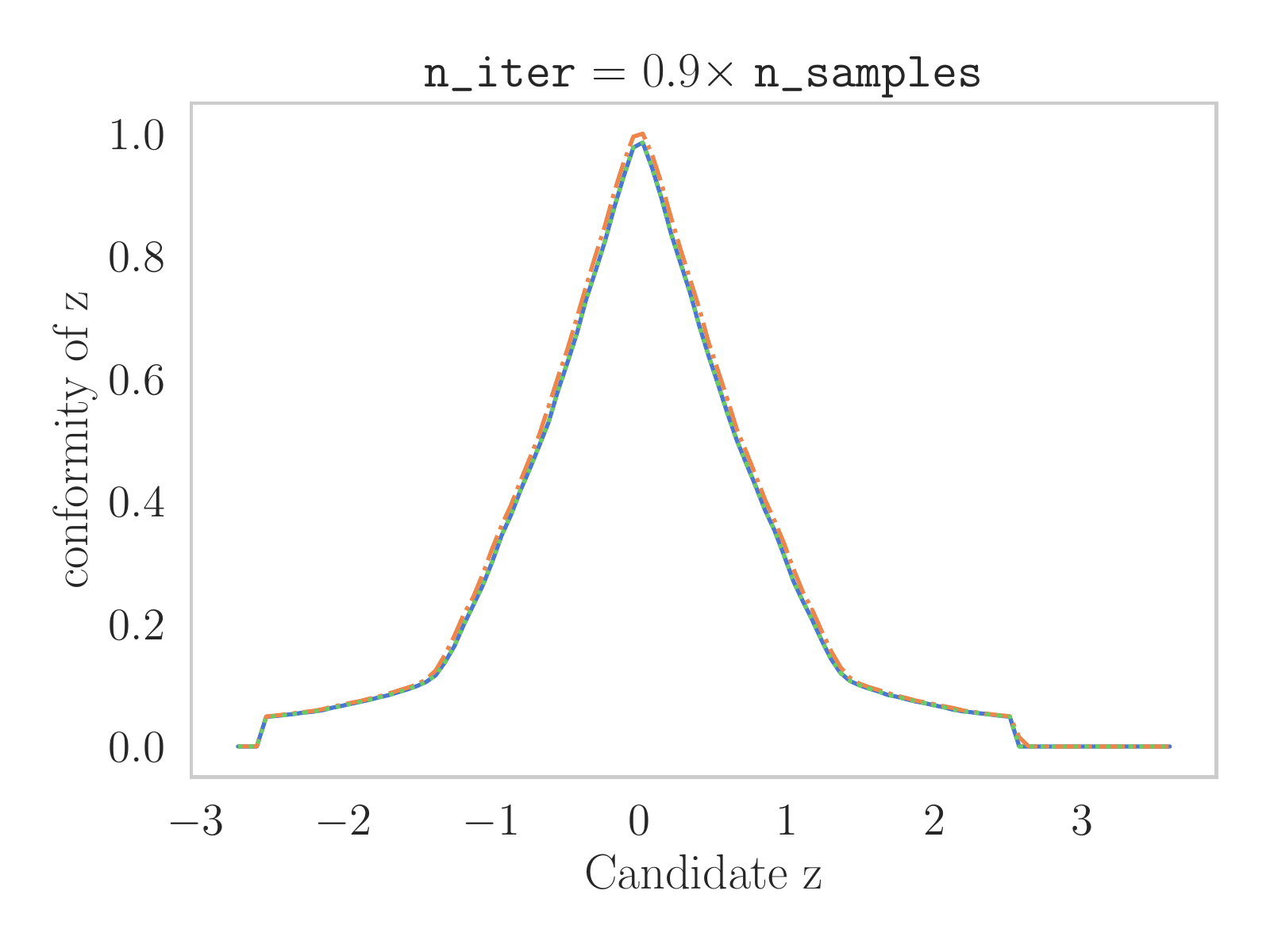}
\includegraphics[width=0.49\textwidth]{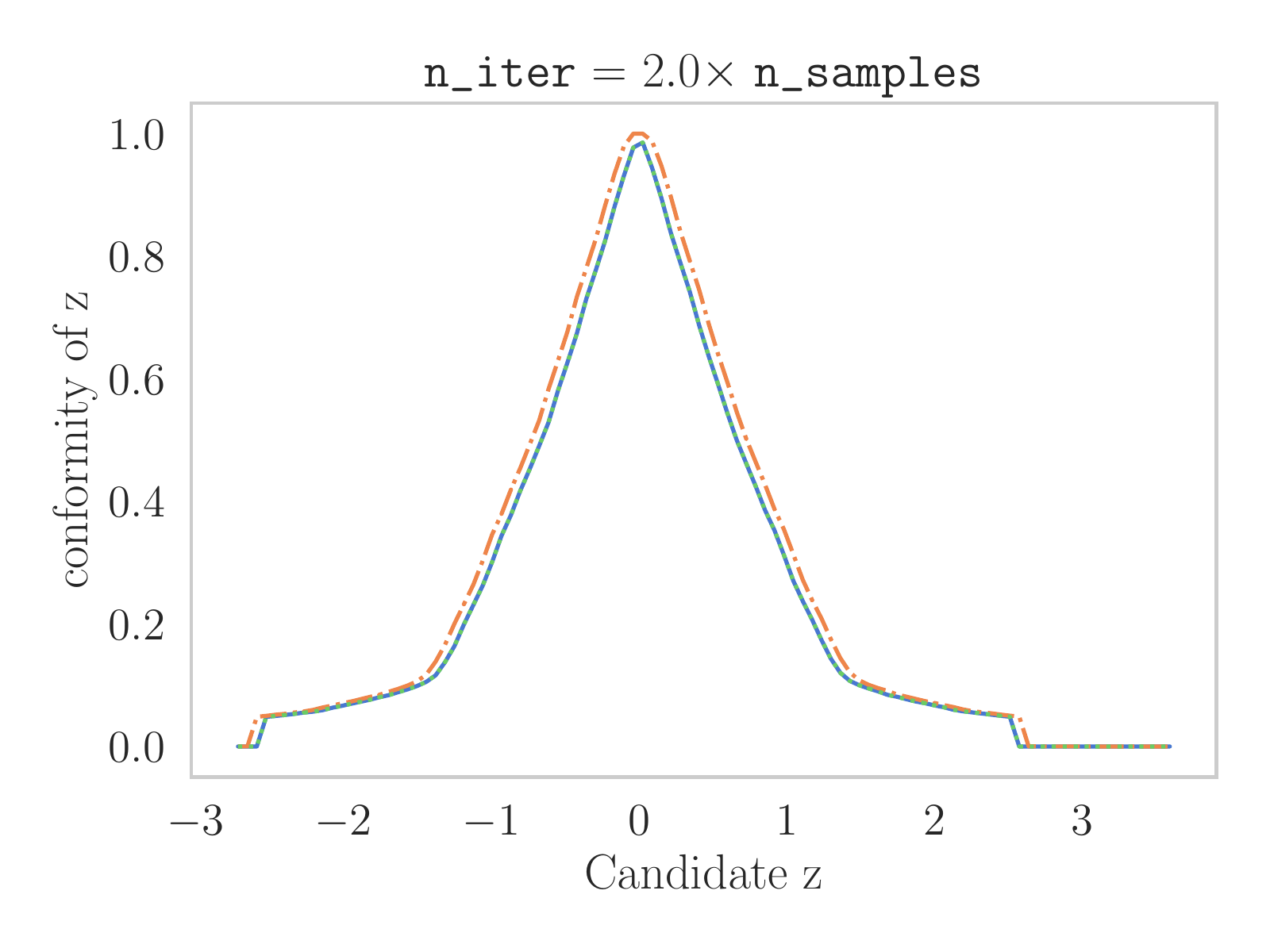}
\caption{Illustration of different conformity functions with respect to a sequence of stability bounds. We observe that by merely staking an order of magnitude $O(1/n)$ as stability bound, gives a good estimate of the conformal prediction set even if the bound is not safe. These experiments are conducted with a Multi-Layer Perceptron regressor on the \texttt{Housingcalifornia} $(20640, 8)$ dataset, trained with $T=\texttt{n\_iter}$ iterations of Stochastic Gradient Descent. \label{MLP_housingcalifornia_different_stability}}
\end{figure*}

\begin{figure*}
\centering
\subfigure[\texttt{Boston} $(506, 13)$]{\includegraphics[width=0.49\textwidth]{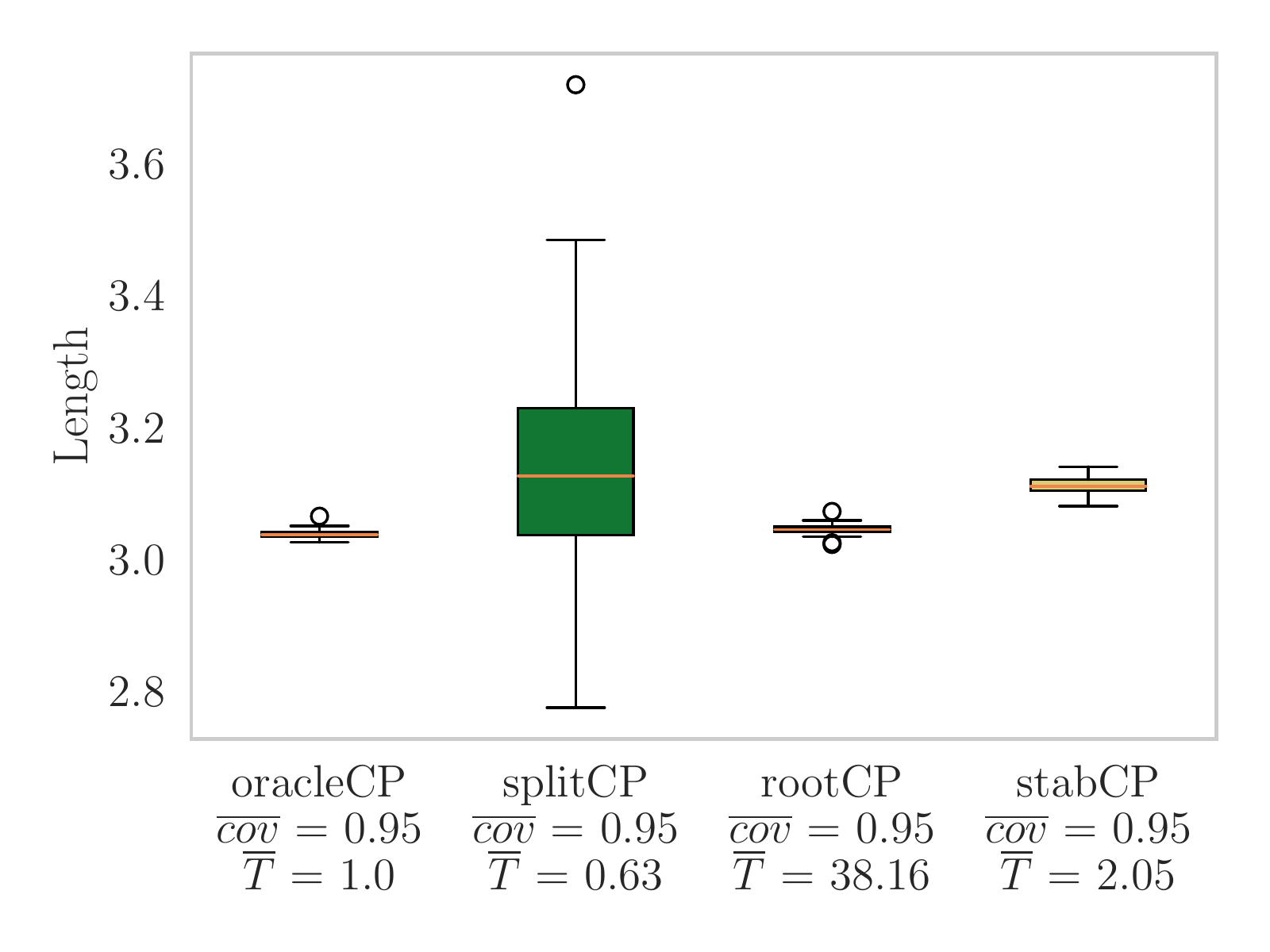}}
\subfigure[\texttt{Diabetes} $(442, 10)$]{\includegraphics[width=0.49\textwidth]{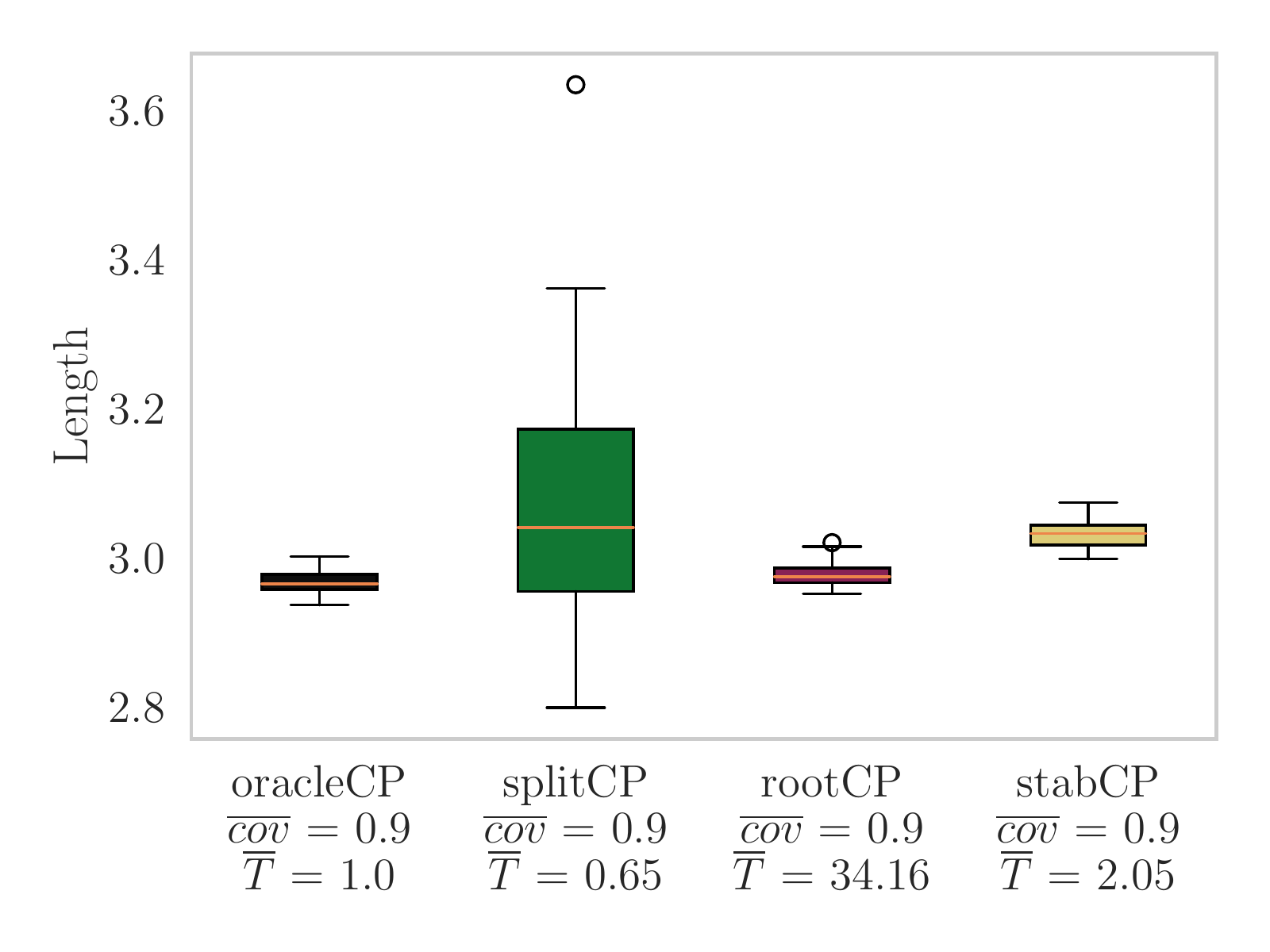}}
\subfigure[\texttt{Housingcalifornia} $(20640, 8)$]{\includegraphics[width=0.49\textwidth]{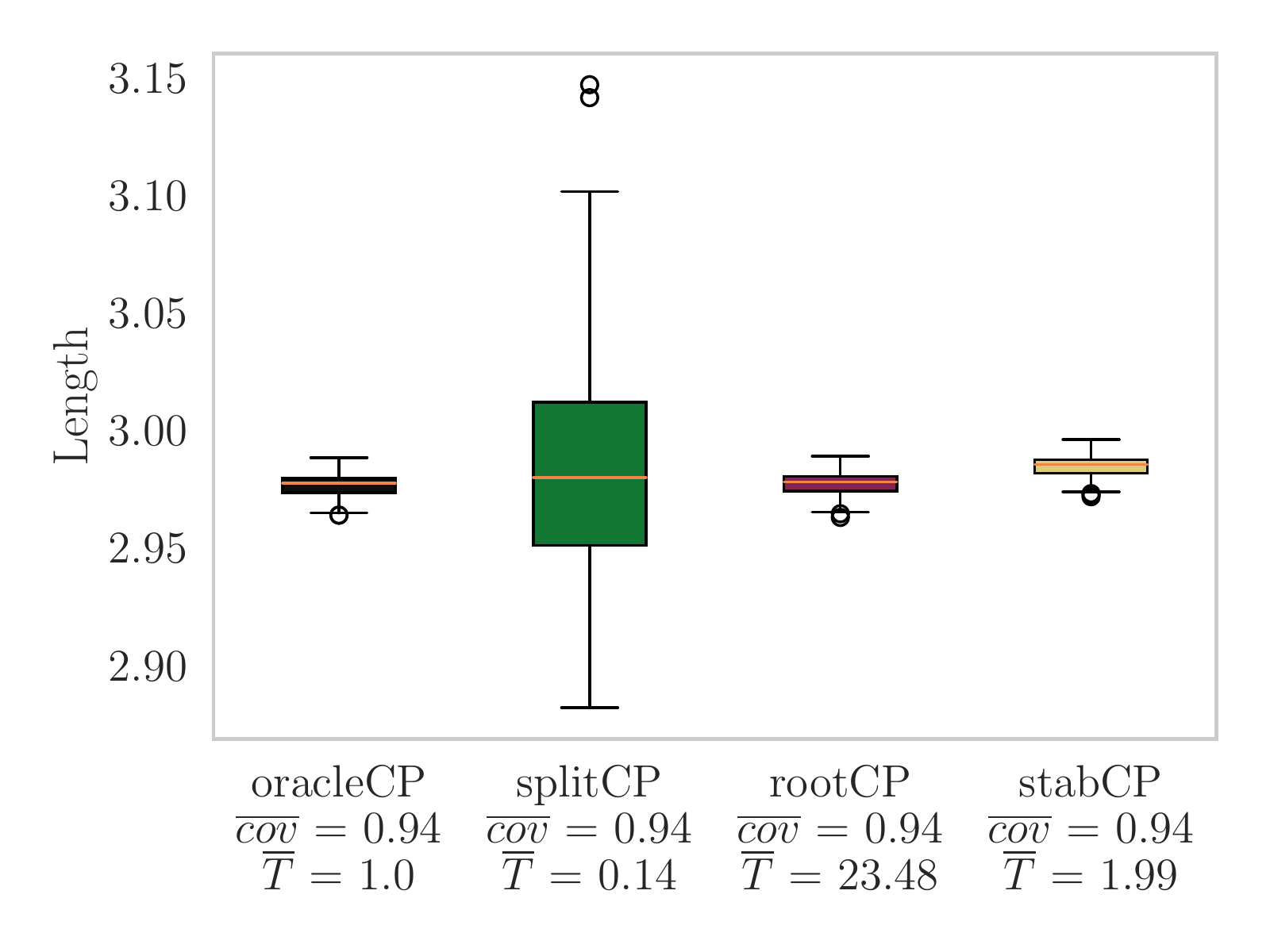}}
\subfigure[\texttt{Friedman1} $(500, 100)$]{\includegraphics[width=0.49\textwidth]{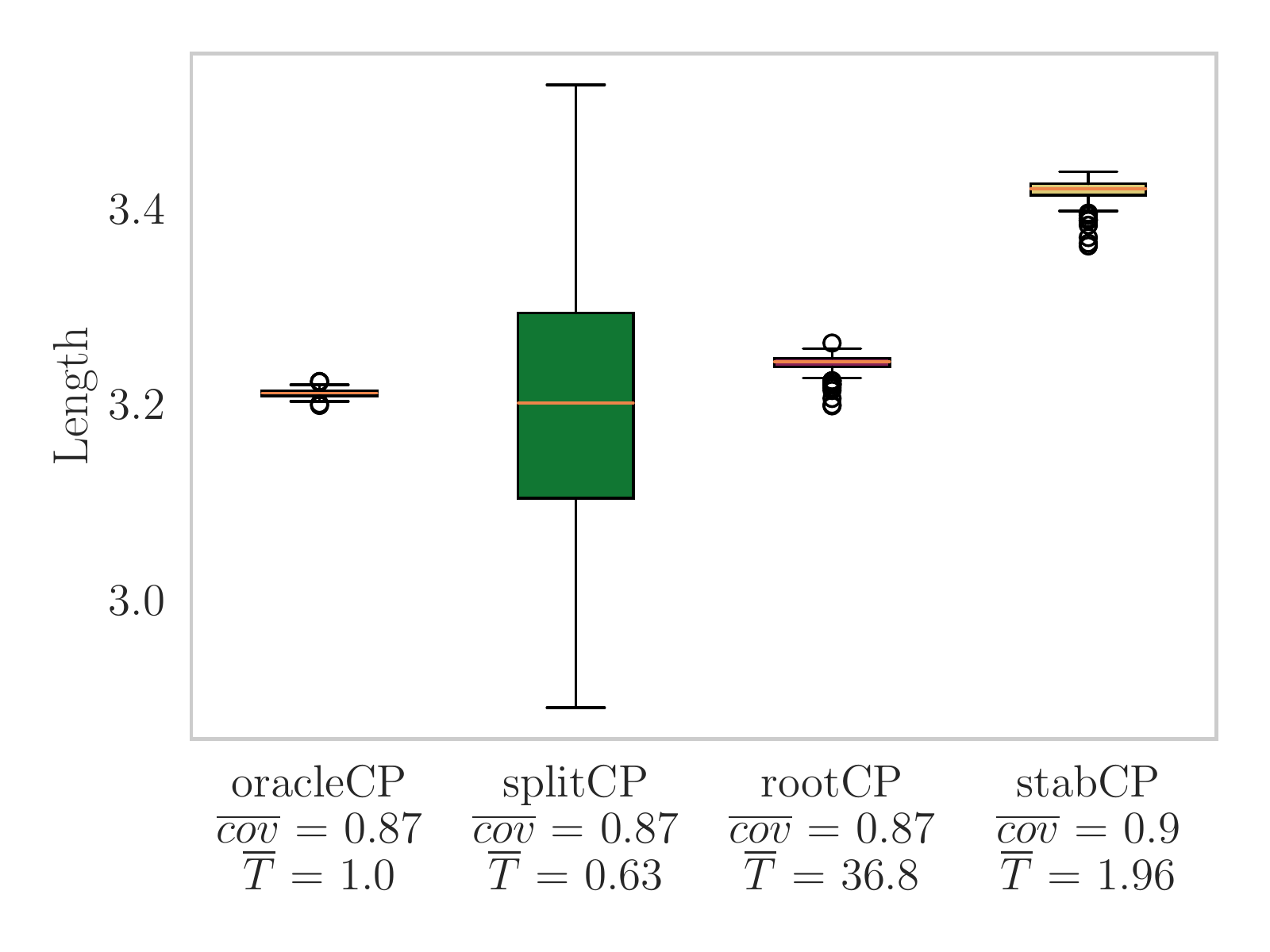}}
\caption{Benchmarking conformal sets for MLP regression models with a ridge regularization on real datasets. The parameter of the model is obtained after $T = n / 10$ iterations of stochastic gradient descent. For \texttt{stabCP}, we use a stability bound estimate $\tau_i = T \norm{x_i} / (n + 1)$. We display the lengths of the confidence sets over $100$ random permutation of the data. We denoted $\overline{cov}$ the average coverage and $\overline{T}$ the average computational time normalized with the average time for computing \texttt{oracleCP} which requires a single full data model fit.\label{fig:MLP_benchmarks}}
\end{figure*}

\begin{figure*}
    \centering
    \subfigure[\texttt{Boston} $(506, 13)$]{\includegraphics[width=0.49\textwidth]{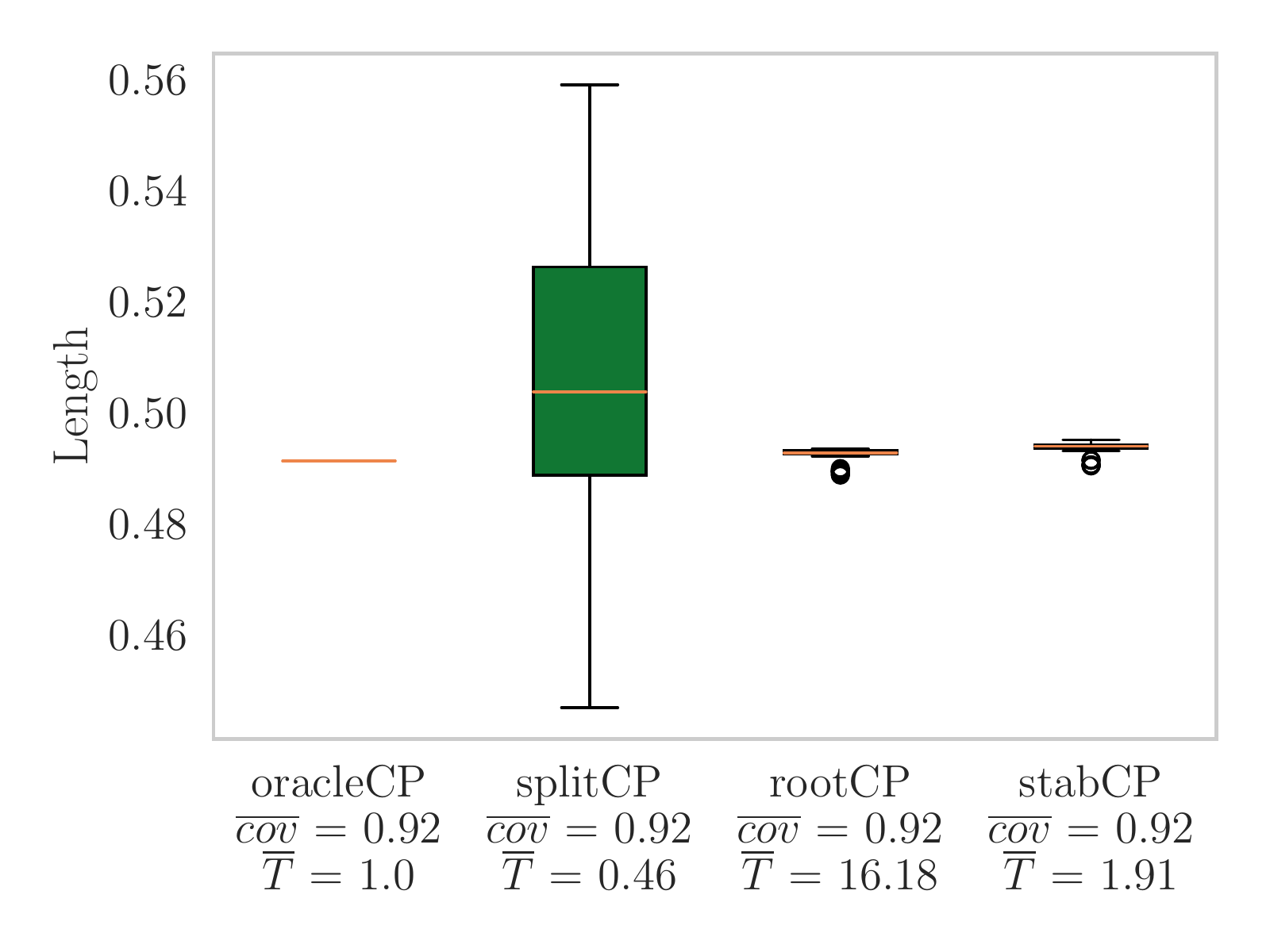}}
    \subfigure[\texttt{Diabetes} $(442, 10)$]{\includegraphics[width=0.49\textwidth]{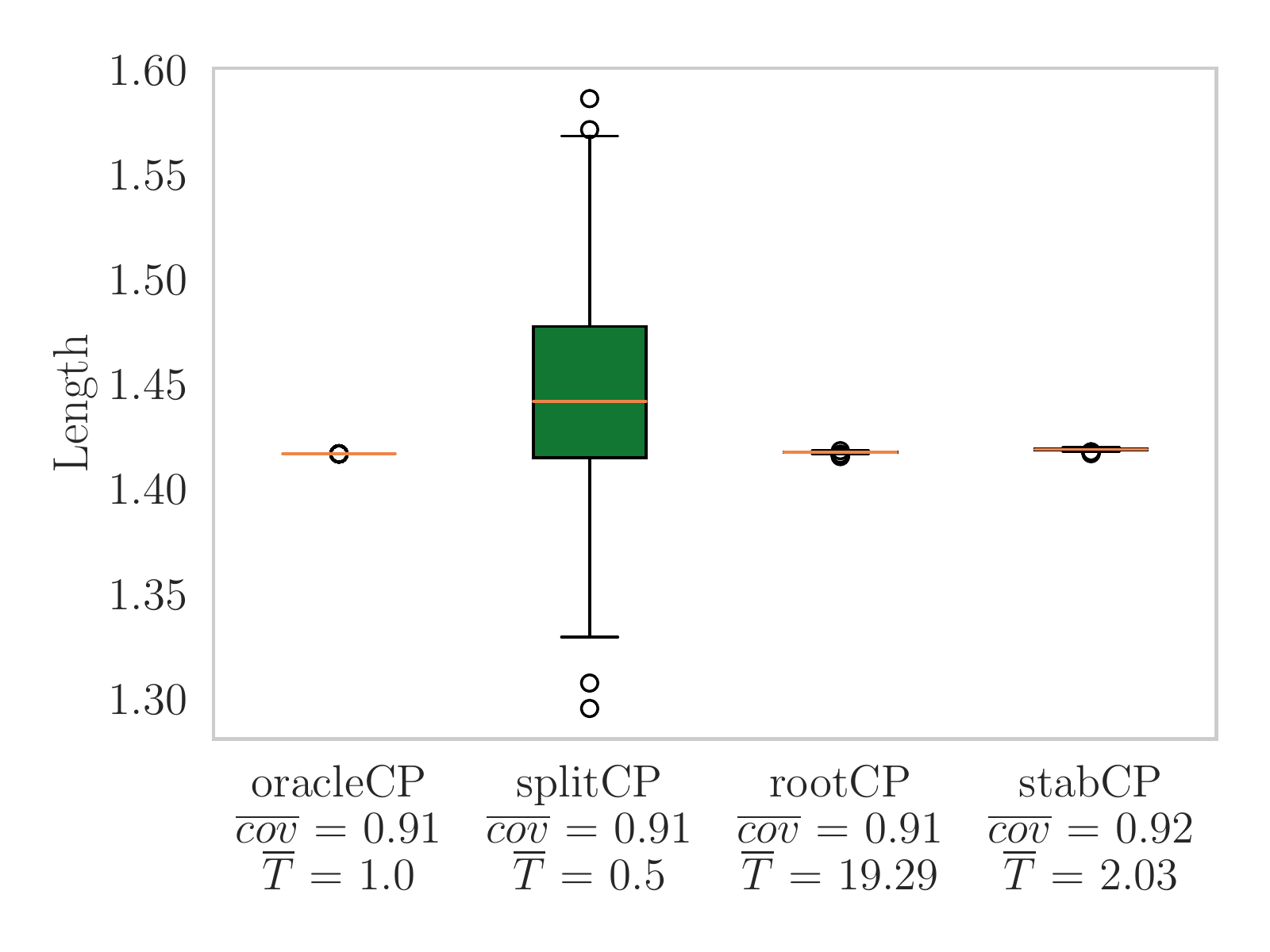}}
    \subfigure[\texttt{Housingcalifornia} $(20640, 8)$]{\includegraphics[width=0.49\textwidth]{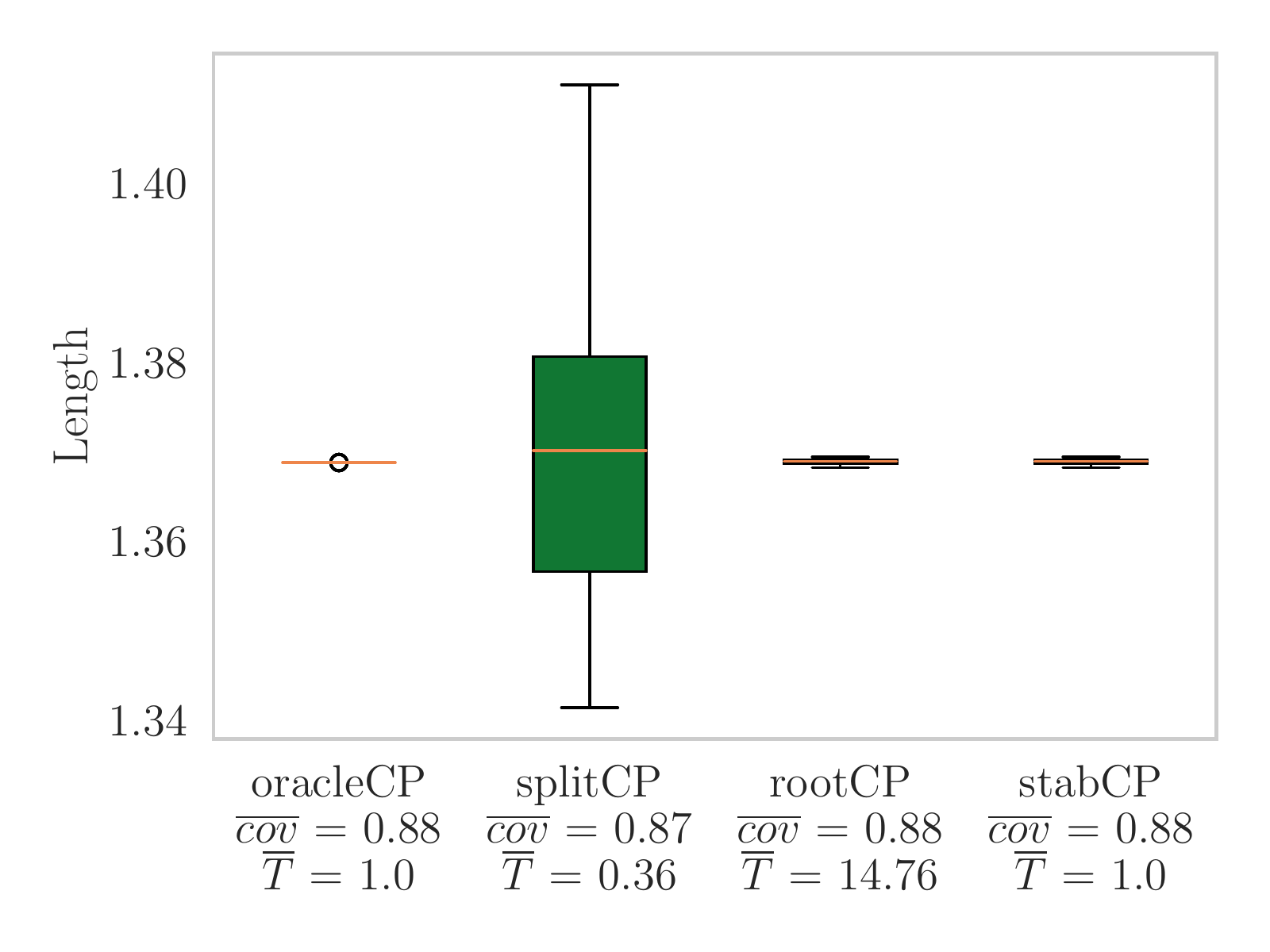}}
    \subfigure[\texttt{Friedman1} $(500, 100)$]{\includegraphics[width=0.49\textwidth]{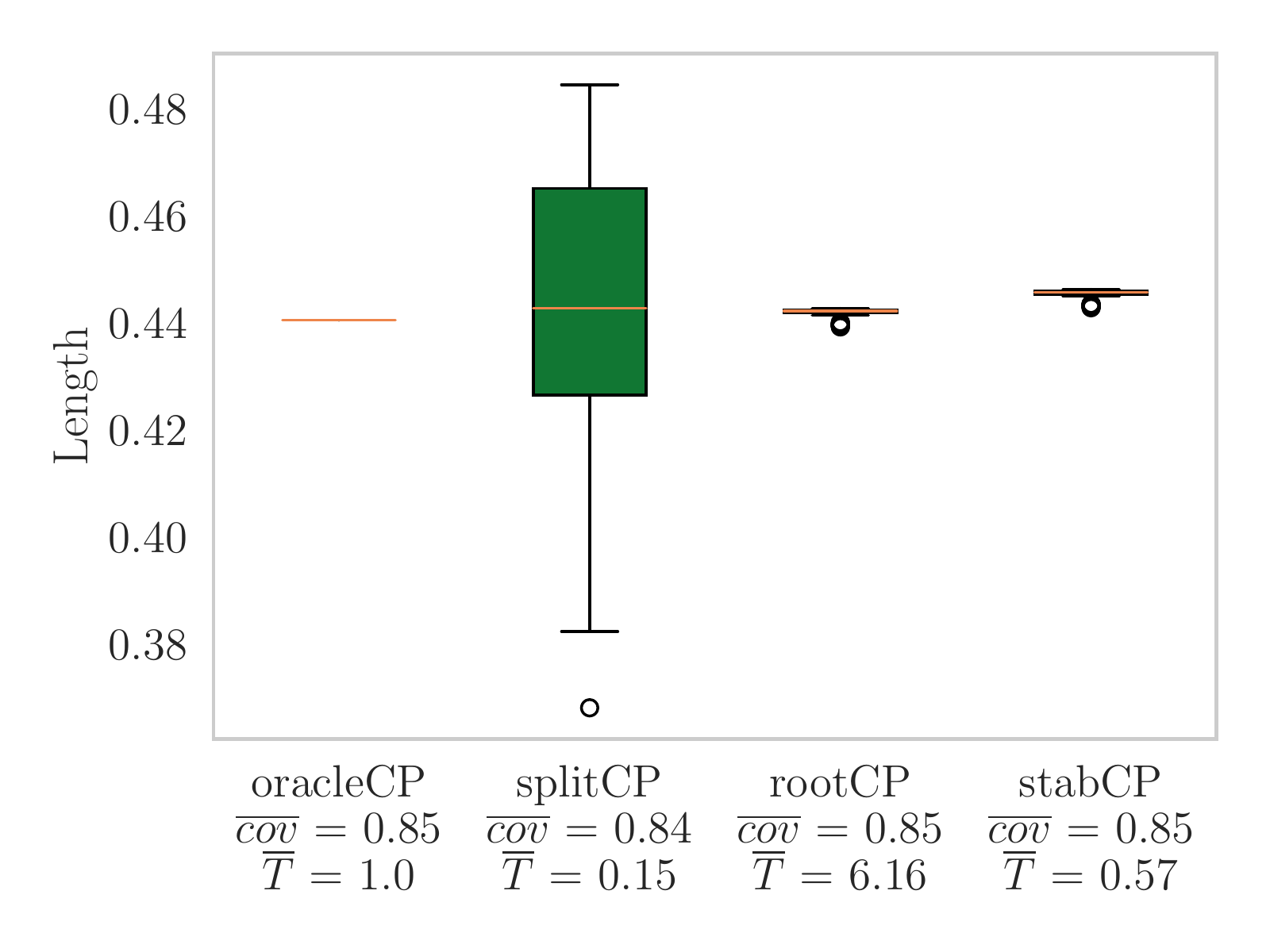}}
    \caption{Benchmarking conformal sets for Gradient Boosting regression models with a ridge regularization on real datasets. For \texttt{stabCP}, we use a stability bound estimate $\tau_i = \norm{x_i} / (n + 1)$. We display the lengths of the confidence sets over $100$ random permutation of the data. We denoted $\overline{cov}$ the average coverage and $\overline{T}$ the average computational time normalized with the average time for computing \texttt{oracleCP} which requires a single full data model fit.\label{fig:1_GradientBoosting_benchmarks}}
\end{figure*}

\begin{figure*}
    \centering
    \subfigure[\texttt{Boston} $(506, 13)$]{\includegraphics[width=0.49\textwidth]{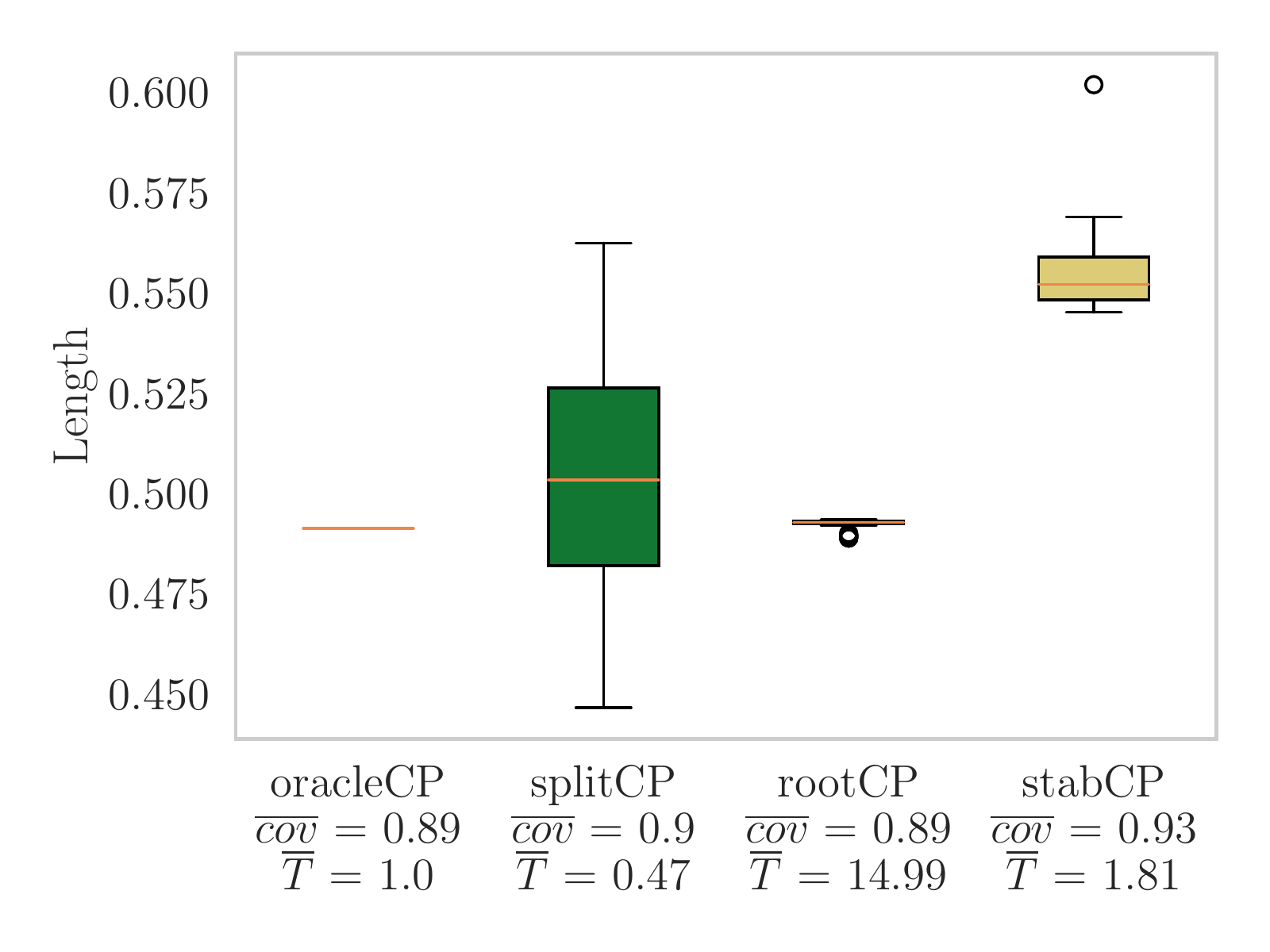}}
    \subfigure[\texttt{Diabetes} $(442, 10)$]{\includegraphics[width=0.49\textwidth]{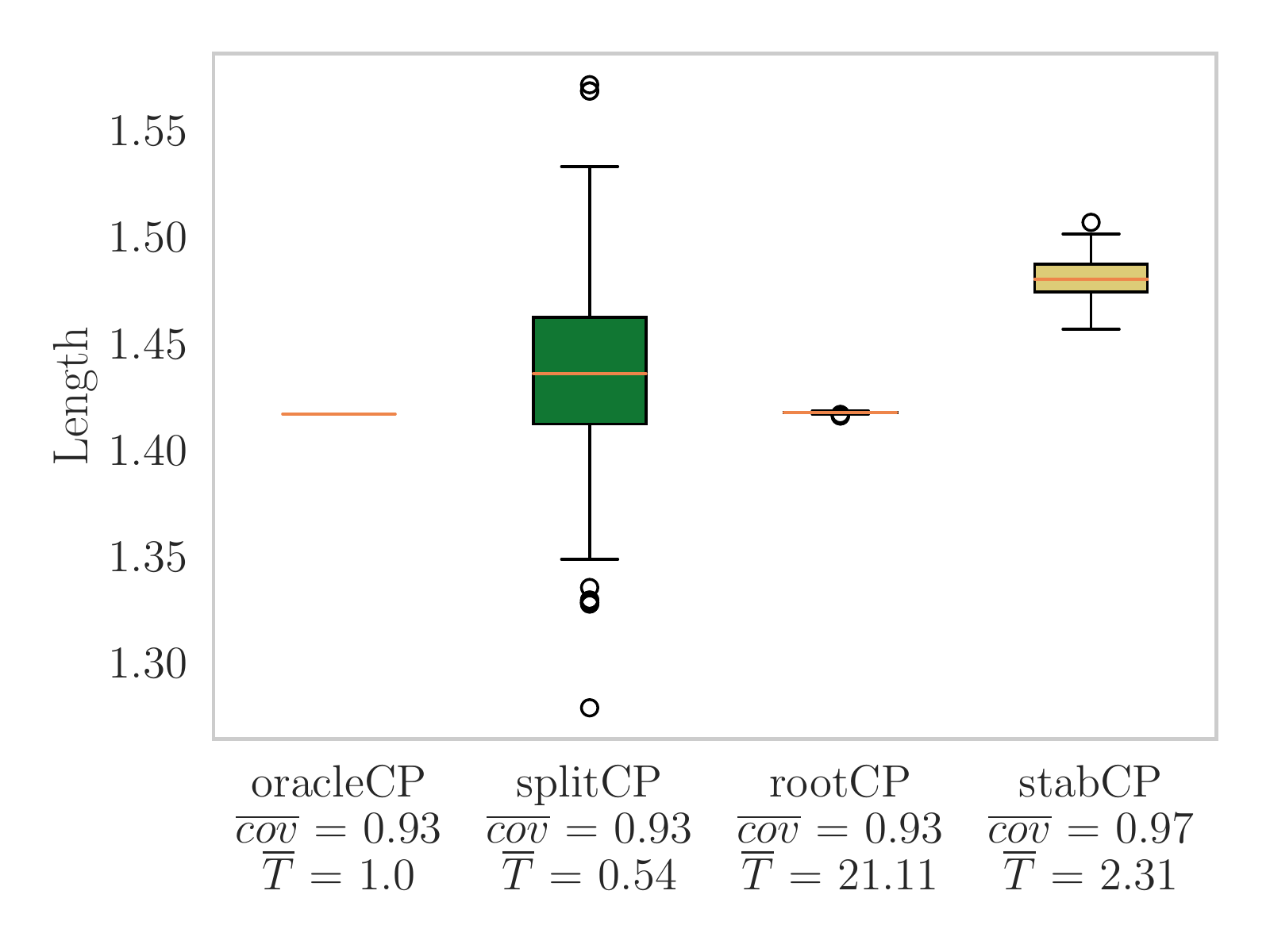}}
    \subfigure[\texttt{Housingcalifornia} $(20640, 8)$]{\includegraphics[width=0.49\textwidth]{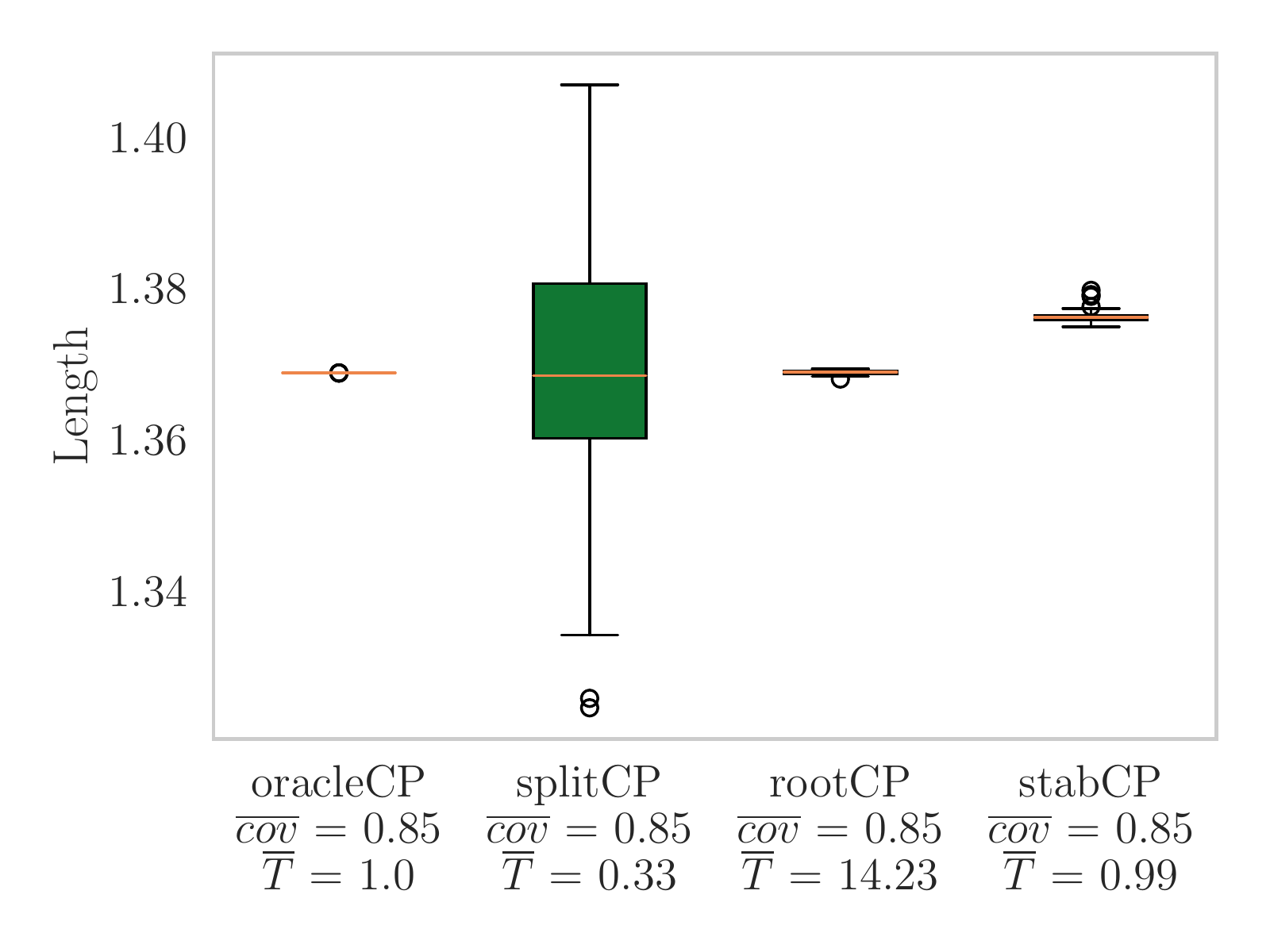}}
    \subfigure[\texttt{Friedman1} $(500, 100)$]{\includegraphics[width=0.49\textwidth]{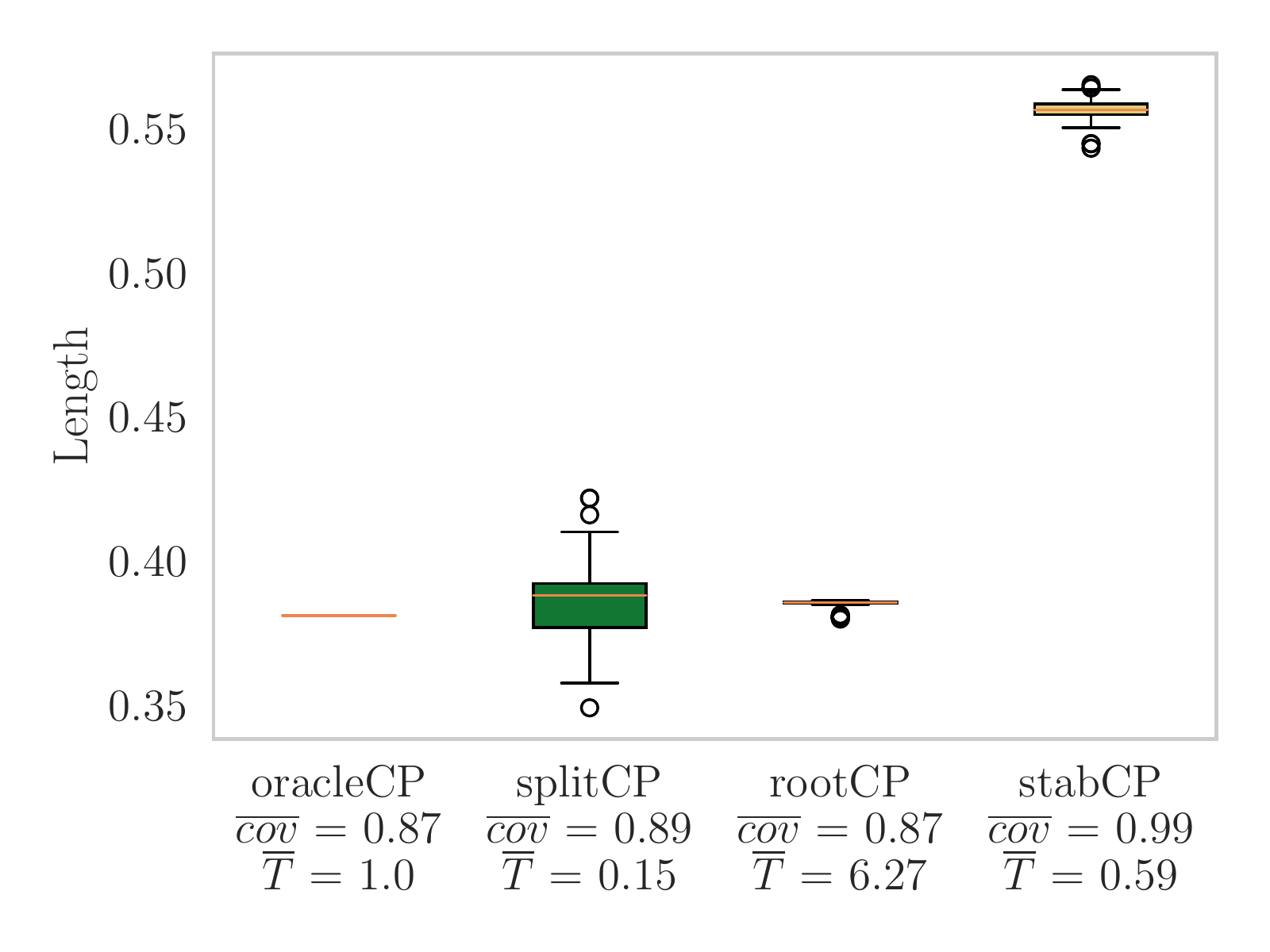}}
    \caption{Benchmarking conformal sets for Gradient Boosting regression models with a ridge regularization on real datasets. For \texttt{stabCP}, we use a rough stability bound estimate $\tau_i \approx \norm{x_i} / 10$. We display the lengths of the confidence sets over $100$ random permutation of the data. We denoted $\overline{cov}$ the average coverage and $\overline{T}$ the average computational time normalized with the average time for computing \texttt{oracleCP} which requires a single full data model fit. This example shows that for unstable models such as decision trees, a coarse estimation of the stability bound can result in an overestimation of the confidence interval, which is a notable limitation of the proposed method.\label{fig:GradientBoosting_benchmarks}}
\end{figure*}

\end{document}